\definecolor{high}{HTML}{FF0000}
\definecolor{low}{HTML}{00FF00}
\definecolor{amber}{HTML}{FFBF00}
\newcommand*{\horzbar}{\rule[.5ex]{2.5ex}{0.5pt}}
\theoremstyle{definition}
\newtheorem{thm}{Theorem}
\newtheorem{cor}{Corollary}
\def\eqref#1{equation~\ref{#1}}
\def\1{\bm{1}}
\DeclareMathAlphabet{\mathsfit}{\encodingdefault}{\sfdefault}{m}{sl}
\SetMathAlphabet{\mathsfit}{bold}{\encodingdefault}{\sfdefault}{bx}{n}
\newcommand{\R}{\mathbb{R}}
\newcommand{\underbracedmatrix}[2]{%
  \smash[b]{\underbrace{
    \begin{matrix}#1\end{matrix}
  }_{#2}}
  \vphantom{\underbrace{\begin{matrix}#1\end{matrix}}_{#2}}
}
\newif\ifvspaces
\newcommand{\myvspace}[1]{\ifvspaces\vspace{#1}\fi}
\title{Understanding Transformers for Time Series: \\Rank Structure, Flow-of-ranks, and Compressibility}
\author{
		Annan Yu,$^{1,}$\thanks{Work done during an internship at AWS.} \hspace{+0.3cm} 
 Danielle C. Maddix,$^{2,}$\thanks{Correspondence to: Danielle C. Maddix $<$\url{dmmaddix@amazon.com}$>$.} \hspace{+0.3cm} Boran Han,$^{2}$ \hspace{+0.3cm} Xiyuan Zhang,$^{2}$ \\
 
  \vspace{-0.1cm}
 
 Abdul Fatir Ansari,$^{2}$ \hspace{+0.3cm} Oleksandr Shchur,$^{2}$ \hspace{+0.3cm} Christos Faloutsos,$^{3}$ \\
 
 \vspace{0.25cm}
 
 Andrew Gordon Wilson,$^{4}$ \hspace{+0.3cm} Michael W. Mahoney,$^{4}$ \hspace{+0.3cm} Yuyang Wang$^{2}$ \\
	
	\vspace{0.6cm}
	
	$^1$ Center for Applied Mathematics, Cornell University, Ithaca, NY 14853, USA \\
	$^2$ Amazon Web Services (3075 Olcott St, Santa Clara, CA 95054, USA) \\
    $^3$ Amazon Selling Partner Services (501 Fairview Ave N, Seattle, WA 98109, USA) \\
    $^4$ Amazon Supply Chain Optimization Technologies (7 West 34th St, New York, NY 10001, USA)
}
\date{\today}
\begin{document}

\maketitle

\begin{abstract}
Transformers are widely used across data modalities, and yet the principles distilled from text models often transfer imperfectly to models trained to other modalities. 
In this paper, we analyze Transformers through the lens of rank structure. 
Our focus is on the time series setting, where the structural properties of the data differ remarkably from those of text or vision. 
We show that time-series embeddings, unlike text or vision, exhibit sharply decaying singular value spectra: small patch sizes and smooth continuous mappings concentrate the data into low-rank subspaces. 
From this, we prove that the associated $Q/K/V$ projections admit accurate low-rank approximations, and that attention layers become compressible in proportion to the decay of the embedding spectrum. 
We introduce the concept of \emph{flow-of-ranks}, a phenomenon by which nonlinear mixing across depth inflates the rank, explaining why early layers are most amenable to compression and why ranks grow with depth. 
Guided by these theoretical and empirical results, we use these insights to compress Chronos, a large time series foundation model, achieving a reduction of $65\%$ in inference time and $81\%$ in memory, without loss of accuracy. 
Our findings provide principled guidance for allocating width, depth, and heads in time series foundation models, and for exploiting their inherent compressibility.
\end{abstract}

\section{Introduction}
Transformers, originally designed for language~\citep{lewis2019bart,achiam2023gpt}, are now widely deployed, e.g., time series~\citep{ansari2024chronos, das2024decoder, shi2024time, wolff2025using}, images~\citep{liu2021swin,dosovitskiy2020image}, molecules~\citep{maziarka2020molecule,leon2024comparing}, and DNA sequences~\citep{ji2021dnabert,le2021transformer,nguyen2024sequence}. 
A common approach to apply Transformers to these other data modalities is to directly transfer architectural parameters (e.g., width, heads, depth) from text-based models, on the assumption that what works for text should generalize. 
However, this assumption is fragile. 
As an example, we show that time series differ fundamentally from language in how signals are tokenized and embedded. 
This leads to the more general question: how well do community insights on pretraining and hyperparameter tuning, based largely on Transformers applied to text data, port to Transformers applied to other data modalities? 
Understanding the answer to this question is particularly important when Transformers are applied in domains where data are less abundant than in the text domain. 

Here, we address this question in the context of time-series data and time-series forecasting.
A priori, the answer to this question is not obvious:
time series data have similarities with text data (e.g., they have obvious sequential properties), but they also have many differences (e.g., it is not clear that they are well-modelable by a discrete set of tokens).
The question, however, is timely:
time series data are ubiquitous in many domains, including scientific~\citep{abhishek2012weather,zhang2024zero,lai2025panda}, industrial~\citep{hong2016probabilistic}, and financial applications~\citep{zhang2001nonlinear}, where they facilitate critical tasks, e.g., forecasting~\citep{hyndman2018forecasting}, imputation~\citep{yoon2018estimating}, and anomaly detection~\citep{blazquez2021review}. In addition, 
there has been recent growing interest in developing so-called time series foundation models (TSFMs)~\citep{brown2020language,radford2021learning,ansari2024chronos}.
These models 
are large pretrained models, designed with the hope of providing a foundation~\citep{bommasani2021opportunities}, to be adaptable to a wide range of domains and time series tasks. 
As with other models that aim to provide such a ``foundation,'' TSFMs are appealing because they reduce the need for task-specific architectures and parameter computation, thereby enabling the transfer to new settings with relatively little effort.

\begin{figure}
    \centering
    \includegraphics[width=0.9\linewidth]{./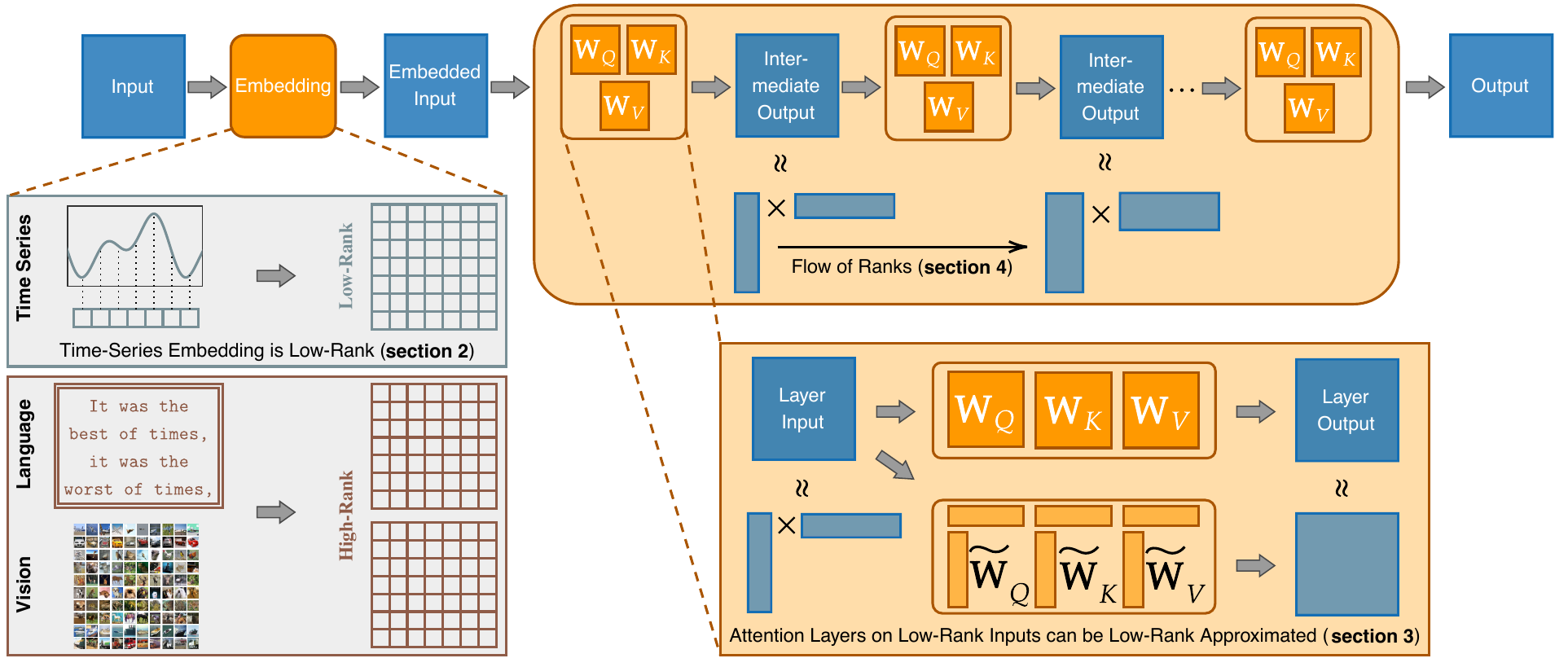}
    \caption{
    Overview of our results. We show that the embedded inputs of Transformers trained with time-series data have much lower ranks than those of other modalities, including Vision Transformers and Transformers trained with language data (see \cref{sec:embedding}); we prove that attention matrices on low-rank inputs are well-approximated by low-rank matrices (see \cref{sec:inputs_to_attention}); and we introduce and demonstrate a concept called flow-of-ranks, describing how attention matrices in earlier layers are more compressible than those in later layers (see \cref{sec:flowofranks}).}
    \label{fig:signature}
\end{figure}

In this paper, we develop a framework for analyzing design decisions in Transformers; see~\Cref{fig:signature} for an overview and~\Cref{app:relatedwork} for related work. Our view is that these decisions are guided by an understanding of the structural properties of the data modality, which are then inherited by properties of the model. 
Our framework enables a detailed linear algebraic analysis of the attention layers within a Transformer, which consist of three linear transformations to form the queries, keys, and values.
Our approach is general, and we apply it to the time series domain.
Lastly, we show that we can use our approach as a practical tool for compressing TSFMs.  We summarize the flow of our paper and our main contributions as follows:
\begin{enumerate}[leftmargin=*, noitemsep,topsep=0pt]
    \item 
    \textbf{Data modality and rank structure.} 
    We compare Transformers trained to different data modalities through the lens of numerical rank, and we demonstrate that time-series data lead to a particularly low-rank structure, at the level of inputs. 
    To the best of our knowledge, our work is the first to directly study which data modality features make Transformer models well-approximated via a truncated singular value decomposition (SVD), and why. 
    We also show how standard time-series embeddings preserve low-rank structure in the hidden space, which differs from large-vocabulary text and other modalities. 
    (See \cref{sec:embedding}.)
    \item 
    \textbf{From low-rank inputs to low-rank attention.} 
    Next, we provide the first general theoretical results that connect low-rank embeddings to low-rank attention matrices, and we make clear how width and the number of heads control the quality of low-rank approximations. 
    While these results apply generally to any low-rank embeddings, we illustrate them in the case of time series.  
    In addition, our results are sharp: we show that high-rank embedded inputs, which appear in data modalities such as text and vision, lead to incompressible attention matrices. 
    (See \cref{sec:inputs_to_attention}.)
    \item 
    \textbf{Flow-of-ranks.} 
    We introduce the ``flow-of-ranks'' concept to describe how the numerical rank changes across layers in a deep Transformer. This extends our earlier analysis from a single attention layer to the setting of a deep Transformer, where nonlinear activations, residual mixing, and normalization gradually increase the rank of a representation.
    We note that ``flow-of-ranks'' explains why early layers are often better approximated by SVD than later ones. (See \cref{sec:flowofranks}.)
    \item 
    \textbf{Compressibility of real-world TSFMs.} 
    Finally, we illustrate one application of our insights: compressing a real-world TSFM.  
    We demonstrate that the same set of hyperparameters (i.e., width, depth, and number of heads) more severely over-parameterizes TSFMs than it does LLMs. This leads us to develop two complementary compression strategies: compressing a pretrained model and pretraining a model that is compressed by design.
    In particular, we show that by compressing a Chronos model, we can reduce the inference time by $65.4\%$ and memory usage by $81.4\%$, at no cost of predictive performance.
    Overall, our results demonstrate and explain why state-of-the-art TSFMs are highly compressible, in practice, compared to (language-trained) LLMs of the same size. 
    (See \cref{sec:experiment}.)
\end{enumerate}
Additional supporting material may be found in the appendices.

\section{Data Modality and Rank Structure of Embedding}
\label{sec:embedding}

In this section, we investigate the structure of time-series embeddings and compare them to embeddings from other data modalities. 
Our goal is to understand, from both a theoretical and empirical perspective, why time-series inputs often look low-rank after embedding.

We consider univariate time series. In particular, let $\mathbf{x} = (x_1, \ldots, x_T) \in \R^{1 \times T}$ be a univariate input of length $T$.
Note that, unlike other data modalities, the input $\mathbf{x}$ is a rank-$1$ matrix.\footnote{In the case of a few-variate time series, $\mathbf{x}$ becomes a few-rank matrix, making the analysis in the paper directly generalizable.}
The first step in a TSFM is to map $\mathbf{x}$ into a high-dimensional sequence via an embedding function $\boldsymbol{\Phi}: \R^{1 \times T} \rightarrow \R^{d \times L}$, where $d$ denotes the hidden dimension of the model and $L$ denotes the new sequence length, possibly different from $T$ due to patching. 
This embedding is typically constructed by applying a trainable function $\boldsymbol{\phi}: \R^k \rightarrow \R^d$ to disjoint patches of size $k$. 
Assuming $L=T/k$ is an integer, this gives:
\[
    \boldsymbol{\Phi}(\mathbf{x}) = (\boldsymbol{\phi}(x_1, \ldots, x_k), \boldsymbol{\phi}(x_{k+1}, \ldots, x_{2k}), \ldots, \boldsymbol{\phi}(x_{(L-1)k+1}, \ldots, x_{Lk} = x_T)).
\]
These patch embedding functions $\boldsymbol{\phi}$ fall into two main categories (see~\Cref{fig:embeddingexplain} in~\Cref{app:embedding}):
\vspace{-0.5\baselineskip}
\begin{itemize}[leftmargin=*]
    \item 
    A \textbf{quantization-based embedding} partitions the input space $\R^k$ into $V$ disjoint regions, $\R^k = \biguplus_{i=1}^V R_i$. 
    Each region $R_i$ is mapped to a unique trainable vector $\mathbf{u}_i \in \R^d$, so that $\boldsymbol{\phi}(\mathbf{x}) = \sum_{i=1}^V \mathbbm{1}_{\{\mathbf{x} \in R_i\}} \mathbf{u}_i$. 
    This approach is used in several TSFMs, e.g., Chronos~\citep{ansari2024chronos}, WaveToken~\citep{masserano2024enhancing}, and TOTEM~\citep{talukder2024totem}.
    \item 
    A \textbf{continuous embedding} uses a parameterized function, typically a neural network $\boldsymbol{\phi}(\cdot; \boldsymbol{\theta})$, to map patches directly. 
    This strategy is also used in many TSFMs, e.g., Chronos-Bolt~\citep{ChronosBolt}, Moirai~\citep{woo2024unified}, TimesFM~\citep{das2024decoder}, and Time-MoE~\citep{shi2024time}.
\end{itemize}
\Cref{tab:embedding_comparison} summarizes the design choices for these prominent TSFMs.

\begin{table}[t]
\centering
\caption{Comparison of embedding strategies and patch sizes across TSFMs.}
\resizebox{1\textwidth}{!}{
\begin{tabular}{lccc|ccc}
\toprule
\textbf{Model} & \textbf{Chronos}  & \textbf{WaveToken}  & \textbf{TOTEM} & \textbf{Time-MOE} & \textbf{Chronos-Bolt} & \textbf{TimesFM} \\
\midrule
\textbf{Strategy} & Quantization & Quantization & Quantization & Continuous & Continuous &  Continuous\\
\textbf{Patch Size}         & $1$          & $1$   & $1$          & $1$   & $16$         & $32$ \\
\bottomrule
\end{tabular}}
\label{tab:embedding_comparison}
\end{table}

In most TSFMs, the patch size is significantly smaller than the hidden dimension (i.e., $k \ll d$), meaning $\boldsymbol{\phi}$ maps from a low-dimensional space to a higher-dimensional one. 
Intuitively, if $\boldsymbol{\phi}$ is well-behaved, it should embed the low-dimensional space $\R^k$ into a corresponding low-dimensional submanifold $\boldsymbol{\phi}(\R^k) \subset \R^d$. 
To formalize this notion of dimensionality, we use singular values. 
Let $\mathbf{U}$ be a linear operator between some Hilbert spaces and $\sigma_1 \geq \cdots \geq \sigma_n \geq 0$ be its singular values, where $1 \leq n \leq \infty$. 
While the algebraic rank of an object $\mathbf{U}$, i.e., the number of its non-zero singular values, is a strict measure, real-world data is noisy; therefore, we use the more practical concept of numerical rank. For a tolerance $\varepsilon > 0$, the $\varepsilon$-rank of $\mathbf{U}$ denotes the number of its singular values that are significant relative to the largest one:
\begin{equation}\label{eq.epsrank}
    \text{rank}_\varepsilon(\mathbf{U}) = |\{j \;|\; \sigma_j(\mathbf{U}) / \sigma_1(\mathbf{U}) > \varepsilon\}|.
\end{equation}
A low numerical rank implies $\mathbf{U}$ is well-approximated by an operator $\tilde{\mathbf{U}}$ with $\text{rank}(\tilde{\mathbf{U}}) = \text{rank}_\varepsilon(\mathbf{U})$. 

Our central hypothesis is that for a large corpus of input patches $\{\mathbf{x}^{(i)}\}_{i=1}^N$, the resulting embedded matrix, via quantization or a continuous embedding, $\mathbf{U} = \begin{bmatrix}
    \boldsymbol{\phi}(\mathbf{x}^{(1)}) & \cdots & \boldsymbol{\phi}(\mathbf{x}^{(N)})
\end{bmatrix}$ has a low numerical rank, which is smaller than the ambient dimension $d$.
To test this hypothesis, we sample thousands of patches from diverse signals (e.g., sinusoids, exponential functions, and white noise), and we compute the singular value decay of their embeddings. 
For contrast, we perform the same analysis on a tabular foundation model (TFM) (Mitra)~\citep{mitra} using $1000$ synthetic tables, a T5 LLM processing text from Dickens' \textit{A Tale of Two Cities}, and a ViT processing $1000$ randomly sampled images from CIFAR-10.

\begin{figure}[h]
\vspace{0.25\baselineskip}
    \centering
    \begin{minipage}{0.3\textwidth}
        \begin{overpic}[width = 1.0\textwidth]{./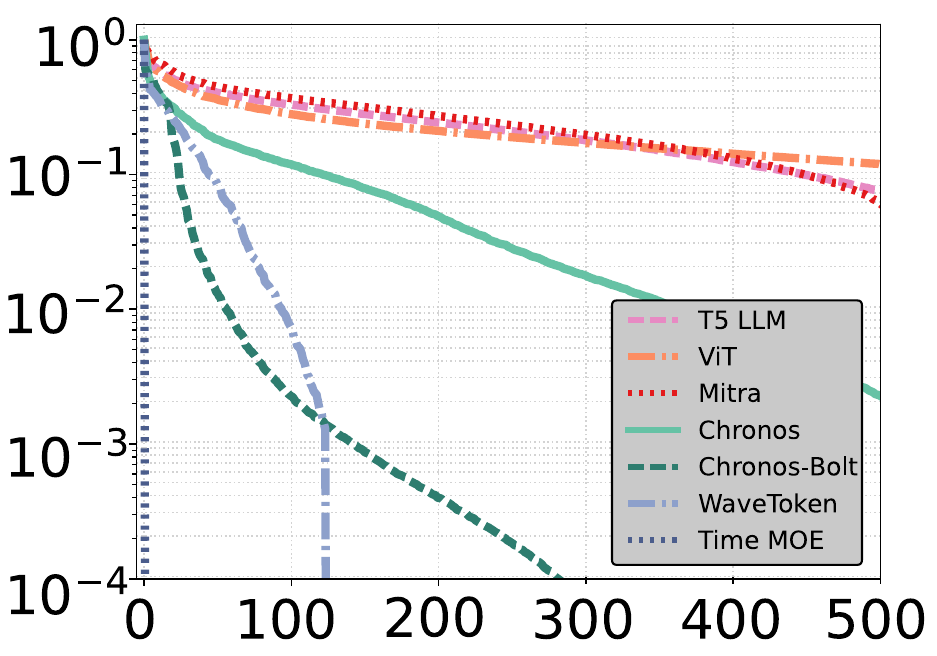}
            \put(-5,17){\rotatebox{90}{\scriptsize $\sigma_j(\mathbf{X}) / \sigma_1(\mathbf{X})$}}
            \put(52,-3){\scriptsize $j$}
            \put(6,73){\small \textbf{SVD of Input Embedding}}
            \put(47,-12){\texttt{(a)}}
        \end{overpic}
    \end{minipage}
    \hfill
    \begin{minipage}{0.64\textwidth}
        \begin{overpic}[width = 1.0\textwidth, trim=15cm 15cm 20cm 13cm, clip]{./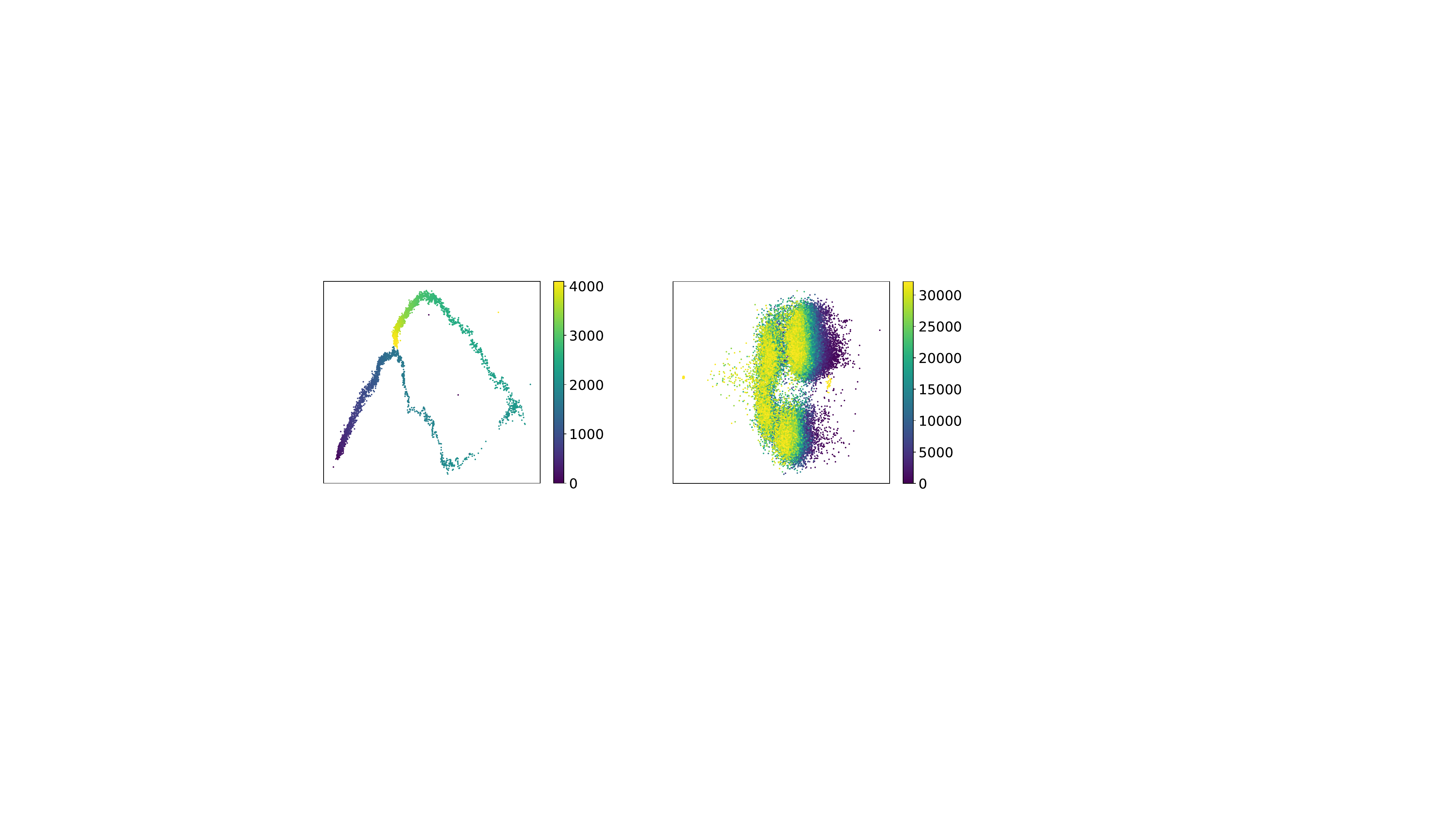}
            \put(-5,5){\rotatebox{90}{\scriptsize $2$nd singular vector}}
            \put(5,-3){\scriptsize $1$st singular vector}
            \put(11.8,-7.18){\texttt{(b)}}
            \put(62,-7.18){\texttt{(c)}}
            \put(95,20){\rotatebox{270}{\scriptsize token ID}}
            \put(-1,32.5){\small \textbf{Embedding of Chronos}}
            \put(50,32.5){\small \textbf{Embedding of T5}}
        \end{overpic}
    \end{minipage}
    \vspace{1\baselineskip}
    \caption{\texttt{(a)}: Singular values of the embedded input matrices from many different TSFMs, a TFM, a ViT, and an LLM. \texttt{(b,c)}: Embedding space of Chronos and a T5 LLM, respectively, visualized by projecting them onto the leading two singular vectors of the embedding matrix.
    }
    \label{fig:singular_embedding}
   
\end{figure}

As shown in~\Cref{fig:singular_embedding}\texttt{(a)}, the singular values of TSFM embeddings decay dramatically faster than those from the tabular and language models, which confirms their significantly lower numerical rank. 
\Cref{fig:singular_embedding}\texttt{(b,c)} visualize the embedding spaces of Chronos (quantized) and T5 (language tokens) by projecting them onto their top two singular vectors. 
The Chronos embedding, mapping a quantized real line, reveals a clear low-dimensional structure, whereas the T5 embedding, likely due to its vocabulary properties, appears far less well-structured.

One may wonder: why is it that Chronos-Bolt ($k=16$) produces a lower-rank embedding than Chronos ($k=1$)? This seeming surprise arises from their different embedding mechanisms. 
A quantization-based model like Chronos initially maps adjacent values to random, unstructured vectors; it must learn the geometry of the real line during training. 
In contrast, a continuous embedding like Chronos-Bolt uses a smooth neural network $\boldsymbol{\phi}(\cdot; \boldsymbol{\theta})$. 
This architectural choice imposes smoothness from the start, ensuring that even a randomly initialized model maps the low-dimensional patch space $\R^k$ to a low-dimensional submanifold in $\R^d$ (see~\Cref{app:tokenizers}).

Although understanding a quantization-based embedding requires looking into the training dynamics, which is beyond the scope of this paper, we theoretically analyze how a continuous embedding preserves low-rank structures.
The following theorem formalizes this intuition. 

\begin{thm}
\label{thm.embedding}
    Given any hidden dimension $d > 1$, let $\boldsymbol{\phi}: [-1,1] \rightarrow \R^d$ be a function that embeds $[-1,1]$ into $\R^d$. Given $L$ arbitrary points $x_1, \ldots, x_L$ sampled from $[-1,1]$, define
    \[
        \boldsymbol{\Xi} = \begin{bmatrix}
            \;\horzbar & \!\!\!\phi_1(x)\!\!\! & \horzbar\; \\
            & \vdots & \\
            \;\horzbar & \!\!\!\phi_d(x)\!\!\! & \horzbar\;
        \end{bmatrix} \in \R^{d \times [-1,1]}, \qquad \boldsymbol{\Psi} = \begin{bmatrix}
            \phi_1(x_1) & \cdots & \phi_1(x_L) \\
            \vdots & \ddots & \vdots \\
            \phi_d(x_1) & \cdots & \phi_d(x_L)
        \end{bmatrix} \in \R^{d \times L}.
    \]
    Let $s_1 \geq \cdots \geq s_d \geq 0$ and $\sigma_1 \geq \cdots \geq \sigma_d \geq 0$ be the singular values of the quasimatrix $\boldsymbol{\Xi}$ and matrix $\boldsymbol{\Psi}$, respectively. Then, the following statement holds:
    \vspace{-0.5\baselineskip}
    \begin{enumerate}[leftmargin=*]
        \item If, for some $V > 0$, $\nu \geq 1$, and every $1 \leq i \leq d$, we have ${\phi}_i$ and its derivative through ${\phi}_i^{(\nu)}$ are absolutely continuous on $[-1,1]$ and ${\phi}_i^{(\nu)}$ is of bounded variation $V$, then we have 
        \[
            s_{j+1} \!\leq\! \frac{4V\sqrt{d}}{\pi \nu (j\!-\!1\!-\!\nu)^\nu} \!=\! \mathcal{O}(j^{-\nu}\!\sqrt{d}), \quad \sigma_{j+1} \!\leq\! \frac{2V\sqrt{dL}}{\pi \nu (j\!-\!1\!-\!\nu)^\nu}  \!=\! \mathcal{O}(j^{-\nu}\!\sqrt{dL}), \quad \nu\!+\!1 \!<\! j \!\leq\! d\!-\!1.
        \]

        \item If, for some $M > 0$ and every $1 \leq i \leq d$, ${\phi}_i$ has an analytic continuation to the Bernstein ellipse of radius $\rho > 1$ (see~\citet{trefethen2019approximation}), whose infinity norm is no greater than $M$, then we have 
        \[
            s_{j+1} \!\leq\! \frac{4M\sqrt{d} \rho^{-j+1}}{\rho - 1} \!=\! \mathcal{O}(\rho^{-j}\sqrt{d}), \quad \sigma_{j+1} \!\leq\! \frac{2M \sqrt{dL} \rho^{-j+1}}{\rho - 1} \!=\! \mathcal{O}(\rho^{-j}\sqrt{dL}), \quad 0 \!\leq\! j \!\leq\! d-1.
        \]
    \end{enumerate}
\end{thm}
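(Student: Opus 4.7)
The plan is to lift classical one-dimensional Chebyshev approximation theory to the (quasi)matrix setting via the Eckart--Young theorem, applied row by row to the components $\phi_i$.

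First, I would reduce the singular value bound to an approximation error. By Eckart--Young (which extends to quasimatrices via the SVD of compact operators between Hilbert spaces, with the row space being $L^2([-1,1])$ for $\boldsymbol{\Xi}$ and $\mathbb{R}^L$ for $\boldsymbol{\Psi}$), the $(j+1)$-th singular value equals the minimum Frobenius/operator norm distance to a rank-$j$ object. Hence it suffices to exhibit an explicit rank-$j$ approximation $\tilde{\boldsymbol{\Xi}}$ (resp.\ $\tilde{\boldsymbol{\Psi}}$) and bound $\|\boldsymbol{\Xi} - \tilde{\boldsymbol{\Xi}}\|_F$ (resp.\ $\|\boldsymbol{\Psi} - \tilde{\boldsymbol{\Psi}}\|_F$).

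Second, I would construct the approximation row-wise via Chebyshev truncation. Replace each row $\phi_i$ by its degree-$(j-1)$ Chebyshev truncation $p_i^{(j-1)}$. The resulting quasimatrix satisfies $\tilde{\boldsymbol{\Xi}}(x) = \sum_{k=0}^{j-1} \mathbf{c}_k T_k(x)$, where $\mathbf{c}_k \in \mathbb{R}^d$ is the vector of $k$-th Chebyshev coefficients across components; this exhibits it as an outer-product sum of $j$ terms, so $\mathrm{rank}(\tilde{\boldsymbol{\Xi}}) \leq j$. Sampling at $x_1, \ldots, x_L$ yields $\tilde{\boldsymbol{\Psi}}$, which inherits the same rank bound.

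Third, I would apply the classical scalar Chebyshev truncation bounds from \citet{trefethen2019approximation}, cited in the theorem. In case~1, the hypothesis (bounded variation of $\phi_i^{(\nu)}$) plugs directly into Trefethen's Theorem~7.2, giving $\|\phi_i - p_i^{(j-1)}\|_\infty \leq \frac{2V}{\pi \nu (j-1-\nu)^\nu}$ for $j > \nu + 1$. In case~2, Bernstein-ellipse analyticity plugs into Trefethen's Theorem~8.2, giving $\|\phi_i - p_i^{(j-1)}\|_\infty \leq \frac{2M \rho^{-j+1}}{\rho - 1}$. Both bounds are uniform in $i$ and in $x$, so they lift directly.

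Finally, I would aggregate over rows. For the quasimatrix,
\[
\|\boldsymbol{\Xi} - \tilde{\boldsymbol{\Xi}}\|_F^2 = \sum_{i=1}^d \int_{-1}^1 \bigl(\phi_i(x) - p_i^{(j-1)}(x)\bigr)^2 dx \leq 2d \cdot \max_i \|\phi_i - p_i^{(j-1)}\|_\infty^2,
\]
and since $s_{j+1} \leq \|\boldsymbol{\Xi} - \tilde{\boldsymbol{\Xi}}\|_F$, this produces the $\mathcal{O}(j^{-\nu}\sqrt{d})$ and $\mathcal{O}(\rho^{-j}\sqrt{d})$ rates. For the matrix, the same step replaces the integral by the sum over $L$ sample points, bounded by $dL \cdot \max_i \|\phi_i - p_i^{(j-1)}\|_\infty^2$, yielding the $\sqrt{dL}$ scaling in both cases. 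The main obstacle is not conceptual but bookkeeping: tracking constants (the stated $4$ vs.\ $2\sqrt{2}$ from the $L^2$ vs.\ $L^\infty$ bound on $[-1,1]$), handling off-by-one indexing in $j-1-\nu$, and justifying the extension of Eckart--Young to the quasimatrix setting; otherwise the two regimes decouple cleanly and reduce to invoking the one-dimensional theory row by row.
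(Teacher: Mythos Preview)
Your proposal is correct and follows essentially the same approach as the paper: construct a rank-$j$ (quasi)matrix by replacing each row with a degree-$(j-1)$ polynomial approximant, invoke Eckart--Young (citing \citet{townsend2015continuous} for the quasimatrix case) to bound $s_{j+1}$ and $\sigma_{j+1}$ by the Frobenius error, then aggregate the row-wise $L^\infty$ bounds from Trefethen's Theorems~7.2 and~8.2. The only cosmetic difference is that the paper uses the \emph{best} degree-$(j-1)$ polynomial approximation rather than the Chebyshev truncation, but since both are then bounded via the same Trefethen theorems this changes nothing; your observation about the $4$ versus $2\sqrt{2}$ constant is also apt---the paper simply uses the slightly looser $\|\cdot\|_{L^2([-1,1])} \leq 2\|\cdot\|_{L^\infty}$ in place of $\sqrt{2}$.
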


We refer interested readers to~\citet{townsend2015continuous} for the precise definition of a quasimatrix (informally, it is a ``matrix'' in which one of the dimensions is discrete as usual but the other is continuous).
\Cref{thm.embedding} guarantees that for univariate patches ($k=1$), a smooth embedding function yields singular values with guaranteed decay rates: polynomial decay of order $\nu$ for functions with $\nu$ continuous derivatives, and exponential decay for analytic functions.

See~\Cref{app:embedding} for a proof of \Cref{thm.embedding} using classic univariate polynomial approximation techniques. Using this result, we can directly explain the low-rank structure observed in models like Time-MoE (see~\Cref{fig:singular_embedding} and~\Cref{cor.MOEembedding}).
While multivariate polynomial approximation results enable us to extend~\Cref{thm.embedding}, the size of the polynomial basis up to a fixed degree increases exponentially with $k$, which makes it less practically relevant for a larger patch size $k$.
Instead, for Chronos-Bolt, where $k = 16$, we seek an ad hoc result that works when the embedding $\boldsymbol{\phi}(\cdot; \boldsymbol{\theta})$ is an MLP (see~\Cref{app:boltembedding} for a proof of~\Cref{thm.boltembedding} below).
\begin{thm}\label{thm.boltembedding}
    Consider the input embedding defined by a two-layer residual MLP:
    \[
        \boldsymbol{\Phi}(\mathbf{X}) = \mathbf{W}_3 \mathbf{X} + \mathbf{W}_2 \; \omega(\mathbf{W}_1 \mathbf{X}), \quad \mathbf{X} \!\in\! \R^{k \times L}, \quad \mathbf{W}_1 \!\in\! \R^{d_f \times k}, \quad \mathbf{W}_2 \!\in\! \R^{d \times d_f}, \quad \mathbf{W}_3 \!\in\! \R^{d \times k},
    \]
    where $k$ denotes the patch size, $L$ denotes the number of patches, $d_f$ denotes the hidden-layer dimension in an MLP, $d > k$ denotes the hidden dimension of the Transformer, and $\omega$ denotes any activation function satisfying that $|\omega(x)| \leq |x|$ for every $x \in \R$. Then, for any $\varepsilon > 0$, we have
    \[
        \left| \left\{j\, \middle|\, \sigma_j(\boldsymbol{\Phi}(\mathbf{X})) > \varepsilon \|\mathbf{W}_2\|_2 \|\mathbf{W}_1 \mathbf{X}\|_2 \right\}\right| \leq (1+\varepsilon^{-2})k.
    \]
\end{thm}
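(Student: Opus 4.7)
The plan is to decompose $\boldsymbol{\Phi}(\mathbf{X})$ into its linear residual branch $\mathbf{W}_3 \mathbf{X}$, whose rank is trivially at most $k$, and its nonlinear branch $\mathbf{A} := \mathbf{W}_2\,\omega(\mathbf{W}_1\mathbf{X})$, which I will bound in Frobenius norm and then convert into a count of large singular values. The two estimates are combined via a Weyl-type inequality for singular values of a sum.

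First, I would observe that since $\mathbf{W}_1 \in \R^{d_f \times k}$, the matrix $\mathbf{W}_1\mathbf{X}$ has rank at most $k$, hence its nonzero singular values number at most $k$ and
\[
    \|\mathbf{W}_1\mathbf{X}\|_F^{2} \;=\; \sum_{j=1}^{k} \sigma_j(\mathbf{W}_1\mathbf{X})^{2} \;\leq\; k\,\|\mathbf{W}_1\mathbf{X}\|_2^{2}.
\]
Applying the entrywise contraction $|\omega(x)| \leq |x|$ gives $\|\omega(\mathbf{W}_1\mathbf{X})\|_F \leq \|\mathbf{W}_1\mathbf{X}\|_F$, and submultiplicativity of spectral times Frobenius norms yields
\[
    \|\mathbf{A}\|_F^{2} \;\leq\; \|\mathbf{W}_2\|_2^{2}\,\|\omega(\mathbf{W}_1\mathbf{X})\|_F^{2} \;\leq\; k\,\|\mathbf{W}_2\|_2^{2}\,\|\mathbf{W}_1\mathbf{X}\|_2^{2}.
\]
Let $\tau := \varepsilon\,\|\mathbf{W}_2\|_2\,\|\mathbf{W}_1\mathbf{X}\|_2$. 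Since $\|\mathbf{A}\|_F^2 = \sum_j \sigma_j(\mathbf{A})^2$, a standard Markov-style counting argument gives $|\{j : \sigma_j(\mathbf{A}) > \tau\}| \leq \|\mathbf{A}\|_F^2 / \tau^2 \leq k/\varepsilon^{2}$.

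To combine the two pieces, I would invoke Weyl's inequality for singular values, $\sigma_{i+j-1}(\mathbf{B}+\mathbf{A}) \leq \sigma_i(\mathbf{B}) + \sigma_j(\mathbf{A})$. Taking $\mathbf{B} = \mathbf{W}_3\mathbf{X}$ and $i = k+1$ forces $\sigma_i(\mathbf{B}) = 0$ (as $\operatorname{rank}(\mathbf{B}) \leq k$), so $\sigma_{k+j}(\boldsymbol{\Phi}(\mathbf{X})) \leq \sigma_j(\mathbf{A})$. Therefore the count of singular values of $\boldsymbol{\Phi}(\mathbf{X})$ exceeding $\tau$ is bounded by $k$ (for the first $k$ indices, which we concede without control) plus the count of singular values of $\mathbf{A}$ exceeding $\tau$, yielding the claim $k + k/\varepsilon^2 = (1+\varepsilon^{-2})k$.

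The main obstacle is not so much any single estimate as it is the careful coordination of three facts that make the bound tight: (i) $\mathbf{W}_1\mathbf{X}$ is \emph{rank-bounded} because it passes through the low-dimensional patch space $\R^k$, which is what converts an $L_\infty$-style bound on $\omega$ into a $k$-factor in Frobenius norm; (ii) $\omega$ acts entrywise, which makes the Frobenius contraction valid regardless of nonlinearity; and (iii) Weyl's inequality cleanly separates the residual branch from the MLP branch so that the two counts add. Once these three ingredients are assembled, the arithmetic is immediate.
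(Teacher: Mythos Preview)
Your proposal is correct and follows essentially the same approach as the paper's proof: bound the rank of $\mathbf{W}_1\mathbf{X}$ by $k$ to get $\|\mathbf{W}_1\mathbf{X}\|_F^2 \le k\|\mathbf{W}_1\mathbf{X}\|_2^2$, use the entrywise contraction $|\omega(x)|\le |x|$ and $\|\mathbf{W}_2\,\cdot\,\|_F \le \|\mathbf{W}_2\|_2\|\cdot\|_F$ to control $\|\mathbf{A}\|_F^2$, count singular values of $\mathbf{A}$ above $\tau$ via $\|\mathbf{A}\|_F^2/\tau^2$, and then apply Weyl's inequality $\sigma_{k+j}(\boldsymbol{\Phi}(\mathbf{X}))\le \sigma_j(\mathbf{A})$ to absorb the rank-$k$ residual branch $\mathbf{W}_3\mathbf{X}$. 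The paper's argument is identical in structure and in each of these steps.
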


\Cref{thm.boltembedding} states that the numerical rank of the continuous embedding is bounded by a quantity dependent on the patch size $k$, not the much larger ambient dimension $d$. 
The term $\|\mathbf{W}_2\|_2 \|\mathbf{W}_1 \mathbf{X}\|_2$ reflects the natural scaling of $\sigma_1(\boldsymbol{\Phi}(\mathbf{X}))$ that appears in the definition of $\text{rank}_\varepsilon$.
In practice, we have $k \ll d$, meaning that MLP embeds an input patch in $\R^k$ into a low-dimensional subspace in $\R^d$. 

To illustrate this theorem, we pretrain $6$ Chronos-Bolt models of different patch sizes, and we compute the singular values of their embeddings.
\Cref{fig:patchsize}\texttt{(a)} shows that the numerical rank of the embedding increases with the patch size $k$. 
We also perform an in-depth analysis, where for each pair of embedded inputs $\boldsymbol{\phi}(\mathbf{x}^{(i)})$ and $\boldsymbol{\phi}(\mathbf{x}^{(j)}) \in \R^d$, we compute the angle between them as follows:
\begin{equation}\label{eq.angles}
    \theta(\boldsymbol{\phi}(\mathbf{x}^{(i)}), \boldsymbol{\phi}(\mathbf{x}^{(j)})) = |\boldsymbol{\phi}(\mathbf{x}^{(i)})^\top \boldsymbol{\phi}(\mathbf{x}^{(j)})| / (\|\boldsymbol{\phi}(\mathbf{x}^{(i)})\|_2 \|\boldsymbol{\phi}(\mathbf{x}^{(j)})\|_2) \in [0, \pi/2].
\end{equation}
The larger the $\theta$, the more linearly independent the two vectors are. We illustrate this in~\Cref{fig:patchsize}\texttt{(b)}, where brighter the heatmaps correspond to higher rank matrices. 
We see that the embedded input, $\boldsymbol{\Phi}(\mathbf{X})$ from Chronos-Bolt, which is a subset of $\R^{d} = \R^{768}$, spans a subspace of significantly smaller dimension, i.e., the image of $\R^k = \R^{16}$ under $\boldsymbol{\phi}$.

\begin{figure}[h]
    \centering
    \begin{minipage}{0.43\textwidth}
        \begin{overpic}[width=1\textwidth]{./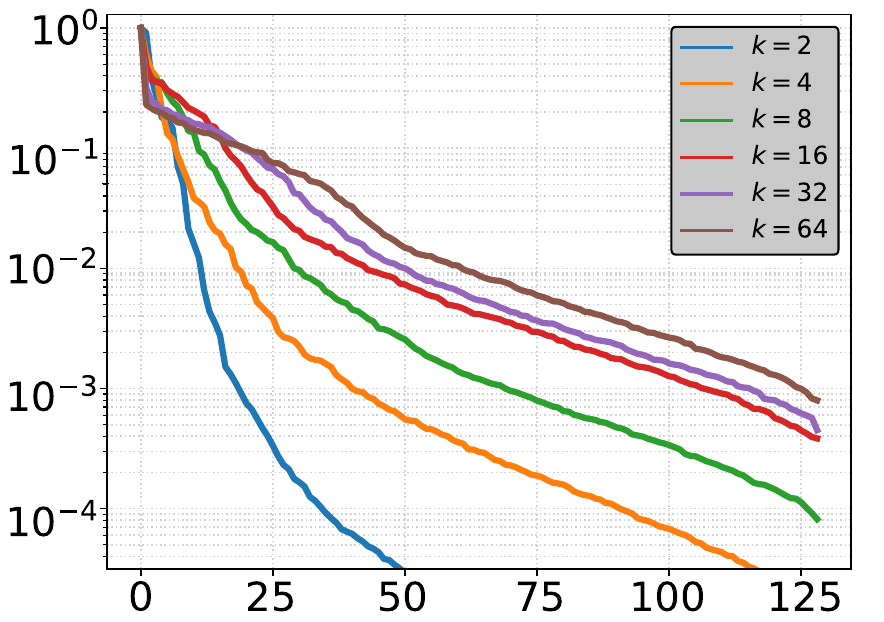}
            \put(-4,17){\rotatebox{90}{\scriptsize $\sigma_j(\boldsymbol{\Phi}(\mathbf{X})) / \sigma_1(\boldsymbol{\Phi}(\mathbf{X}))$}}
            \put(53,-4){\scriptsize $j$}
            \put(51,-14){\texttt{(a)}}
            \put(8,75){\small \textbf{SVD of Chronos-Bolt's Embedding}}
        \end{overpic}
    \end{minipage}
    \hfill
    \begin{minipage}{0.53\textwidth}
        \begin{minipage}{0.47\textwidth}
            \begin{overpic}[width=0.83\textwidth]{./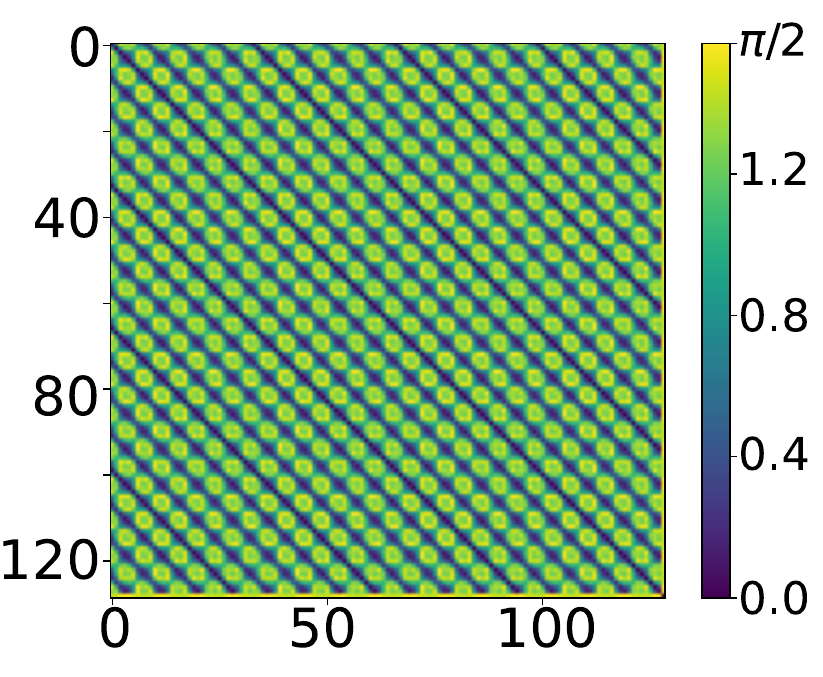}
                \put(-2,81.7){\scriptsize \textbf{Embedding of Sine Waves}}
                \put(-5,35.5){\rotatebox{90}{\scriptsize $j$}}
                \put(101,75){\rotatebox{270}{\tiny $\theta(\!\boldsymbol{\phi}(\mathbf{x}^{\!(i)\!}), \boldsymbol{\phi}(\mathbf{x}^{\!(j)\!})\!)$}}
            \end{overpic}
        \end{minipage}
        \hfill
        \begin{minipage}{0.47\textwidth}
            \begin{overpic}[width=0.85\textwidth]{./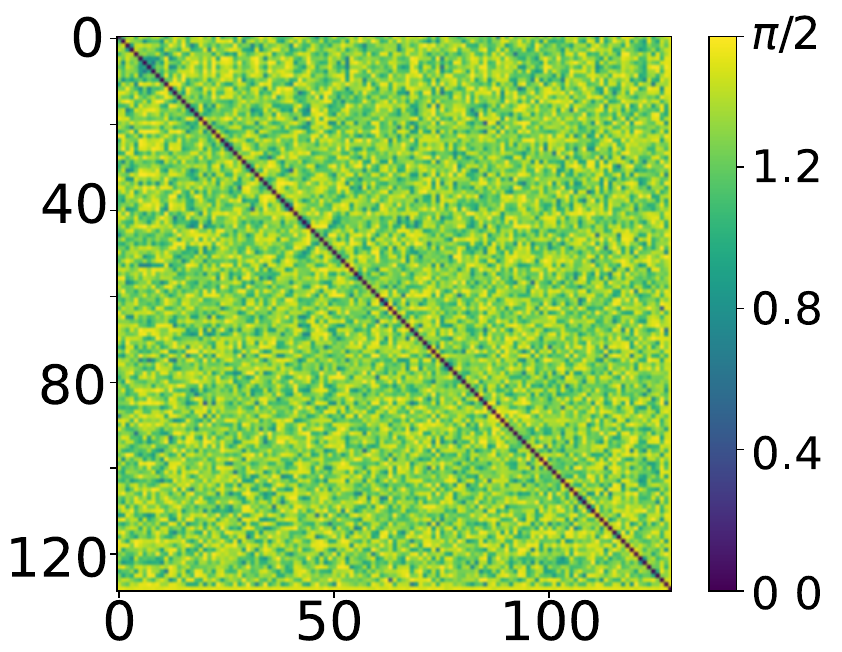}
                \put(-4,78){\scriptsize \textbf{Embedding of White Noises}}
                \put(101,72){\rotatebox{270}{\tiny $\theta(\!\boldsymbol{\phi}(\mathbf{x}^{\!(i)\!}), \boldsymbol{\phi}(\mathbf{x}^{\!(j)\!})\!)$}}
            \end{overpic}
        \end{minipage}

        \vspace{0.5\baselineskip}

        \begin{minipage}{0.47\textwidth}
            \begin{overpic}[width=0.85\textwidth]{./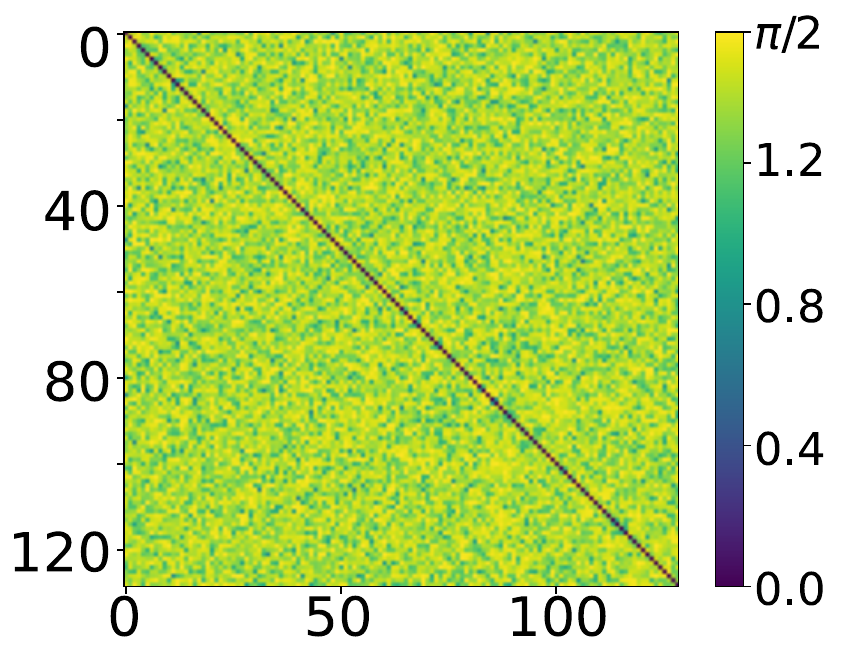}
                \put(5,78){\scriptsize \textbf{Random Vectors in $\boldsymbol{\R}^{\boldsymbol{1}\boldsymbol{6}}$}}
                \put(45,-5){\scriptsize $i$}
                \put(-5,34){\rotatebox{90}{\scriptsize $j$}}
                \put(99,63){\rotatebox{270}{\tiny $\theta(\mathbf{v}^{(i)}, \mathbf{v}^{(j)})$}}
                \put(100,-17.5){\texttt{(b)}}
            \end{overpic}
        \end{minipage}
        \hfill
        \begin{minipage}{0.47\textwidth}
            \begin{overpic}[width=0.85\textwidth]{./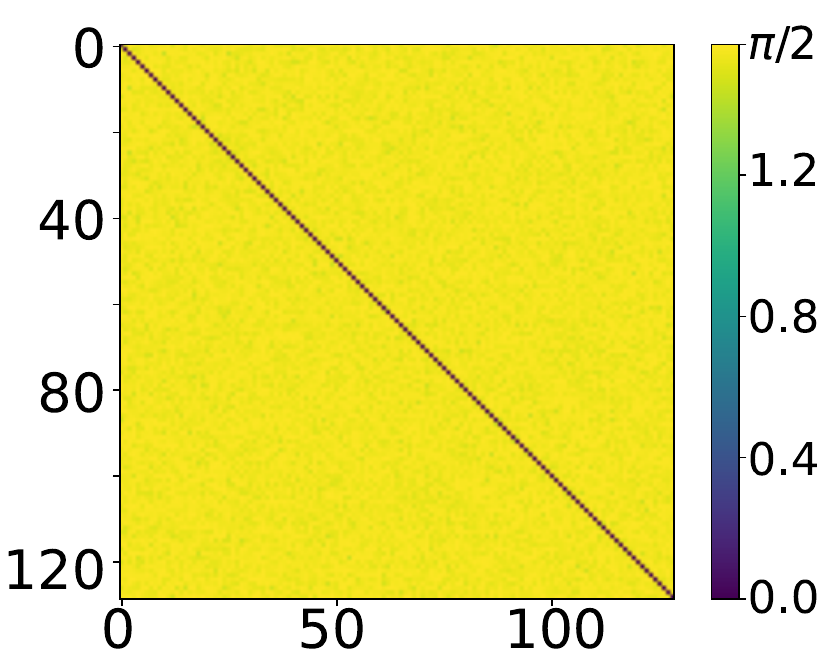}
                \put(5,78){\scriptsize \textbf{Random Vectors in $\boldsymbol{\R}^{\boldsymbol{7}\boldsymbol{6}\boldsymbol{8}}$}}
                \put(45,-5){\scriptsize $i$}
                \put(100.5,63){\rotatebox{270}{\tiny $\theta(\mathbf{v}^{(i)}, \mathbf{v}^{(j)})$}}
            \end{overpic}
        \end{minipage}
    \end{minipage}
    \vspace{1\baselineskip}
    \caption{\texttt{(a)}: Singular values of the embedded input matrices from Chronos-Bolt models pretrained with different patch sizes $k$. 
    \texttt{(b)}: Angles between Chronos-Bolt's embedded vectors in $\R^d = \R^{768}$ defined in~\cref{eq.angles}, where the patches $\mathbf{x}^{(i)}$ are from a sinusoidal wave and Gaussian white noises, respectively. We also plot the angles between i.i.d.\ random Gaussian vectors in $\R^k = \R^{16}$ and $\R^d = \R^{768}$ for comparison.}
    \label{fig:patchsize}
    \vspace{-\baselineskip}
\end{figure}

\section{From Low-rank Inputs to Low-rank Attention Matrices}
\label{sec:inputs_to_attention}

Let $\mathbf{U} \in \R^{d \times L}$ be an input embedded in the hidden space. 
Recall that in~\cref{sec:embedding} we showed that for TSFMs, $\mathbf{U}$ often has a low numerical rank. 
This immediately implies $\mathbf{U}$ can be expressed in a low-rank format: $\mathbf{U} \approx \mathbf{U}_1 \mathbf{U}_2$, where $\mathbf{U}_1 \in \R^{d \times \tilde{d}}$ and $\mathbf{U}_2 \in \R^{\tilde{d} \times L}$ for some $\tilde{d} \ll d$. This yields faster matrix-matrix products with $\mathbf{U}$, but a limitation is that this representation requires an expensive rank-revealing matrix factorization~\citep{damle2024reveal}, which adds overhead, particularly during backpropagation. If so, how do we leverage the low-rank structure of TSFM embeddings?

From basic linear algebra, it is known that for a linear operator $\mathbf{T}: \R^d \rightarrow \R^d$ to act on an $r$-dimensional subspace, one only needs to specify the operator in $r$ directions, in which case the operator can then be well-approximated by a low-rank matrix $\tilde{\mathbf{T}}$ whose rank scales with $r$ instead of the full width $d$~\citep{damle2024reveal, ipsen2024stable}. 
An attention layer, defined by
\[
    \text{Attention}(\mathbf{U}; \mathbf{W}_Q, \mathbf{W}_K, \mathbf{W}_V) \!=\! \mathbf{W}_V \mathbf{U}\, \text{softmax}\!\left(\!\frac{\mathbf{U}^\top\mathbf{W}_Q^\top \mathbf{W}_K^\top \mathbf{U}}{\sqrt{d}}\!\right), \quad \mathbf{W}_Q, \mathbf{W}_K, \mathbf{W}_V \in \R^{d \times d},
\]
while nonlinear, contains three linear transformations: namely, the queries, the keys, and the values.
In this section, we establish~\Cref{thm.attentioncompression}, which supports the low-rank representations of $\mathbf{W}_Q$, $\mathbf{W}_K$, and $\mathbf{W}_V$ given a low-rank $\mathbf{U}$ (see~\Cref{app:attcompress} for the proof).

\begin{thm}\label{thm.attentioncompression}
    Let $C > 0$ be a constant. Let $\boldsymbol{\Xi} = \begin{bmatrix} \mathbf{x}_1 & \cdots & \mathbf{x}_N \end{bmatrix} \in \R^{d \times N}$ be given for some $d, N \geq 1$, $\mathbf{x}_j \in \R^{d}$, and $\|\mathbf{x}_j\|_2 \leq C$ for all $1 \leq j \leq N$. 
    Let $\mathbf{W}_Q, \mathbf{W}_K, \mathbf{W}_V \in \R^{d \times d}$ be matrices such that $\|\mathbf{W}_Q^\top \mathbf{W}_K\|_2 \leq C\sqrt{d}$, and $\|\mathbf{W}_V\|_2 \leq C\sqrt{d}$.
    The following two statements hold:
    \vspace{-0.5\baselineskip}
    \begin{enumerate}[leftmargin=*]
        \item 
        \textbf{(Attention matrices are compressible on low-rank inputs.)}  
        For any $\tilde{d} < d$ such that $\sigma_{\tilde{d}+1} := \sigma_{\tilde{d}+1}(\boldsymbol{\Xi}) \leq 1$, there exist $\tilde{\mathbf{W}}_Q, \tilde{\mathbf{W}}_K, \tilde{\mathbf{W}}_V \in \R^{d \times d}$ with $\text{rank}(\tilde{\mathbf{W}}_Q) = \text{rank}(\tilde{\mathbf{W}}_K) = \text{rank}(\tilde{\mathbf{W}}_V) = \tilde{d}$, $\|\tilde{\mathbf{W}}_Q^\top \tilde{\mathbf{W}}_K\|_2 \leq \|{\mathbf{W}}_Q^\top {\mathbf{W}}_K\|_2$, and $\|\tilde{\mathbf{W}}_V\|_2 \leq \|{\mathbf{W}}_V\|_2$, such that given any matrix $\mathbf{U} \in \R^{d \times L}$ for any $L \geq 1$, where each column of $\mathbf{U}$ is a column of $\boldsymbol{\Xi}$, we have that
        \begin{equation}\label{eq.lowrankattention}
            \left\| \text{Attention}(\mathbf{U}; \mathbf{W}_Q, \mathbf{W}_K, \mathbf{W}_V) - \text{Attention}(\mathbf{U}; \tilde{\mathbf{W}}_Q, \tilde{\mathbf{W}}_K, \tilde{\mathbf{W}}_V) \right\|_F \leq \mathcal{O}\left(\sqrt{d}\,\sigma_{\tilde{d}+1}\right),
        \end{equation}
        where the constant in the $\mathcal{O}$-notation only depends on $C$.

        \item 
        \textbf{(Attention matrices are incompressible on high-rank inputs.)} 
        The upper bound in~\cref{eq.lowrankattention} is tight up to a factor of $\sqrt{d}$. 
        That is, fix some $d \geq 1$, $L \geq d$, and a sequence $1 \geq \sigma_1 \geq \sigma_2 \geq \cdots \geq \sigma_d > 0$. There exist $\mathbf{U} \in \R^{d \times L}$ with $\sigma_j(\mathbf{U}) = \sigma_j$ for all $1 \leq j \leq d$, and $\mathbf{W}_Q, \mathbf{W}_K \in \R^{d \times d}$ such that for any $\tilde{d} < d$, any orthogonal matrix $\mathbf{W}_V \in \R^{d \times d}$, and any rank-$\tilde{d}$ matrix $\tilde{\mathbf{W}}_V \in \R^{d \times d}$, we have
        \begin{equation}\label{eq.highrankattention}
            \left\| \text{Attention}(\mathbf{U}; \mathbf{W}_Q, \mathbf{W}_K, \mathbf{W}_V) - \text{Attention}(\mathbf{U}; {\mathbf{W}}_Q, {\mathbf{W}}_K, \tilde{\mathbf{W}}_V) \right\|_F \geq \frac{1}{4}\sigma_{\tilde{d}+1}.
        \end{equation}
    \end{enumerate}
    \vspace{-0.5\baselineskip}
\end{thm}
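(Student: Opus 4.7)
For part (1), I plan to construct the compressed weights by projecting onto the top-$\tilde d$ left singular subspace of $\boldsymbol{\Xi}$. Let $\mathbf{P}\in\R^{d\times d}$ denote this orthogonal projector and define $\tilde{\mathbf{W}}_Q=\mathbf{W}_Q\mathbf{P}$, $\tilde{\mathbf{W}}_K=\mathbf{W}_K\mathbf{P}$, and $\tilde{\mathbf{W}}_V=\mathbf{W}_V\mathbf{P}$. Each has rank at most $\tilde d$, and because $\|\mathbf{P}\|_2=1$ the spectral-norm constraints $\|\tilde{\mathbf{W}}_Q^\top\tilde{\mathbf{W}}_K\|_2\le\|\mathbf{W}_Q^\top\mathbf{W}_K\|_2$ and $\|\tilde{\mathbf{W}}_V\|_2\le\|\mathbf{W}_V\|_2$ are immediate. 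The crucial consequence of this projector is the column-wise bound $\|(\mathbf{I}-\mathbf{P})\mathbf{u}_j\|_2\le\sigma_{\tilde d+1}$, which follows because every column of $\mathbf{U}$ is a column of $\boldsymbol{\Xi}$ and $\|(\mathbf{I}-\mathbf{P})\boldsymbol{\Xi}\|_2=\sigma_{\tilde d+1}$. I would then decompose the attention difference via the triangle inequality into a value-path error $\mathbf{W}_V\mathbf{U}\mathbf{A}\mapsto\mathbf{W}_V\mathbf{P}\mathbf{U}\mathbf{A}$ and a softmax-argument error $\mathbf{U}^\top\mathbf{W}_Q^\top\mathbf{W}_K\mathbf{U}\mapsto\mathbf{U}^\top\mathbf{P}\mathbf{W}_Q^\top\mathbf{W}_K\mathbf{P}\mathbf{U}$. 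Applying the column-wise $1$-Lipschitzness of softmax together with the column-simplex structure of $\mathbf{A}$ (so that $\mathbf{V}\mathbf{A}$ is a convex combination of columns of $\mathbf{V}$), and combining with the weight-norm hypotheses $\|\mathbf{W}_V\|_2,\|\mathbf{W}_Q^\top\mathbf{W}_K\|_2\le C\sqrt d$, should give the claimed $\mathcal O(\sqrt d\,\sigma_{\tilde d+1})$ bound.

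For part (2), my approach is a constructive lower bound. Take $\mathbf{U}=\mathrm{diag}(\sigma_1,\ldots,\sigma_d)\in\R^{d\times d}$, padded with $L-d$ zero columns if $L>d$ (which preserves the prescribed singular values). Choose $\mathbf{W}_Q=\mathbf{W}_K=t\,\mathbf{I}_d$ for a parameter $t>0$ to be tuned. The pre-softmax matrix $\mathbf{U}^\top\mathbf{W}_Q^\top\mathbf{W}_K\mathbf{U}/\sqrt d$ is then diagonal with entries $t^2\sigma_j^2/\sqrt d$, so as $t\to\infty$ the $j$th column of the attention matrix $\mathbf{A}$ concentrates at $\mathbf{e}_j$ for each $j\le d$, and the attention output approaches $\mathbf{W}_V\mathbf{U}$. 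Since $\mathbf{W}_V$ is orthogonal, $\mathbf{W}_V\mathbf{U}$ has exactly the singular values $\sigma_1,\ldots,\sigma_d$, whereas $\tilde{\mathbf{W}}_V\mathbf{U}$ has rank at most $\tilde d$. The Eckart--Young theorem in Frobenius norm, applied to the first-$d$-columns block, yields $\|\mathbf{W}_V\mathbf{U}-\tilde{\mathbf{W}}_V\mathbf{U}\|_F\ge\sigma_{\tilde d+1}$. Choosing $t$ finite but large enough that $\|\mathbf{A}-\mathbf{I}\|_F$ contributes at most $\tfrac34\sigma_{\tilde d+1}$ to the total error then produces the $\tfrac14\sigma_{\tilde d+1}$ lower bound in \cref{eq.highrankattention}.

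The main obstacle I anticipate is controlling the softmax error propagation in part (1) without accumulating spurious factors of $L$: naive Frobenius-spectral inequalities insert $\|\mathbf{A}\|_2\le\sqrt L$ or $\|\mathbf{A}\|_F\le\sqrt L$ factors. Working column by column and exploiting the simplex structure of $\mathbf{A}$, rather than relying on bulk matrix inequalities, should be enough to keep the bound at $\mathcal O(\sqrt d\,\sigma_{\tilde d+1})$. A secondary subtlety in part (2) is that the softmax-to-identity convergence rate depends on $\sigma_d$, so $t$ must scale with $\sigma_d$ and $d$; however, since the hypotheses of part (2) impose no ceiling on $\|\mathbf{W}_Q^\top\mathbf{W}_K\|_2$, any sufficiently large $t$ is admissible and the construction succeeds uniformly in the prescribed singular-value sequence.
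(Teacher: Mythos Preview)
Your proposal for both parts follows the same blueprint as the paper. For part~(1), the paper constructs exactly $\tilde{\mathbf{W}}_{Q/K/V}=\mathbf{W}_{Q/K/V}\mathbf{P}$ with $\mathbf{P}=\mathbf{U}_{\tilde d}\mathbf{U}_{\tilde d}^\top$ coming from the truncated SVD of $\boldsymbol{\Xi}$, and then works column by column as you describe. One refinement worth flagging: invoking ``softmax is $1$-Lipschitz in $\ell_2$'' as a black box will pick up a spurious $\sqrt L$ factor, because the $\ell_2$-norm of the pre-softmax column perturbation is only $O(\sqrt{L}\,\sigma_{\tilde d+1})$ (each of its $L$ entries is $O(\sigma_{\tilde d+1})$). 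The paper instead uses the hypotheses $\|\mathbf{x}_j\|_2\le C$ and $\|\mathbf{W}_Q^\top\mathbf{W}_K\|_2\le C\sqrt d$ to argue that every pre-softmax entry is $O(1)$, hence every softmax entry is $\Theta(1/L)$, and then computes the entrywise softmax perturbation explicitly as $O(\sigma_{\tilde d+1}/L)$; this finer granularity is what kills the $L$-dependence.

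For part~(2) your construction ($\mathbf{U}=\mathrm{diag}(\sigma_j)$ padded with zeros, $\mathbf{W}_Q^\top\mathbf{W}_K$ a large multiple of the identity) is exactly the paper's, but your additive perturbation step has a gap. You want ``$t$ large enough that $\|\mathbf{A}-\mathbf{I}\|_F$ contributes at most $\tfrac34\sigma_{\tilde d+1}$ to the total error,'' yet the perturbation term is $\|(\mathbf{W}_V-\tilde{\mathbf{W}}_V)\mathbf{U}(\mathbf{A}-\mathbf{I})\|_F$, and the theorem quantifies over \emph{all} rank-$\tilde d$ matrices $\tilde{\mathbf{W}}_V$ with no norm constraint --- no fixed $t$ can bound this additively. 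The fix, which is precisely the paper's route, is to go multiplicative: restrict to the first $d$ columns, where the difference becomes $(\mathbf{W}_V-\tilde{\mathbf{W}}_V)\mathbf{U}'\mathbf{G}_{11}$ with $\mathbf{G}_{11}$ the top-left $d\times d$ block of the softmax. Choosing $t$ large enough (the paper takes $\mathbf{W}_Q^\top\mathbf{W}_K=\log(4d)\,\sigma_d^{-2}\sqrt d\,\mathbf{I}_d$) makes $\mathbf{G}_{11}$ diagonally dominant with $\sigma_d(\mathbf{G}_{11})\ge\tfrac14$, and then $\|\mathbf{M}\mathbf{G}_{11}\|_2\ge\sigma_d(\mathbf{G}_{11})\|\mathbf{M}\|_2\ge\tfrac14\sigma_{\tilde d+1}$ by Eckart--Young, uniformly in $\tilde{\mathbf{W}}_V$ and in $\tilde d$.
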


The first statement of~\Cref{thm.attentioncompression} says, at a high level, that if the inputs $\mathbf{U}$ come from a low-rank vocabulary $\boldsymbol{\Xi}$, i.e., ${\sigma}_{\tilde{d}+1}$ is small, then one only needs low-rank attention matrices $\tilde{\mathbf{W}}_Q, \tilde{\mathbf{W}}_K, \tilde{\mathbf{W}}_V$ on $\mathbf{U}$. 
Here, it is important that we fix the low-rank embedded space $\boldsymbol{\Xi}$ and prove a uniform bound~\cref{eq.lowrankattention} that holds for all input $\mathbf{U}$, so that our low-rank approximation $\tilde{\mathbf{W}}_Q, \tilde{\mathbf{W}}_K, \tilde{\mathbf{W}}_V$ is not input-dependent.
Since TSFMs have a low-rank vocabulary  $\boldsymbol{\Xi}$ (see \cref{sec:embedding}), high-rank attention matrices in TSFMs can be approximated by low-rank ones. 
See~\Cref{app:numerical} for a numerical experiment.

For other data modalities than time-series, e.g., TFMs, ViTs, and LLMs (see~\Cref{fig:singular_embedding}), where $\mathbf{U}$ has a higher rank, the second statement of~\Cref{thm.attentioncompression} suggests that the attention matrices are less compressible, as we will show in~\cref{sec:experiment}.

To illustrate the concepts in~\Cref{thm.attentioncompression} on TSFMs, we take pretrained Chronos models and T5 LLMs of different hidden dimensions $d$. These two models have the same Transformer size and architecture, and differ only in the pretraining data modality.
For each model and a fixed $\varepsilon > 0$, we compute the averaged $\varepsilon$-rank of all query projection matrices $\mathbf{W}_Q$ from the model. 
\Cref{fig:compare_ranks}\texttt{(a,b)} show that as the size of $\mathbf{W}_Q$ increases, the matrix $\mathbf{W}_Q$ stays low-rank in a Chronos model, and its rank scales almost linearly with $d$ in a T5 LLM. 
For Chronos models, where the vocabulary is embedded in a low-rank subspace, low-rank attention matrices suffice to capture most information in the inputs. 
This empirical observation goes beyond~\Cref{thm.attentioncompression}, which proves the sufficiency of low-rank $\mathbf{W}_{Q/K/V}$, but it does not guarantee their appearance via training.
\Cref{fig:compare_ranks}\texttt{(a)} shows that training, while independent from expressiveness, gives rise to low-rank weights (see~\Cref{app:weightwatcher} for more insights). 
This provides motivation for pretraining a compressed model and for compressing a pretrained one (see~\cref{sec:experiment}). In~\Cref{app:weightwatcher}, we show an analysis of the distribution of the singular values of attention matrices, in pretrained foundation models and also during training.

\begin{figure}[h]
    \centering
    \begin{minipage}{0.32\textwidth}
        \begin{overpic}[width = 1.0\textwidth]{./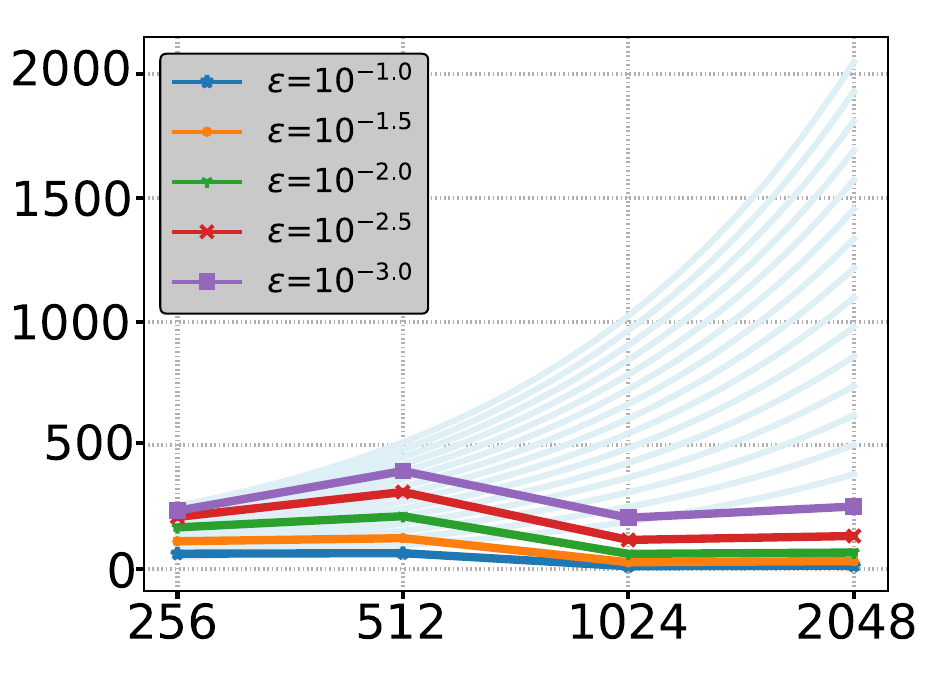}
            \put(-5,22){\rotatebox{90}{\scriptsize $\text{rank}_\varepsilon(\mathbf{W}_Q)$}}
            \put(52,-3){\scriptsize $d$}
            \put(49,-12.5){\texttt{(a)}}
            \put(16,73){\small \textbf{Rank of $\mathbf{W}_Q$ (Chronos)}}
        \end{overpic}
    \end{minipage}
    \hfill
    \begin{minipage}{0.32\textwidth}
        \begin{overpic}[width = 1.005\textwidth]{./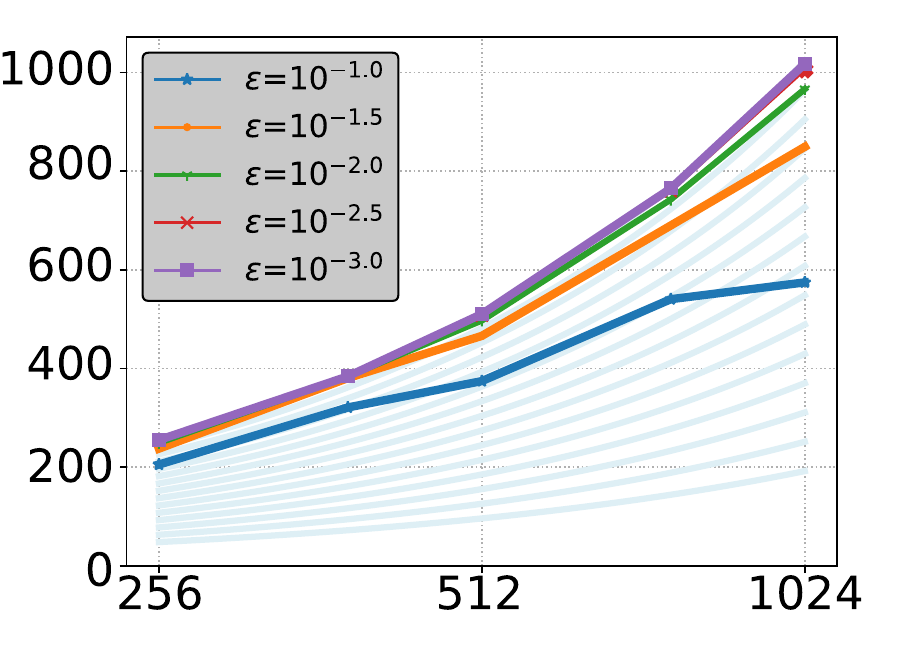}
            \put(-5,22){\rotatebox{90}{\scriptsize $\text{rank}_\varepsilon(\mathbf{W}_Q)$}}
            \put(52,-3){\scriptsize $d$}
            \put(25,73){\small \textbf{Rank of $\mathbf{W}_Q$ (T5)}}
            \put(49,-12){\texttt{(b)}}
        \end{overpic}
    \end{minipage}
    \hfill
    \begin{minipage}{0.32\textwidth}
        \begin{overpic}[width = 1.0\textwidth]{./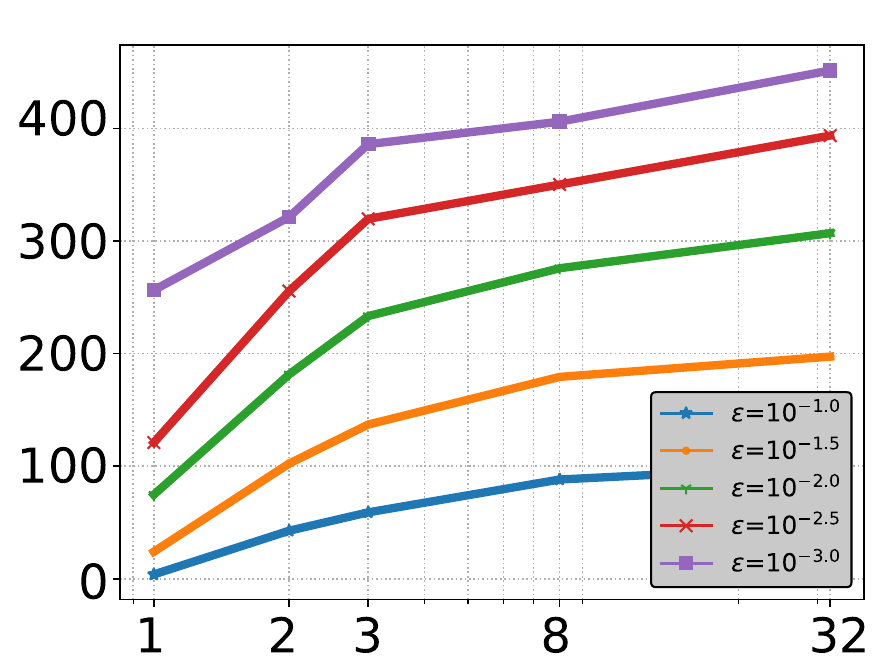}
            \put(-5,22){\rotatebox{90}{\scriptsize $\text{rank}_\varepsilon(\mathbf{W}_Q)$}}
            \put(25,-3){\scriptsize rank of input embedding}
            \put(15.5,73.5){\small \textbf{Rank of $\mathbf{W}_Q$ (Chronos)}}
            \put(49,-12){\texttt{(c)}}
        \end{overpic}
    \end{minipage}
    \vspace{1\baselineskip}
    \caption{The averaged $\varepsilon$-rank of query projection matrices $\mathbf{W}_Q$ in pretrained Chronos models and T5 LLMs. In~\texttt{(a,b)}, we vary the hidden dimension $d$. The light blue curves are contours of the ratio between the horizontal and the vertical axes in the semilog-x scale. In~\texttt{(c)}, we fix the hidden dimension $d = 512$ and change the rank of the fixed input embedding $\boldsymbol{\Xi}$ (see~\Cref{sec:chebyshev}).}
    \label{fig:compare_ranks}
\end{figure}

To further corroborate the role of the rank $\tilde{d}$ in~\Cref{thm.attentioncompression}, we pretrain Chronos models with a fixed vocabulary.
We increase the (algebraic) rank of $\boldsymbol{\Xi}$ from $1$ to $32$ (see~\Cref{sec:chebyshev}), where we observe that the numerical ranks of attention matrices increase with $\text{rank}(\boldsymbol{\Xi})$ (see~\Cref{fig:compare_ranks}\texttt{(c)}).

While our discussion in this section assumes a single head in the attention layer, in~\Cref{app:multiheadcompress} we extend the result to the multi-head case. From the standpoint of representation complexity,~\Cref{thm.attentioncompression} applies equally to multi-head attention, and it is independent of the number of heads (see~\Cref{thm.multiheadcompression} in~\Cref{app:attcompress}). In practice, however, we observe that the $\mathbf{W}_{Q/K/V}$ matrices in pretrained multi-head layers tend to have higher numerical ranks (see~\Cref{fig:compare_head}). This effect can be understood through the lens of numerical linear algebra, via a concept called ``sketching'' (see~\Cref{app:multiheadcompress}). It helps explain why additional heads can improve robustness and training stability, even though the underlying complexity result is head-agnostic (see~\Cref{thm.sparsebetter} in~\Cref{app:multiheadcompress}).

\section{Flow-of-ranks: Moving through a Transformer}
\label{sec:flowofranks}

Our prior discussions in~\cref{sec:inputs_to_attention} focused on compressing a single attention layer, but a Transformer in a TSFM has many layers. While we showed that the input into the first attention layer is low-rank (see~\cref{sec:embedding}), one may wonder: what about the inputs into a later layer? If you apply a linear transformation to a low-rank input, then it at most preserves, if not decreases, the rank of the input; however, each layer in a Transformer is generally nonlinear. 
Here, we demonstrate that these nonlinear layers can increase the rank of the input.

We refer to the phenomenon of the increasing rank of the input as it goes deeper into the layers of a model as the \emph{flow-of-ranks}. 
Our contribution here is twofold: we quantify this rank growth across layers in a deep Transformer by proving~\Cref{thm.flowofrank} and then show what that means to Transformers applied to time-series data.
To see the ``flow-of-ranks'' in practice, we show the numerical ranks of attention matrices per layer in a Chronos model and a T5 LLM in~\Cref{fig:flow_of_ranks}\texttt{(a,b)}. 
The numerical ranks of attention matrices become higher for deeper layers because of the flow-of-ranks (see~\Cref{fig:flow_of_ranks}\texttt{(c)}). 
For a T5 LLM, only with a large $\varepsilon$ do we observe an increase in the $\varepsilon$-rank, because the $\varepsilon$-rank with a small $\varepsilon$ is already saturated (i.e., close to the matrix's dimension) to capture the high-rank vocabulary $\boldsymbol{\Xi}$.
To formalize this discussion, we prove how an attention layer increases the small singular values of a low-rank input matrix.

\begin{figure}[h]
    \centering
    \begin{minipage}{0.32\textwidth}
        \begin{overpic}[width = 1.0\textwidth]{./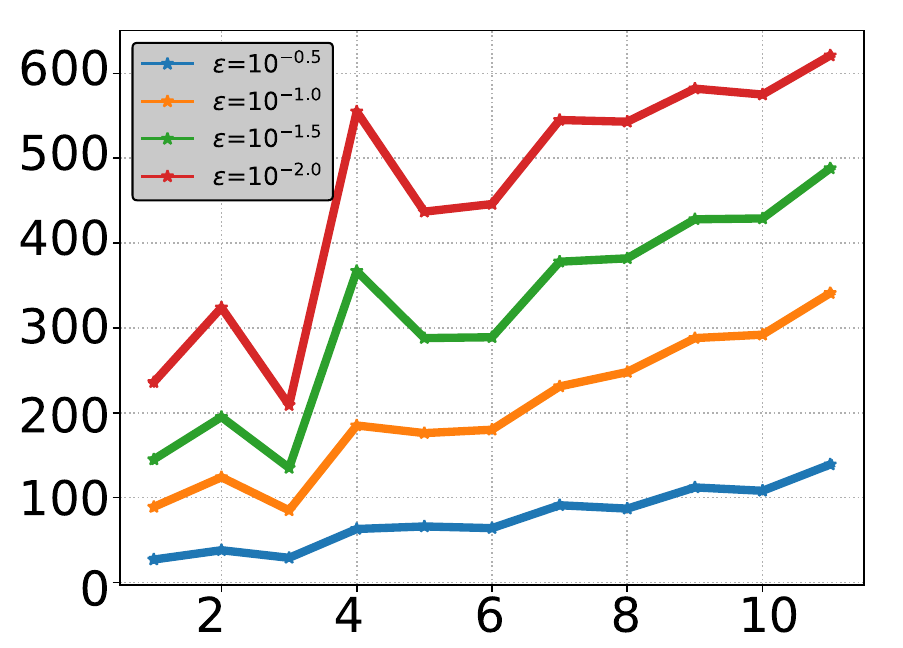}
            \put(-5,22){\rotatebox{90}{\scriptsize $\text{rank}_\varepsilon(\mathbf{W}_Q)$}}
            \put(42,-4.5){\scriptsize layer index}
            \put(48,-15.2){\texttt{(a)}}
            \put(15,73){\small \textbf{Rank of $\mathbf{W}_Q$ (Chronos)}}
        \end{overpic}
    \end{minipage}
    \hfill
    \begin{minipage}{0.32\textwidth}
        \begin{overpic}[width = 1.005\textwidth]{./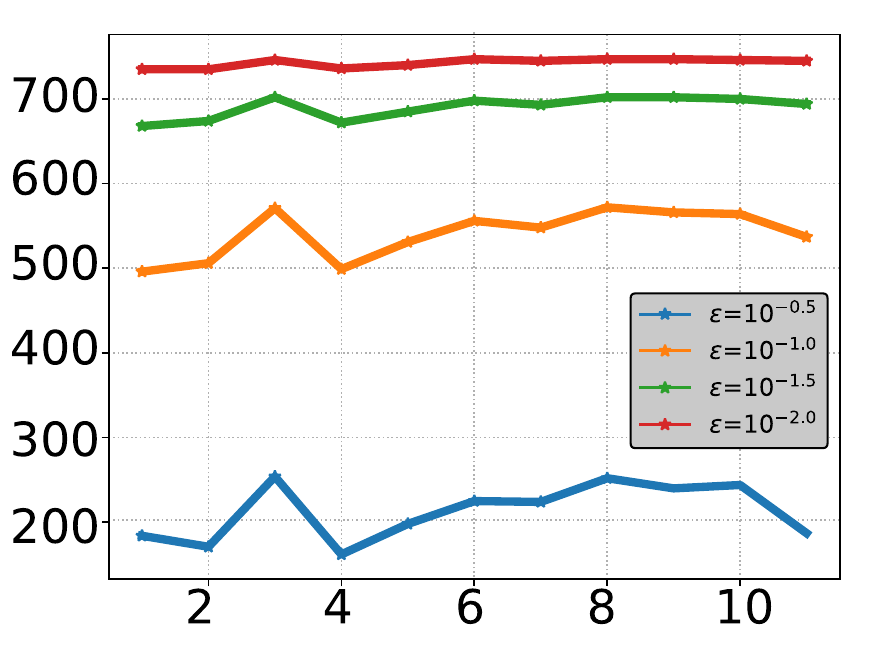}
            \put(-5,22){\rotatebox{90}{\scriptsize $\text{rank}_\varepsilon(\mathbf{W}_Q)$}}
            \put(42,-3){\scriptsize layer index}
            \put(48,-13.8){\texttt{(b)}}
            \put(24,74){\small \textbf{Rank of $\mathbf{W}_Q$ (T5)}}
        \end{overpic}
    \end{minipage}
    \hfill
    \begin{minipage}{0.32\textwidth}
        \begin{overpic}[width = 1.0\textwidth]{./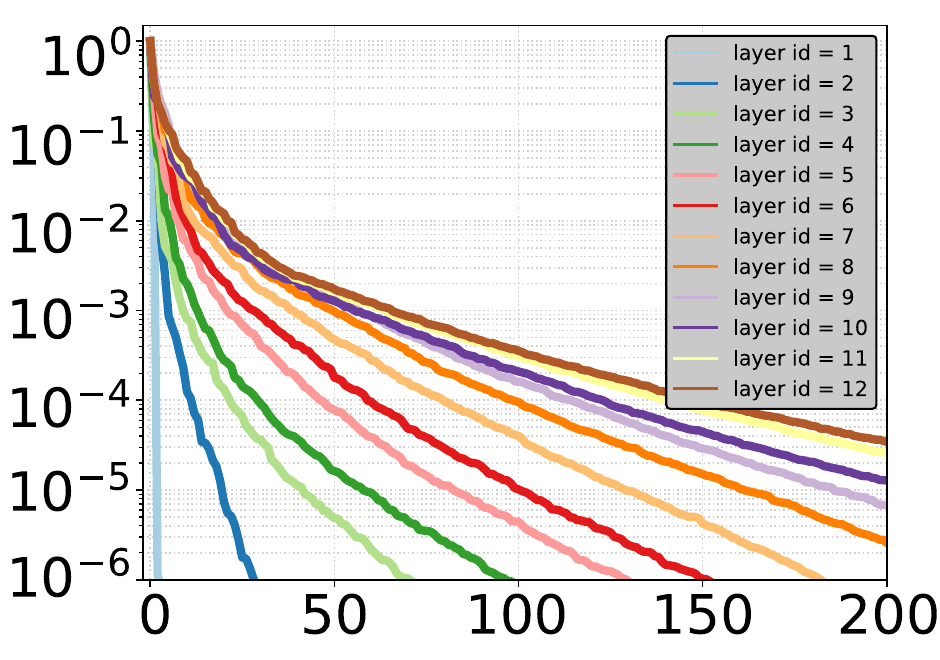}
            \put(52,-3){\scriptsize $j$}
            \put(-5,17){\rotatebox{90}{\scriptsize $\sigma_j(\mathbf{U}) / \sigma_1(\mathbf{U})$}}
            \put(48,-15){\texttt{(c)}}
            \put(2,71.5){\small \textbf{SVD of Inputs to Each Layer}}
        \end{overpic}
    \end{minipage}
    
    \vspace{1\baselineskip}
    
    \caption{\texttt{(a,b)}: The $\varepsilon$-rank of every query projection $\mathbf{W}_Q$ in the encoder of a Chronos model and a T5 model of the same size, respectively. 
    \texttt{(c)}: Singular values of the input matrix to each Chronos' encoder layer, starting with a constant signal $(x_1, \ldots, x_T) = (0, \ldots, 0)$.}
    \label{fig:flow_of_ranks}
\end{figure}

\begin{thm}\label{thm.flowofrank}
    Given positive integers $d \leq L$, let $\mathbf{U} \in \R^{d \times L}$ be an input matrix with singular values $1 = \sigma_1 \geq \cdots \geq \sigma_d > 0$. Let $D \geq 1$ be the number of layers of the model. Let $h$ be the number of heads and $d_h$ be the per-head dimension so that $d = h \times d_h$. The following statements hold:
    \vspace{-0.5\baselineskip}
    \begin{enumerate}[leftmargin=*]
        \item Suppose $\sqrt{D} \geq 2e^2 h$. For every $1 \leq i \leq h$, let $\|{\mathbf{W}_Q^{(i)}}^\top \mathbf{W}_K^{(i)}\|_2 \leq \sqrt{d_h}$, and $\|\mathbf{W}_V^{(i)}\|_2 \leq 1$ be any attention matrices (see~\cref{eq.MHAttention} for the notation). For any $1 \leq k \leq d$, we have
        \begin{equation}\label{eq.rankflowupper}
            \begin{aligned}
                \sigma_k(\mathbf{Z}) / \sigma_1(\mathbf{Z}) \leq 2\min_{1 \leq j \leq k}\left(\sigma_{k-j+1} + \frac{e^2 h}{\sqrt{D}} \sigma_{\lfloor (j-1)/h \rfloor + 1}\right),
            \end{aligned}
        \end{equation}
        where $\mathbf{Z} = \mathbf{U} + \mathbf{Y}/\sqrt{D}$ and $\mathbf{Y} = \text{MH-Attention}(\mathbf{\mathbf{U}}; \mathbf{W}_Q, \mathbf{W}_K, \mathbf{W}_V, h)$.
        \item The upper bound above is tight. More precisely, given any $d, L, N, h$, and singular values $1 = \sigma_1 \geq \cdots \geq \sigma_d > 0$, there exist an input $\mathbf{U} \in \R^{d \times L}$ with singular values $\sigma_1, \ldots, \sigma_d$, and matrices $\mathbf{W}_Q$, $\mathbf{W}_K$, and $\mathbf{W}_V$ with $\|\mathbf{W}_V^{(i)}\|_2 \leq 1$, such that for any $1 \leq k \leq d$, we have
        \begin{equation}\label{eq.rankflowlower}
            \sigma_k(\mathbf{Z}) / \sigma_1(\mathbf{Z}) \geq \frac{1}{4} \left(\sigma_{(i-1)d_h+\lceil k/i\rceil} + \frac{1}{\sqrt{D}}\sigma_{\lceil k/i\rceil}\right),
        \end{equation}
        for every $i \leq h$ such that $\lceil k / i \rceil < d_h$.
    \end{enumerate}
    \vspace{-0.5\baselineskip}
\end{thm}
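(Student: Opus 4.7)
My plan is to treat the upper and lower bounds separately, both via the additive decomposition $\mathbf{Z} = \mathbf{U} + \mathbf{Y}/\sqrt{D}$. For the upper bound I would first apply Weyl's inequality for singular values to get $\sigma_k(\mathbf{Z}) \leq \sigma_{k-j+1}(\mathbf{U}) + \sigma_j(\mathbf{Y})/\sqrt{D}$ for every $1 \leq j \leq k$. To convert this numerator bound into a bound on the ratio $\sigma_k(\mathbf{Z})/\sigma_1(\mathbf{Z})$ I need $\sigma_1(\mathbf{Z}) \geq 1/2$, which would follow from the reverse Weyl inequality $\sigma_1(\mathbf{Z}) \geq \sigma_1(\mathbf{U}) - \|\mathbf{Y}\|_2/\sqrt{D}$, a crude estimate $\|\mathbf{Y}\|_2 \leq h e^2$ (established below), and the hypothesis $\sqrt{D} \geq 2 e^2 h$. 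This lower bound on $\sigma_1(\mathbf{Z})$ is also the origin of the factor of $2$ on the right-hand side of the target inequality.

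The heart of the upper bound is controlling $\sigma_j(\mathbf{Y})$ for arbitrary $j$. I would write $\mathbf{Y} = \sum_{i=1}^h \mathbf{Y}_i$ with $\mathbf{Y}_i = \mathbf{W}_V^{(i)} \mathbf{U} \mathbf{P}_i$, where $\mathbf{P}_i$ is the softmax matrix of head $i$. Since $\sigma_1(\mathbf{U}) = 1$ forces all columns of $\mathbf{U}$ to have norm at most $1$, the hypothesis $\|{\mathbf{W}_Q^{(i)}}^\top \mathbf{W}_K^{(i)}\|_2 \leq \sqrt{d_h}$ forces every entry of the softmax argument to lie in $[-1,1]$. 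Consequently each entry of $\mathbf{P}_i$ lies in $[e^{-2}/L, e^2/L]$, which yields $\|\mathbf{P}_i\|_2 \leq e^2$, and, with $\|\mathbf{W}_V^{(i)}\|_2 \leq 1$, also $\sigma_m(\mathbf{Y}_i) \leq e^2 \sigma_m(\mathbf{U})$ for every index $m$. Iterating the classical singular-value sub-additivity $\sigma_{a+b-1}(\mathbf{A}+\mathbf{B}) \leq \sigma_a(\mathbf{A}) + \sigma_b(\mathbf{B})$ across the $h$ summands with common index $\lfloor(j-1)/h\rfloor + 1$ and using monotonicity of singular values yields $\sigma_j(\mathbf{Y}) \leq h e^2 \sigma_{\lfloor(j-1)/h\rfloor+1}(\mathbf{U})$, which, combined with the Weyl bound above, proves~\cref{eq.rankflowupper}.

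For the tightness claim~\cref{eq.rankflowlower} I would construct an explicit example tailored to each $i \leq h$ satisfying $\lceil k/i \rceil < d_h$. The plan is to take $\mathbf{U} = [\,\mathrm{diag}(\sigma_1,\ldots,\sigma_d) \mid \mathbf{0}\,] \in \R^{d \times L}$, activate only the first $i$ heads, and use the value projections $\mathbf{W}_V^{(j)}$ of these active heads as orthogonal permutations placing the image of head $j$ into the $j$-th $d_h$-block of $\R^d$. With $\mathbf{W}_Q^{(j)}, \mathbf{W}_K^{(j)}$ chosen as modest multiples of the identity (so that $\mathbf{P}_j$ is a mild perturbation of the uniform distribution), the orthogonal block placement redistributes the top singular values $\sigma_1,\ldots,\sigma_{\lceil k/i \rceil}$ of $\mathbf{U}$ across $i$ disjoint blocks of $\mathbf{Y}$; a reverse Weyl inequality on $\mathbf{Z}$ then produces the claimed lower bound after absorbing a further factor of $1/2$ from $\sigma_1(\mathbf{Z})$. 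The main obstacle lies here: the softmax constraint prevents $\mathbf{P}_j$ from being either identity-like or sharply concentrated, so no individual head can ``select'' specific columns of $\mathbf{U}$, and the rank-inflating work must be carried entirely by the value permutations. Orchestrating these $i$ permutations $\mathbf{W}_V^{(j)}$ together with the gently perturbed score matrices so that the singular values of $\sum_j \mathbf{Y}_j$ reflect both the $(i-1)d_h$ offset and the $\lceil k/i \rceil$ index appearing in the bound is the delicate step.
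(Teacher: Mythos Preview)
Your upper-bound argument is essentially the paper's: Weyl on $\mathbf{Z}=\mathbf{U}+\mathbf{Y}/\sqrt{D}$, the entrywise bound $[-1,1]$ on the softmax argument giving $\|\mathbf{P}_i\|_2\le e^2$, iterated sub-additivity across the $h$ heads (the paper zero-pads each $\mathbf{Y}^{(i)}\in\R^{d_h\times L}$ to $\R^{d\times L}$ to treat the concatenation as a sum, just as you implicitly do), and $\sigma_1(\mathbf{Z})\ge 1/2$ from $\sqrt{D}\ge 2e^2h$.

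For the lower bound, however, your plan has a genuine gap rooted in a misreading of the hypotheses. The norm constraint $\|{\mathbf{W}_Q^{(i)}}^\top\mathbf{W}_K^{(i)}\|_2\le\sqrt{d_h}$ belongs only to part~1; part~2 imposes only $\|\mathbf{W}_V^{(i)}\|_2\le 1$. So your premise that ``the softmax constraint prevents $\mathbf{P}_j$ from being either identity-like or sharply concentrated'' is false here: one is free to scale the $\mathbf{W}_Q^{(i)},\mathbf{W}_K^{(i)}$ by an arbitrarily large $\alpha$ and drive each $\mathbf{P}_i$ toward a prescribed permutation-type matrix. Your workaround---near-uniform $\mathbf{P}_j$ with rank inflation carried by the value projections---cannot succeed: if $\mathbf{P}_j$ is close to the all-$1/L$ matrix then $\mathbf{W}_V^{(j)}\mathbf{U}\mathbf{P}_j$ is close to rank~$1$, so $\mathbf{Y}$ has numerical rank at most $h$ regardless of how the value maps are chosen, and the bound~\cref{eq.rankflowlower} will not follow. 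There is also a structural slip: $\mathbf{W}_V^{(j)}\in\R^{d_h\times d}$ selects which $d_h$-dimensional slice of $\mathbf{U}$ head $j$ \emph{reads}; the placement of head $j$'s output into the $j$-th $d_h$-block of $\R^d$ is automatic from the concatenation in~\cref{eq.MHAttention}. Finally, the theorem asks for a \emph{single} construction giving~\cref{eq.rankflowlower} for every admissible $i$, so ``activate only the first $i$ heads'' would yield an $i$-dependent example rather than the required one.

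The paper's construction inverts your division of labor. All heads use the \emph{same} value projection $\mathbf{W}_V^{(i)}=[\mathbf{I}_{d_h-1}\ \mathbf{0}]$, while $\mathbf{W}_Q^{(i)},\mathbf{W}_K^{(i)}$ are engineered (and then scaled by a large $\alpha$) so that $\text{softmax}(\mathbf{T}^{(i)})$ is $\varepsilon$-close to a matrix $\tilde{\mathbf{G}}^{(i)}$ that shifts columns $(i-1)d_h+1,\ldots,id_h$ of $\mathbf{U}$ into positions $1,\ldots,d_h$. With $\mathbf{U}=[\text{diag}(\sigma_1,\ldots,\sigma_d)\mid\mathbf{0}]$, the leading $d\times d$ block of $\mathbf{Z}$ is then (up to $\varepsilon$) diagonal with entries $\sigma_{(i-1)d_h+j}+\tfrac{1}{\sqrt{D}}\sigma_j$ for $1\le i\le h$, $1\le j< d_h$; a counting argument on this array yields~\cref{eq.rankflowlower} for every $i$ at once. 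In short, the rank-inflating work is done by the sharp, head-dependent softmax matrices, which your plan ruled out by mistake.
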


See~\Cref{app:lowofranks} for the proof of~\Cref{thm.flowofrank} and~\Cref{app:numerical} for a numerical experiment. The matrix $\mathbf{Z}$ can be interpreted as the output of a residual attention layer given input $\mathbf{U}$, where the scaling factor $1/\sqrt{D}$ is commonly used in a Transformer to stabilize and accelerate training~\citep{de2020batch}. The upper bound in~\cref{eq.rankflowupper} provides a guarantee that the rank of the output cannot be arbitrarily large if the input is low-rank. In this bound, there are two terms balancing each other: $\sigma_{k-j+1}$, which increases as $j$ increases, and $\sigma_{\lfloor (j-1)/h \rfloor + 1}$, which decreases as $j$ increases. 
Without knowing the precise distribution of the singular values, the $j$ which minimizes~\cref{eq.rankflowupper} cannot be determined. 
To understand why the upper bound is tight, we prove a corollary in a simplified one-layer setting.

\begin{cor}\label{cor.flowofranks}
    Using the notations in~\Cref{thm.flowofrank} and assuming $D = 1$, we have $\sigma_k(\mathbf{Z}) \leq \mathcal{O}(h) \sigma_{\lfloor(k+1)/(h+1)\rfloor+1}$. In addition, the lower bound~\cref{eq.rankflowlower} satisfies that $\sigma_k(\mathbf{Z}) \geq \mathcal{O}(1) \sigma_{\lceil k/h \rceil}$ for every $k \leq d-d_h$. The constants in both $\mathcal{O}$-notations are universal.
\end{cor}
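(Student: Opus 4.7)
The plan is to derive both halves of~\Cref{cor.flowofranks} by specializing the bounds in~\Cref{thm.flowofrank} to $D=1$ and choosing the free parameters $j$ (upper bound) and $i$ (lower bound) optimally; in each case the argument reduces to balancing two $\sigma_\bullet$ terms whose indices move in opposite directions.

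For the upper bound I would start from the Weyl-plus-subadditivity estimate that underlies the first half of~\Cref{thm.flowofrank}. Writing $\mathbf{Z} = \mathbf{U} + \mathbf{Y}$ and decomposing $\mathbf{Y}$ as a sum of $h$ head-outputs $\mathbf{W}_V^{(i)}\mathbf{U}\mathbf{S}^{(i)}$, each of whose singular values are dominated by those of $\mathbf{U}$ (because $\|\mathbf{W}_V^{(i)}\|_2 \leq 1$ and the softmax matrices $\mathbf{S}^{(i)}$ are column-stochastic of operator norm at most $1$), Weyl additivity combined with the sub-additivity of singular values under sums yields, for every $1 \leq j \leq k$,
\[
\sigma_k(\mathbf{Z}) \;\leq\; C\bigl(\sigma_{k-j+1} + h\,\sigma_{\lfloor(j-1)/h\rfloor+1}\bigr)\,\sigma_1(\mathbf{Z}).
\]
Only the joint $1/\sqrt{D}$ scaling in~\Cref{thm.flowofrank} requires the hypothesis $\sqrt{D}\geq 2e^2h$; the two-term estimate itself is valid for any $D\geq 1$. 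I would then balance the two summands by choosing $j = h\lfloor(k+1)/(h+1)\rfloor + 1$, which makes the floor evaluate cleanly and forces both indices to collapse to $\lfloor(k+1)/(h+1)\rfloor + 1$, giving $\sigma_k(\mathbf{Z}) \leq \mathcal{O}(h)\,\sigma_{\lfloor(k+1)/(h+1)\rfloor+1}$ after absorbing the bounded factor $\sigma_1(\mathbf{Z}) \leq 1+h$ into the universal constant.

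For the lower bound I would simply take $i = h$ in~\cref{eq.rankflowlower} and drop the first nonnegative summand, obtaining
\[
\sigma_k(\mathbf{Z})/\sigma_1(\mathbf{Z}) \;\geq\; \tfrac{1}{4}\bigl(\sigma_{(h-1)d_h + \lceil k/h\rceil} + \sigma_{\lceil k/h\rceil}\bigr) \;\geq\; \tfrac{1}{4}\sigma_{\lceil k/h\rceil}.
\]
The feasibility condition $\lceil k/i\rceil < d_h$ from~\Cref{thm.flowofrank} becomes $\lceil k/h\rceil < d_h$ when $i = h$; under the hypothesis $k \leq d - d_h = (h-1)d_h$ this is immediate since $\lceil k/h\rceil \leq d_h - 1$. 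The claim $\sigma_k(\mathbf{Z}) \geq \mathcal{O}(1)\,\sigma_{\lceil k/h\rceil}$ then follows.

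The main technical obstacle I anticipate is the floor-and-ceiling bookkeeping in the upper-bound step. The real-valued minimizer of the two-term estimate sits at $j \approx (hk+1)/(h+1)$, at which both indices equal $(k+h)/(h+1)$; this does not in general coincide with $\lfloor(k+1)/(h+1)\rfloor + 1$. The integer choice $j = h\lfloor(k+1)/(h+1)\rfloor + 1$ sidesteps this by forcing the floor to evaluate to $\lfloor(k+1)/(h+1)\rfloor$, at the price of hiding a small rounding discrepancy inside the universal $\mathcal{O}$-constant. Everything else reduces to routine application of standard singular-value inequalities.
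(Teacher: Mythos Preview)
Your plan is the paper's plan: specialize~\Cref{thm.flowofrank} to $D=1$, pick $j$ near the real balance point for the upper bound, and read off the lower bound from~\cref{eq.rankflowlower}. The paper only writes out the upper-bound index computation (with $j=\lfloor(hk+1)/(h+1)\rfloor$) and then says ``the corollary follows''; your treatment of the lower bound via $i=h$ is more explicit than the paper's. Two bookkeeping issues on the upper bound, though:

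\emph{The index claim is false.} With $m=\lfloor(k+1)/(h+1)\rfloor$ and your choice $j=hm+1$, you do get $\lfloor(j-1)/h\rfloor+1=m+1$, but $k-j+1=k-hm$. Writing $k+1=(h+1)m+r$ with $0\le r\le h$ gives $k-hm=m+r-1$, which equals $m-1$ or $m$ when $r\in\{0,1\}$. In those cases $\sigma_{k-j+1}$ is $\sigma_{m-1}$ or $\sigma_m$, and an index discrepancy cannot be ``hidden inside the universal $\mathcal{O}$-constant'' since $\sigma_{m-1}/\sigma_{m+1}$ is unbounded. The paper's choice $j=\lfloor(hk+1)/(h+1)\rfloor$ is the one that lands cleanly: for $h\ge 2$ it yields $k-j+1\ge m+1$ for every $r$, and $\lfloor(j-1)/h\rfloor+1\ge m+1$ as well.

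\emph{The $\sigma_1(\mathbf{Z})$ factor is unnecessary and costs you a power of $h$.} You are right that the two-term estimate holds for any $D\ge1$; but that estimate (eq.~\eqref{eq.smallsvd} in the proof of~\Cref{thm.flowofrank}) already bounds $\sigma_k(\mathbf{Z})$ directly, not the ratio: $\sigma_k(\mathbf{Z})\le \sigma_{k-j+1}+e^2h\,\sigma_{\lfloor(j-1)/h\rfloor+1}$. The hypothesis $\sqrt{D}\ge 2e^2h$ is used only to lower-bound $\sigma_1(\mathbf{Z})$ and pass to the ratio form~\cref{eq.rankflowupper}. If you instead multiply by $\sigma_1(\mathbf{Z})\le 1+h$ as you propose, your final constant becomes $\mathcal{O}(h^2)$, not $\mathcal{O}(h)$. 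Just use the raw estimate.

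One small caveat on the lower bound: your check that $k\le (h-1)d_h$ forces $\lceil k/h\rceil\le d_h-1$ is only valid when $d_h\ge h$; if $d_h<h$, the boundary value $k=(h-1)d_h$ gives $\lceil k/h\rceil=d_h$, violating the feasibility condition $\lceil k/i\rceil<d_h$ in~\Cref{thm.flowofrank}.
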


The corollary states that when $\mathbf{U}$ goes through an attention layer, the $k^\text{th}$ singular value of the output $\mathbf{Z}$ can be increased to at most the order of magnitude of the $\lfloor k / h \rfloor^{\text{th}}$ singular value of $\mathbf{U}$. The lower bound suggests that this can be achieved by some inputs, which proves the sharpness of the upper bound. Interestingly, this result says that the number of heads $h$ plays an important role in increasing the ranks of an input matrix. The term to note here is not the $\mathcal{O}(h)$ factor multiplied to the upper bound but the division by $h+1$ in the subindex $\lfloor(k+1)/(h+1)\rfloor+1$. When $h$ is large and the singular values decay fast, $\sigma_{\lfloor(k+1)/(h+1)\rfloor+1}(\mathbf{U})$ can be significantly larger than $\sigma_k(\mathbf{U})$ (see~\Cref{fig:flowofranksnumerical}).

\section{Using these insights: How to Compress a TSFM?}
\label{sec:experiment}

In this section, we apply our theoretical framework to compress large TSFMs. We pursue two approaches: first, motivated by the observation that attention matrices in many pretrained TSFMs are already low rank, we apply truncated SVD to each attention matrix of a pretrained model. Second, to achieve stronger compression, we design architectures that parameterize attention matrices in low rank from the start and pretrain them from scratch, using a layer-dependent hyperparameter schedule that accounts for the ``flow-of-ranks.'' 
We present results on compressing Chronos, and we provide more results on another TSFM in~\Cref{app:compressbolt} to support the broad applicability of our methods.

\textbf{Compressing a Pretrained Model.} In~\Cref{fig:compare_ranks}, we see that attention matrices in a large pretrained model usually do not have a full numerical rank. That means we can well-approximate a large matrix with a low-rank one. More precisely, let $\mathbf{W} \in \R^{d \times d}$ be an attention matrix and let $\mathbf{W} = \sum_{j=1}^d \sigma_j \mathbf{u}_i \mathbf{v}_i^\top$ be the singular value decomposition (SVD) of $\mathbf{W}$. Then, for a fixed $\varepsilon > 0$, the truncated SVD satisfies a relative error bound: $\left\|\mathbf{W} - \sum_{j = 1}^{\text{rank}_\varepsilon(\mathbf{W})} \sigma_j \mathbf{u}_i \mathbf{v}_i^\top\right\|_2 \leq \varepsilon \|\mathbf{W}\|_2$ (see~\cref{eq.epsrank}). For a fixed $\varepsilon > 0$, we apply the truncated SVD to every attention matrix of the pretrained Chronos model without fine-tuning, which reduces the number of parameters. We compute the WQL and MASE losses (see~\citet{ansari2024chronos}) and their geometric means relative to the original model. A score below $1$ means the reduced model performs better while it performs worse otherwise.

\begin{table}[h]
    \centering
    \caption{Results of compressing a pretrained Chronos or T5 model. We apply truncated SVD with a specific $\varepsilon$, which results in a model whose attention matrices are compressed to the ``Ratio'' of the original size. We compare the performance scores relative to the original pretrained model, where ``LPPL'' stands for the logarithm of the perplexity. Overlap is obtained by selecting the top-$10$ tokens from the original model and the compressed one, and computing their Jaccard overlap.
    }
    \vspace{-0.75\baselineskip}
    \begin{minipage}{0.5\textwidth}
    \resizebox{1\textwidth}{!}{
        \begin{tabular}{c|cc|cc|c|cc}
        \toprule
        \multirow{3}{*}{Ratio} & \multicolumn{5}{c|}{Chronos} & \multicolumn{2}{c}{T5} \\
         & \multicolumn{2}{c|}{In-Domain $\downarrow$} & \multicolumn{2}{c|}{Zero-Shot $\downarrow$} & \multirow{2}{*}{Overlap $\uparrow$} & \multirow{2}{*}{LPPL $\downarrow$} & \multirow{2}{*}{Overlap $\uparrow$} \\
         & WQL  & MASE & WQL & MASE & & & \\
        \midrule
        $0.073$  & $4.409$  & $4.095$  & $3.562$  & $3.435$ & $0.308$ & $3.313$ & $0.227$ \\
        $0.151$  & $1.991$  & $2.412$  & $1.566$  & $1.576$ & $0.717$ & $2.530$ & $0.301$ \\
        $0.237$  & $1.053$  & $1.005$  & $1.030$  & $1.011$ & $0.883$ & $1.652$ & $0.345$ \\
        $0.393$  & $1.009$  & $1.024$  & $0.990$  & $0.994$ & $0.979$ & $1.544$ & $0.568$ \\
        $0.569$  & $1.003$  & $0.952$  & $0.945$  & $0.995$ & $0.997$ & $1.290$ & $0.730$ \\
        $0.755$  & $1.007$  & $1.021$  & $1.027$  & $1.000$ & $0.999$ & $1.085$ & $0.871$ \\
        $0.889$  & $1.000$  & $0.960$  & $1.016$  & $1.001$ & $0.999$ & $1.028$ & $0.954$ \\
        $1.000$  & $1.000$  & $1.000$  & $1.000$  & $1.000$ & $1.000$ & $1.000$ & $1.000$ \\
        \bottomrule
        \end{tabular}}
    \end{minipage}
    \definecolor{red_fig}{HTML}{b22222}
    \definecolor{blue_fig}{HTML}{000080}
    \begin{minipage}{0.47\textwidth}
        \begin{overpic}[width=1\textwidth,trim=14.5cm 9.5cm 14.5cm 10cm, clip]{./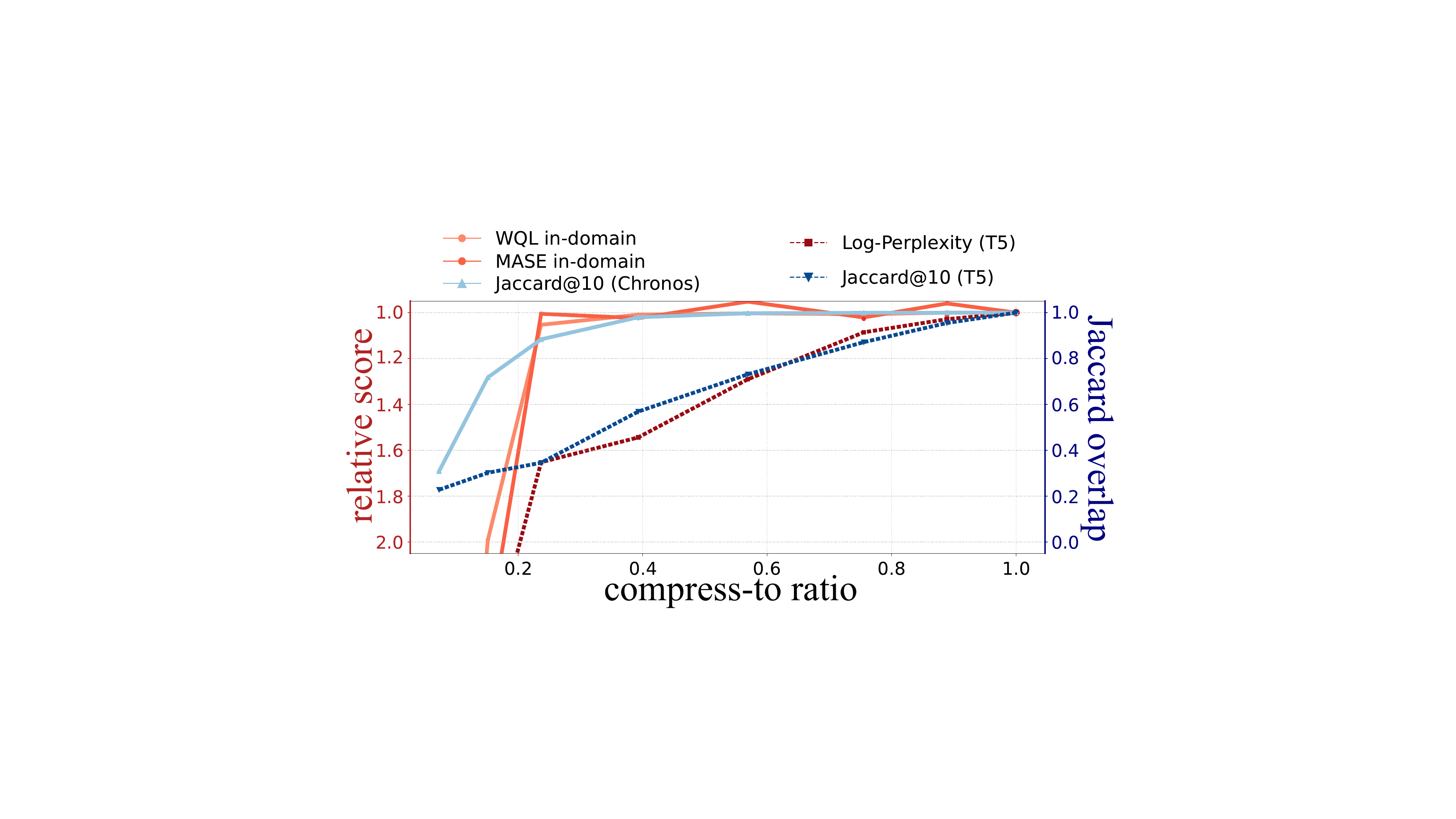}
        \end{overpic}
    \end{minipage}
    \label{tab:compress_pretrained}
    \myvspace{-1.5\baselineskip}
\end{table}

\Cref{tab:compress_pretrained} shows that we can compress the attention matrices up to around $23.7\%$ of the original size without any loss of performance. 
Reducing the size further takes away key information in attention matrices and results in a rapid performance deterioration, leading us to the next compression method.

\textbf{Pretraining a Compressed Model.}~\Cref{tab:compress_pretrained} shows a hard limit of compression: compressing the size to below about $20\%$ makes the performance significantly worse. If one is given enough budget, a more robust method to obtain a smaller TSFM is by pretraining a compressed model.
We parameterize a $d \times d$ attention matrix by a rank-$\tilde{d}$ representation. Driven by ``flow-of-ranks'' from~\cref{sec:flowofranks}, we choose to let $\tilde{d}$ be layer-dependent. In the $i$th layer of the model, we use a $\tilde{d}_i = \tilde{d}(i) = \lceil\tilde{d}_0 (1+i)^\alpha\rceil$, where $\tilde{d}_0 > 0$ and $\alpha \geq 0$ are two hyperparameters. This design is motivated by the fact that the numerical rank of an input to a layer increases as the layer index, and we need attention matrices of higher ranks to capture them (see~\Cref{thm.attentioncompression}). 
In~\Cref{tab:pretrain_compressed}, we see how pretraining a compressed model allows us to expand the time--accuracy Pareto frontier of TSFMs, where numerical accuracies can be found in~\Cref{tab:pretrain_compressed_supp}, which show that pretraining a compressed model is more robust than compressing a pretrained one. In fact, in~\Cref{tab:pretrain_compressed_bolt} of~\Cref{app:compressbolt}, we show that pretraining a compressed Chronos-Bolt allows us to outperform even traditional local methods on \textit{both} time and accuracy.

\begin{figure}[h]
    \centering

    \myvspace{-0.75\baselineskip}

    \begin{minipage}{0.47\textwidth}
        \begin{overpic}[width=1\textwidth]{./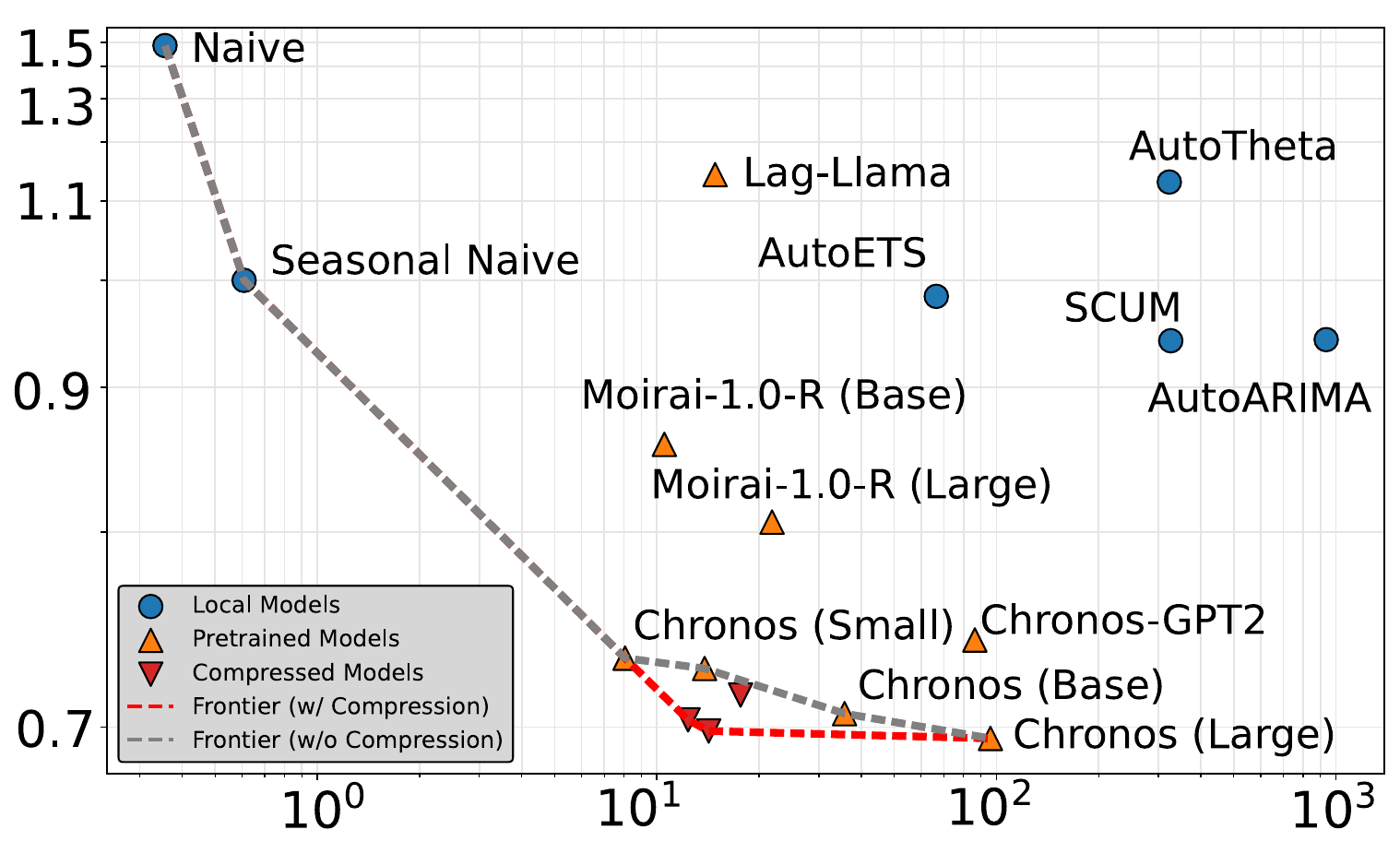}
            \put(-5,26){ \rotatebox{90}{\small MASE}}
            \put(38,-3){\small inference time}
        \end{overpic}
    \end{minipage}
    \hfill
    \begin{minipage}{0.47\textwidth}
        \begin{overpic}[width=1\textwidth]{./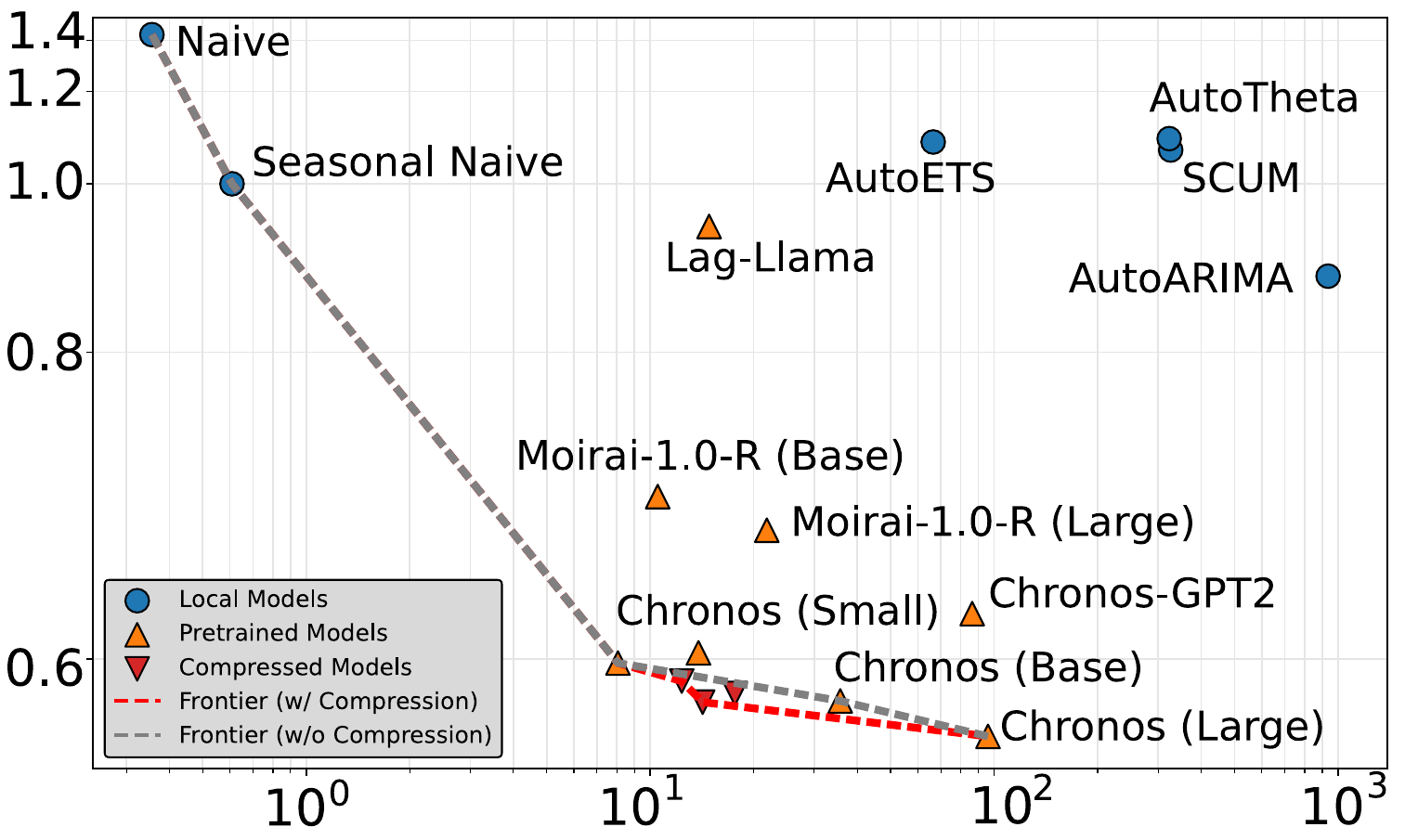}
            \put(-6,27.5){ \rotatebox{90}{\small WQL}}
            \put(38,-3){\small inference time}
        \end{overpic}
    \end{minipage}
    \caption{Results of pretraining a compressed Chronos model.  These results show how pretraining the compressed Chronos model expands the Pareto frontier of time-series modeling. See~\Cref{tab:pretrain_compressed_supp} for more data and results.}
    \label{tab:pretrain_compressed}
    \myvspace{-1.5\baselineskip}
\end{figure}

\section{Conclusion}
\label{sxn:conclusion}

We have developed a data/modality-dependent framework via the lens of rank structure for analyzing the structure of and design decisions for Transformers, and we have applied it to Chronos and Chronos-Bolt, two popular TSFMs.
Our results highlight how properties of the model depend on and interact with properties of the input data, and they lead to concrete principles for the design of models, parameters, and hyperparameters. We illustrate our proof-of-principle results by showing the compressibility of Chronos and Chronos-Bolt in comparison to Transformers trained on text data.

\subsubsection*{Acknowledgments}

The authors would like to thank Yongjun He, Junming Yin, Dmitry Efimov, and Boris Oreshkin for many fruitful discussions on this work.

\bibliography{iclr2026_conference}
\bibliographystyle{iclr2026_conference}

\clearpage

\appendix

\section{Related Work}
\label{app:relatedwork}

\subsection{Time-series Modeling}

Time series data are ubiquitous in many domains, including scientific~\citep{abhishek2012weather,zhang2024zero,lai2025panda}, industrial~\citep{hong2016probabilistic}, and financial applications~\citep{zhang2001nonlinear}, where they facilitate critical tasks, such as forecasting~\citep{hyndman2018forecasting}, imputation~\citep{yoon2018estimating}, and anomaly detection~\citep{blazquez2021review}. Classical time-series forecasting~\citep{hyndman2018forecasting} has deep roots in traditional methods such as ARIMA~\citep{makridakis1997arma} and exponential smoothing~\citep{gardner1985exponential}. Many modern approaches involve deep learning; for example, sequence models such as DeepAR~\citep{salinas2020deepar} popularized probabilistic forecasting at scale, while~\citet{lim2021temporal} combined the attention mechanism with interpretability for multi-horizon tasks. There are other neural forecasters without attention, such as N-BEATS~\citep{oreshkin2019n} and N-HiTS~\citep{challu2023nhits}. An evolving modern alternative that targets long contexts efficiently is the recently developed state-space models~\citep{gu2021efficiently} and Mamba~\citep{gu2023mamba}. These neural-network-based methods are usually considered task-specific --- that is, given a task, one trains a model to obtain a specific set of weights tailored to that task.

\subsection{Time-series Foundation Models}

Recent work pretrains general-purpose time series models across domains and tasks. Examples include TimesFM~\citep{das2024decoder}, Chronos~\citep{ansari2024chronos}, Moirai~\citep{woo2024unified}, and Time MOE~\citep{shi2024time}. These models differ in tokenization choices, training corpora, and zero-shot protocols, but they share the goal of one model that transfers across datasets and horizons.

\subsection{Time-series Tokenization and Embedding}

Designing the input representation is the key to leveraging the high flexibility and expressive power of large Transformers. Patching turns small motifs into tokens, as in~\citet{nie2022time} and~\citet{das2024decoder}. Discrete tokenization via quantization has been explored by~\citet{ansari2024chronos} and~\citet{talukder2024totem}, and other discrete designs include HDT~\citep{feng2025hdt} and vector-quantized methods~\citep{gui2024vector}. Another line of research is by using frequency-based tokenizers, such as WaveToken~\citep{masserano2024enhancing}. Lately, traditional language embedding strategies such as Byte-Pair Encoding are also considered~\citep{gotz2025byte}.

\subsection{Low-rank Structures in Deep Learning}

Low-rank structure shows up in the deep learning community as both an inductive bias~\citep{martin2021implicit,wilson2025deep,yu2024hope} and a compression tool~\citep{sharma2023truth,gu2022parameterization,yu2023robustifying}. For attention, Linformer~\citep{wang2020linformer} and Nystr{\"o}mformer~\citep{xiong2021nystromformer} explicitly approximate self-attention with low-rank projections. There is also a line of research that looks into parameter-efficient finetuning called LoRA~\citep{hu2022lora}, which learns low-rank updates to weight matrices. We note that LoRA is different from our low-rank model compression strategy because LoRA essentially learns a high-rank-plus-low-rank expression of the weight matrices, and does not facilitate storage or inference of a model. Earlier work reduced cost via matrix or tensor factorizations, especially in the CNN community~\citep{denton2014exploiting,jaderberg2014speeding,lebedev2014speeding}. These ideas suggest when low-rank operators suffice for sequence models, while our paper provides clean justifications for why low-rank operators suffice for a time-series foundation model.

\clearpage

\section{Proof of~\Cref{thm.embedding} and~\Cref{cor.MOEembedding}}
\label{app:embedding}

Before we go into the proofs of~\Cref{thm.embedding} and~\Cref{cor.MOEembedding}, we present a figure to illustrate the differences between the quantization-based embedding and continuous embedding discussed in~\Cref{sec:embedding}.

\begin{figure}[!htb]
    \centering
    \vspace{0.5cm}
    \begin{overpic}[width=0.9\linewidth]{./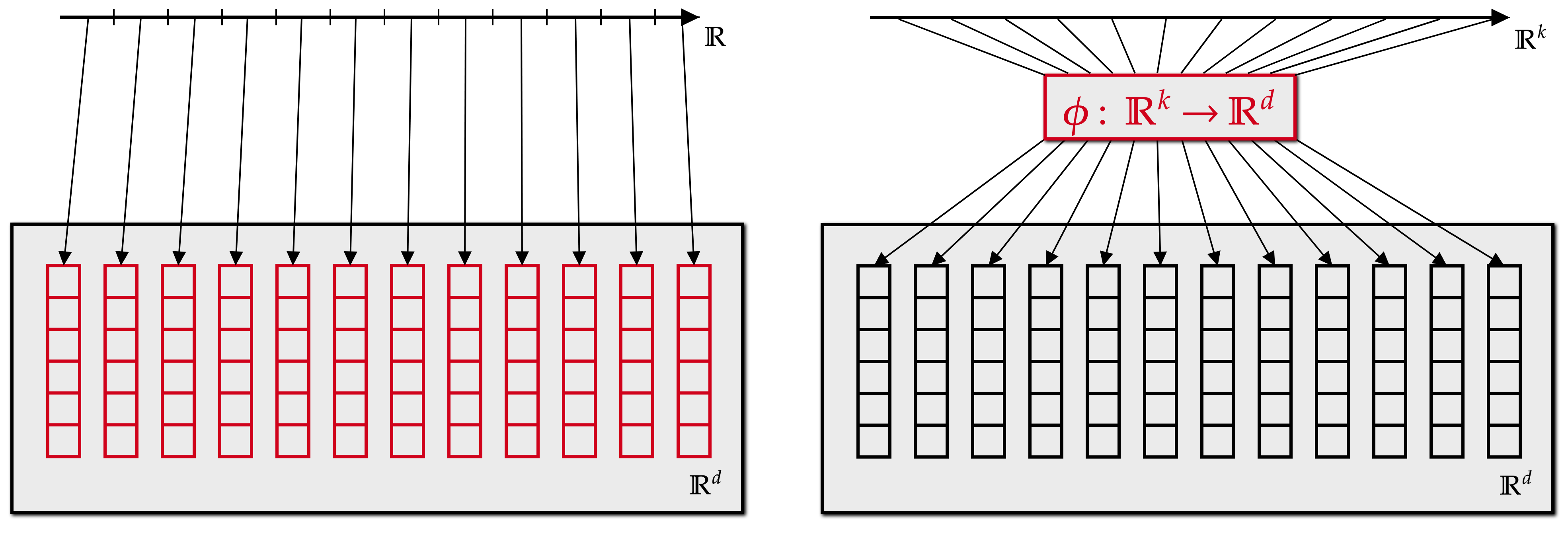}
        \put(4,35){\textbf{Quantization-based Embedding}}
        \put(60.5,35){\textbf{Continuous Embedding}}
    \end{overpic}
    \caption{A comparison of the quantization-based embeddings and continuous embeddings. In a quantization-based one, we discretize the input domain into a number of regions and map each of them to a trainable vector. In a continuous embedding, we use a trainable function (usually an MLP) to map each element into the hidden space $\R^d$. The red parts in the figure highlight the trainable components.}
    \label{fig:embeddingexplain}
\end{figure}

The proof of~\Cref{thm.embedding} relies on classic polynomial approximation results. Intuitively, if a function $\boldsymbol{\phi}$ is well-approximated by a low-degree polynomial, then the low-degree polynomial can be represented with a small basis, spanning a low-dimensional subspace. We now formalize this notion by providing the proof.

\begin{proof}[Proof of~\Cref{thm.embedding}]
    Fix some $1 \leq j \leq d$. Let $P_{i,j}$ be the best degree-$(j-1)$ polynomial approximation of $\phi_i$ in the infinity norm, and let
    \[
        \delta_j = \max_{1 \leq i \leq d} \|\phi_i - P_{i,j}\|_{L^\infty([-1,1])}.
    \]
    Since $P_{i,j}$ is a polynomial of degree $\leq j-1$, we can write it into
    \[
        P_{i,j}(x) = \sum_{k=0}^{j-1} a_{i,k} \, T_k(x),
    \]
    where $T_k$ is the degree-$k$ Chebyshev polynomial. Then, we can write $\boldsymbol{\Xi}$ into
    \begin{equation*}
        \boldsymbol{\Xi} \!=\!\! \begin{bmatrix}
            \;\horzbar & \!\!\!\!P_{1,j}(x)\!\!\!\!\! & \horzbar\; \\
            & \vdots & \\
            \;\horzbar & \!\!\!\!P_{d,j}(x)\!\!\!\!\! & \horzbar\;
        \end{bmatrix} \!\!+\!\! \underbrace{\begin{bmatrix}
            \;\horzbar & \!\!\!\!(\phi_1 \!-\! P_{1,j})(x)\!\!\!\!\! & \horzbar\; \\
            & \vdots & \\
            \;\horzbar & \!\!\!\!(\phi_d \!-\! P_{d,j})(x)\!\!\!\!\! & \horzbar\;
        \end{bmatrix}}_{\mathbf{E}_j} \!\!=\!\!
        \underbrace{
        \begin{bmatrix}
            a_{1,0} & \cdots & a_{1,j-1} \\
            \vdots & \ddots & \vdots \\
            a_{d,0} & \cdots & a_{d,j-1}
        \end{bmatrix}\!\!\!
        \begin{bmatrix}
            \;\horzbar & \!\!\!\!T_0(x)\!\!\!\!\! & \horzbar\; \\
            & \vdots & \\
            \;\horzbar & \!\!\!\!T_{j-1}(x)\!\!\!\!\! & \horzbar\;
        \end{bmatrix}}_{\boldsymbol{\Xi}_j} \!\!+ \mathbf{E}_j.
    \end{equation*}
    Since $\boldsymbol{\Xi}_j$ is a quasimatrix of rank at most $j$, by~\cite[Thm.~4.2]{townsend2015continuous}, we have that
    \begin{equation}\label{eq.singularquasimat}
        \begin{aligned}
            s_{j+1} \leq \|\mathbf{E}_j\|_F = \sqrt{\sum_{i=1}^d \|\phi_i - P_{i,j}\|^2_{L^2([-1,1])}} \leq 2\sqrt{\sum_{i=1}^d \|\phi_i - P_{i,j}\|^2_{L^\infty([-1,1])}} \leq 2\sqrt{d} \,\delta_j.
        \end{aligned}
    \end{equation}
    Similarly, note that
    \begin{equation*}
        \begin{aligned}
            \boldsymbol{\Psi} &= \begin{bmatrix}
            P_{1,j}(x_1) & \cdots &  P_{1,j}(x_L) \\
            \vdots & \ddots & \vdots \\
            P_{d,j}(x_1) & \cdots &  P_{d,j}(x_L)
            \end{bmatrix} + \underbrace{\begin{bmatrix}
                (\phi_1 - P_{1,j})(x_1) & \cdots &  (\phi_1 - P_{1,j})(x_L) \\
            \vdots & \ddots & \vdots \\
            (\phi_d - P_{d,j})(x_1) & \cdots &  (\phi_d - P_{d,j})(x_L)
            \end{bmatrix}}_{\mathbf{F}_j} \\
            &=
            \underbrace{
            \begin{bmatrix}
                a_{1,0} & \cdots & a_{1,j-1} \\
            \vdots & \ddots & \vdots \\
            a_{d,0} & \cdots & a_{d,j-1}
            \end{bmatrix}
            \begin{bmatrix}
                T_0(x_1) & \cdots & T_0(x_L) \\
                \vdots & \ddots & \vdots \\
                T_{j-1}(x_1) & \cdots & T_{j-1}(x_1)
            \end{bmatrix}}_{\boldsymbol{\Psi}_j} + \mathbf{F}_j.
        \end{aligned}
    \end{equation*}
    Since the rank of $\boldsymbol{\Psi}_j$ is at most $j$, by the Eckart--Young inequality, we have that
    \begin{equation}\label{eq.sigmaembed}
        \sigma_{j+1} \leq \|\mathbf{F}_j\|_2 \leq \|\mathbf{F}_j\|_F \leq \sqrt{dL} \delta_j.
    \end{equation}
    From~\cite[Thm.~7.2~\&~8.2]{trefethen2019approximation}, we have that
    \begin{equation}\label{eq.errorCj}
        \delta_j \leq \frac{2V}{\pi \nu (j-1-\nu)^\nu} \qquad \text{ and } \qquad \delta_j \leq \frac{2M \rho^{-j+1}}{\rho - 1}
    \end{equation}
    when $\phi_i$ satisfies the condition in the first and the second statement of the theorem, respectively. Hence, the two statement are proved by combining~\cref{eq.singularquasimat},~(\ref{eq.sigmaembed}), and~(\ref{eq.errorCj}).
\end{proof}

The activation function $\text{swish}_\beta$ used in Time MOE is analytic, and its domain of analyticity increases as $\beta \rightarrow 0^+$. Hence, we can use~\Cref{thm.embedding} to prove that Time MOE's embedding is low-rank.

\begin{cor}\label{cor.MOEembedding}
    The embedding used in Time-MoE, defined by $\phi_i(x) = \text{swish}_\beta(w_i x) \cdot (v_i x) = (w_i v_i x^2) / (1+e^{-\beta x})$, where we assume $|w_i v_i| \leq 1$ and $\beta > 0$, satisfies that
    \[
        \sigma_{j+1} = \mathcal{O}\left(\sqrt{dL} (\beta+\beta^{-1}) (1 + \pi/(2\beta))^{-j+1}\right),
    \]
    where $\sigma_{j+1}$ is defined in~\Cref{thm.embedding} for any $\boldsymbol{\Psi}$ and the constant in the $\mathcal{O}$-notation is universal.
\end{cor}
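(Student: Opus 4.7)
The plan is to apply the second (analytic) case of Theorem~\ref{thm.embedding} with $\rho = 1 + \pi/(2\beta)$, so the main work is to (i) verify that each $\phi_i(x) = w_i v_i x^2/(1+e^{-\beta x})$ extends analytically to the Bernstein ellipse $E_\rho$, and (ii) bound its supremum $M$ on $E_\rho$. The numerator $w_i v_i x^2$ is entire, so the singularities of $\phi_i$ are exactly the zeros of $1+e^{-\beta x}$, which lie at $x = i\pi(2k+1)/\beta$ for $k \in \mathbb{Z}$. The closest to the real axis are $\pm i\pi/\beta$.

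For the choice $\rho = 1 + \pi/(2\beta)$, writing $a = \pi/(2\beta)$, the semi-minor axis of $E_\rho$ is $(\rho - \rho^{-1})/2 = a(2+a)/(2(1+a))$. A direct algebraic check shows $a(2+a)/(2(1+a)) \leq a = \pi/(2\beta) < \pi/\beta$, so the poles $\pm i\pi/\beta$ lie strictly outside $E_\rho$ and $\phi_i$ is analytic on a neighborhood of $E_\rho$. Moreover, writing $x = u + iv \in E_\rho$, the same inequality yields $|\beta v| \leq \pi/2$, so $\cos(\beta v) \geq 0$.

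With $\cos(\beta v) \geq 0$, the identity
\[
    |1 + e^{-\beta x}|^2 = 1 + 2 e^{-\beta u} \cos(\beta v) + e^{-2\beta u}
\]
immediately gives $|1 + e^{-\beta x}| \geq 1$ everywhere on $E_\rho$. Since $|x| \leq (\rho + \rho^{-1})/2 \leq \rho$ and $|w_i v_i| \leq 1$, we conclude $|\phi_i(x)| \leq |x|^2 \leq \rho^2$, so we may take $M = \rho^2 = (1 + \pi/(2\beta))^2$. This is the one delicate step: without the geometric check that the ellipse stays inside the strip $|\text{Im}\, x| \leq \pi/(2\beta)$, one would need a more careful (and worse) lower bound on the denominator.

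Plugging into Theorem~\ref{thm.embedding}(2) with $\rho - 1 = \pi/(2\beta)$ and $M \leq \rho^2$,
\[
    \sigma_{j+1} \leq \frac{2M\sqrt{dL}\,\rho^{-j+1}}{\rho - 1} \leq \frac{4\beta\,\rho^2 \sqrt{dL}\,\rho^{-j+1}}{\pi}.
\]
Expanding $\beta \rho^2 = \beta + \pi + \pi^2/(4\beta) = \mathcal{O}(\beta + \beta^{-1})$ yields
\[
    \sigma_{j+1} = \mathcal{O}\!\left( \sqrt{dL}\,(\beta + \beta^{-1})\,(1 + \pi/(2\beta))^{-j+1} \right),
\]
as claimed. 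The universality of the constant follows because every estimate above used only $|w_i v_i| \leq 1$ and absolute constants.
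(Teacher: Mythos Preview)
Your proof is correct and follows essentially the same route as the paper: apply the analytic case of Theorem~\ref{thm.embedding} with $\rho = 1+\pi/(2\beta)$, locate the poles of $1/(1+e^{-\beta x})$ at $(2k+1)\pi i/\beta$, and bound $|\phi_i|$ on $E_\rho$ by controlling the denominator. Your treatment is in fact slightly sharper and more explicit than the paper's: where the paper asserts $\text{Re}(e^{-\beta x}) > -c$ for some universal $c<1$ (giving $|1+e^{-\beta x}|\geq 1-c$), you verify directly that the semi-minor axis of $E_\rho$ is at most $\pi/(2\beta)$, so $\cos(\beta v)\geq 0$ and hence $|1+e^{-\beta x}|\geq 1$, which is the same geometric fact made fully quantitative.
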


\begin{proof}
    The function $\phi_i(z)$ is a meromorphic function with poles at $\left\{z \mid \text{exp}(-\beta z) = -1\right\} = \{(2k+1)\pi i / \beta \mid k \in \mathbb{Z}\}$. Set $\rho = 1 + \pi/(2\beta)$. Then, within the Bernstein ellipse $E_\rho$ of radius $\rho$, we have that
    \[
        \text{Re}\left(e^{-\beta x}\right) > -c \Rightarrow |\phi_i(x)| \leq \frac{|x|^2}{1-c} = \mathcal{O}(1+\beta^{-2}), \qquad x \in E_\rho,
    \]
    where $0 < c < 1$ is a universal constant and the constant in the $\mathcal{O}$-notation is also universal. The corollary follows from~\Cref{thm.embedding}.
\end{proof}

\clearpage

\section{Visualization of Tokenizers}\label{app:tokenizers}

In~\cref{sec:embedding}, we consider a large corpus of inputs. Here, we perform two case studies to look into how each of the embeddings works. In particular, we select two time-series input data:
\begin{itemize}[leftmargin=*]
    \item A sinusoidal wave $\sin(t)$.
    \item A random Gaussian input, where each entry is i.i.d.\ $\mathcal{N}(0,1)$.
\end{itemize}
Let $\mathbf{X} \in \R^{d \times L}$ be the embedded input. For each pair of vectors $\mathbf{x}_i$ and $\mathbf{x}_j$ of $\mathbf{X}$, we compute the their correlation:
\[
    \theta_{i,j} = \arccos\left(\frac{|\mathbf{x}_i^\top \mathbf{x}_j|}{\|\mathbf{x}_i\|_2 \|\mathbf{x}_j\|_2}\right).
\]
These correlation matrices are shown in~\Cref{fig:tokenizervisualize}. There are two interesting observations to make: first, for WaveToken, the correlation matrix given a sinusoidal input has clearly four blocks --- corresponding to the low-frequency wavelets and high-frequency ones. For Time MOE, the angles are so much darker because we apply a continuous embedding on a one-dimensional input space, leading to an ultimately low-rank structure.

\begin{figure}[!htb]
    \centering

    \begin{minipage}{0.23\textwidth}
        \begin{overpic}[width=1\textwidth, trim=1.5cm 0.5cm 2.2cm 0.5cm, clip]{./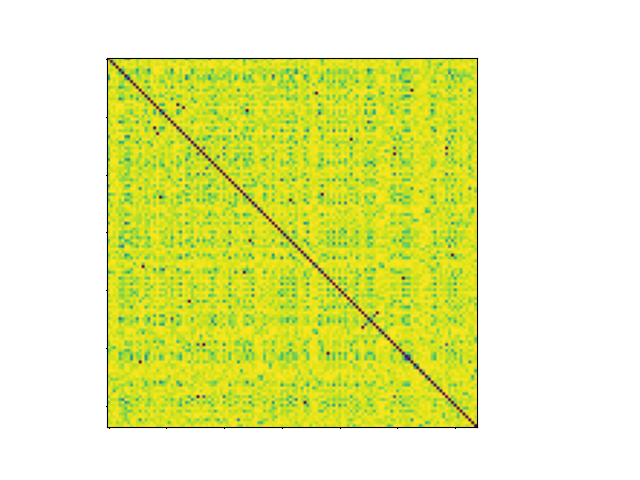}
            \put(-6,25){\rotatebox{90}{\textbf{random}}}
            \put(25,88){\textbf{Chronos}}
        \end{overpic}
    \end{minipage}
    \hfill
    \begin{minipage}{0.23\textwidth}
        \begin{overpic}[width=1\textwidth, trim=1.5cm 0.5cm 2.2cm 0.5cm, clip]{./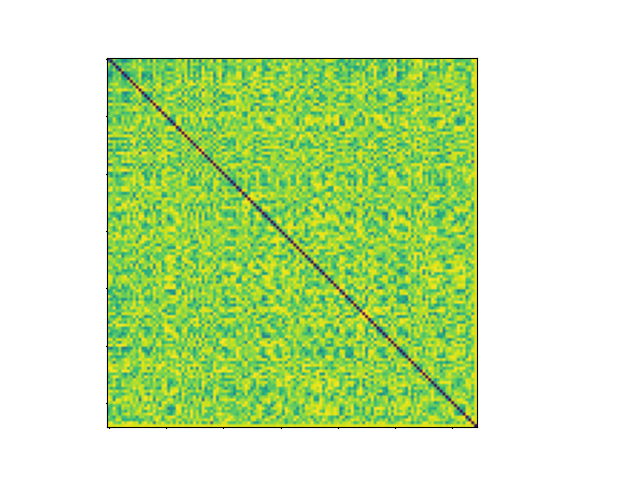}
            \put(14,88){\textbf{Chronos-Bolt}}
        \end{overpic}
    \end{minipage}
    \hfill
    \begin{minipage}{0.23\textwidth}
        \begin{overpic}[width=1\textwidth, trim=1.5cm 0.5cm 2.2cm 0.5cm, clip]{./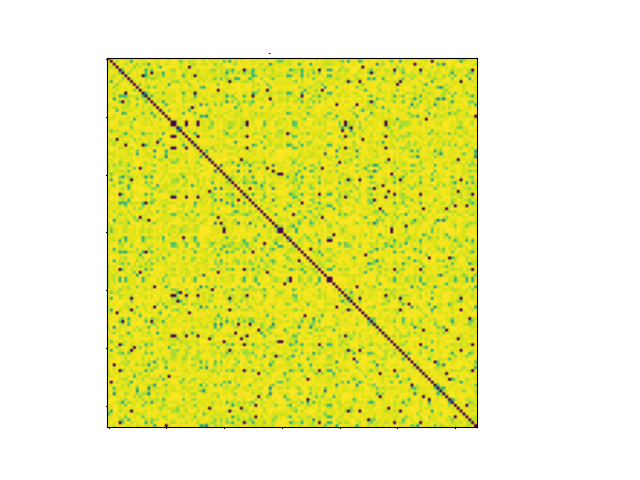}
            \put(18,88){\textbf{WaveToken}}
        \end{overpic}
    \end{minipage}
    \hfill
    \begin{minipage}{0.23\textwidth}
        \begin{overpic}[width=1\textwidth, trim=1.5cm 0.5cm 2.2cm 0.5cm, clip]{./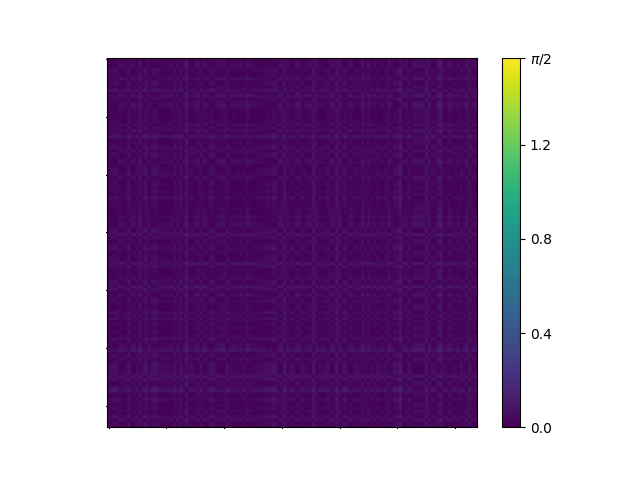}
            \put(20,88){\textbf{Time MOE}}
        \end{overpic}
    \end{minipage}

    \begin{minipage}{0.23\textwidth}
        \begin{overpic}[width=1\textwidth, trim=1.5cm 0.5cm 2.2cm 0.5cm, clip]{./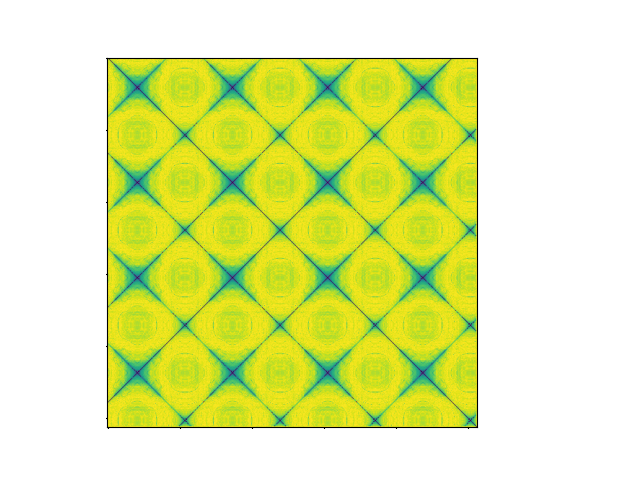}
            \put(-6,21){\rotatebox{90}{\textbf{sinusoidal}}}
        \end{overpic}
    \end{minipage}
    \hfill
    \begin{minipage}{0.23\textwidth}
        \begin{overpic}[width=1\textwidth, trim=1.5cm 0.5cm 2.2cm 0.5cm, clip]{./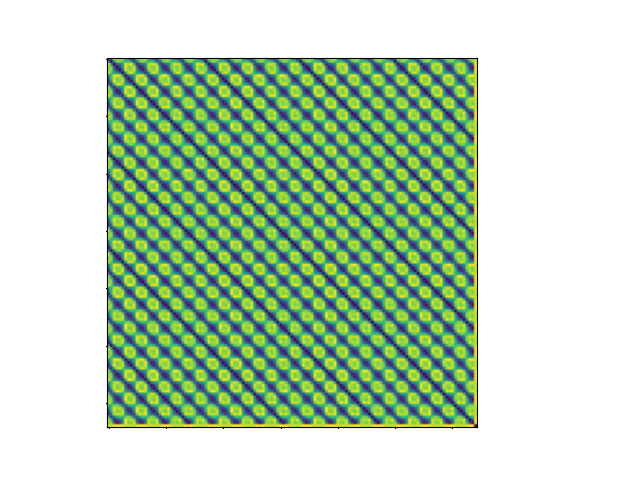}
            
        \end{overpic}
    \end{minipage}
    \hfill
    \begin{minipage}{0.23\textwidth}
        \begin{overpic}[width=1\textwidth, trim=1.5cm 0.5cm 2.2cm 0.5cm, clip]{./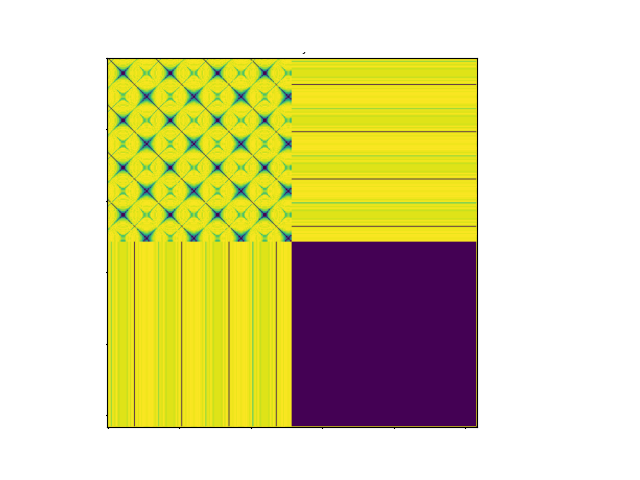}
            
        \end{overpic}
    \end{minipage}
    \hfill
    \begin{minipage}{0.23\textwidth}
        \begin{overpic}[width=1\textwidth, trim=1.5cm 0.5cm 2.2cm 0.5cm, clip]{./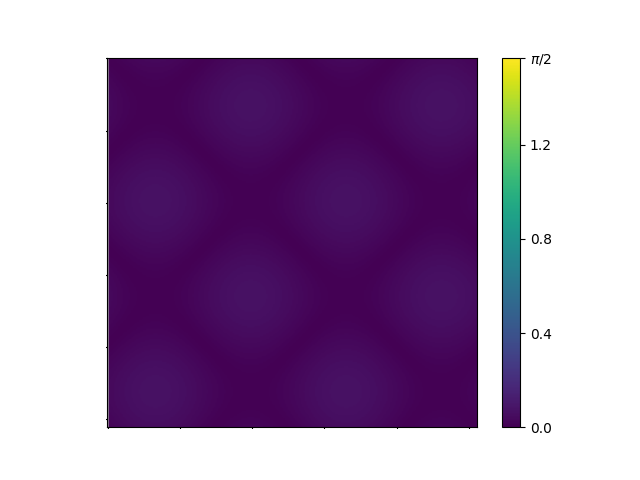}
            
        \end{overpic}
    \end{minipage}

    \vspace{0.5cm}

    \begin{minipage}{0.47\textwidth}
        \begin{overpic}[width=1\textwidth]{./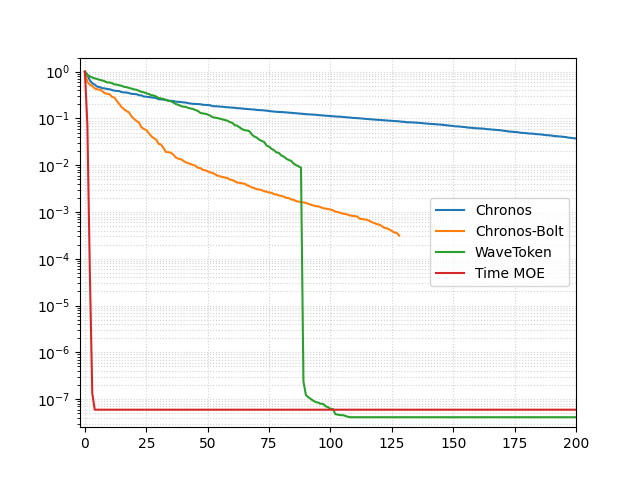}
            \put(4,72){\textbf{SVD of Random Input Embedding}}
            \put(-2,23){\rotatebox{90}{\scriptsize $\sigma_j(\mathbf{X}) / \sigma_1(\mathbf{X})$}}
            \put(52,-1){\scriptsize $j$}
        \end{overpic}
    \end{minipage}
    \hfill
    \begin{minipage}{0.47\textwidth}
        \begin{overpic}[width=1\textwidth]{./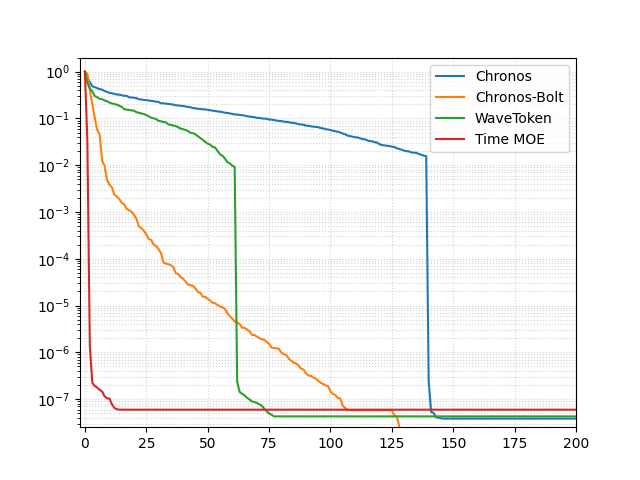}
            \put(4,72){\textbf{SVD of Sinusoidal Input Embedding}}
            \put(-2,23){\rotatebox{90}{\scriptsize $\sigma_j(\mathbf{X}) / \sigma_1(\mathbf{X})$}}
            \put(52,-1){\scriptsize $j$}
        \end{overpic}
    \end{minipage}
    
    \caption{The heat maps show the correlation matrix of the embedded input with a random context or a sinusoidal one, using four different embedding strategies. The two line plots show the relative singular values of the embedded matrix $\mathbf{X}$.}
    \label{fig:tokenizervisualize}
\end{figure}

We also observe that the numerical rank of the embedded input $\mathbf{X}$ is generally higher when the context is random than sinusoidal. This is not surprising either, because the temporal relationship within a random context is much more complex than that in a sinusoidal one.

\clearpage

\section{Proof of~\Cref{thm.boltembedding}}\label{app:boltembedding}

The intuition in~\Cref{thm.boltembedding} is that if the MLP does not have a nonlinear activation function, then the linear MLP clearly maps a low-rank matrix to a low-rank matrix. While a nonlinear activation function complicates things, a contractive function does not increase the Frobenius norm of a matrix, which is equivalent to the $\ell^2$-norm of all singular values.

\begin{proof}[Proof of~\Cref{thm.boltembedding}]
    Since $\mathbf{X}$ is a rank-$k$ matrix, the rank of $\mathbf{W}_1 \mathbf{X}$ is at most $k$. Hence, the Frobenius norm of the matrix satisfies that
    \[
        \|\mathbf{W}_1 \mathbf{X}\|^2_F = \sum_{j=1}^k \sigma_j(\mathbf{W}_1 \mathbf{X})^2 \leq k\, \sigma_1(\mathbf{W}_1 \mathbf{X})^2.
    \]
    Given that $|\omega(x)| \leq |x|$ for every $x \in \R$, we have
    \[
        k\,\sigma_1(\mathbf{W}_1 \mathbf{X})^2 \geq \|\mathbf{W}_1 \mathbf{X}\|^2_F \geq \|\omega(\mathbf{W}_1 \mathbf{X})\|^2_F = \sum_{j=1}^{\min(d_f, L)} \sigma_j(\omega(\mathbf{W}_1 \mathbf{X}))^2.
    \]
    Using the singular value inequalities, we have that
    \begin{equation*}
        \begin{aligned}
            \sum_{j=1}^{\min(d_f, L)} \!\!\sigma_j(\mathbf{W}_2 \,\omega(\mathbf{W}_1 \mathbf{X}))^2 &\leq \sigma_1(\mathbf{W}_2)^2 \sum_{j=1}^{\min(d_f, L)} \sigma_j(\omega(\mathbf{W}_1 \mathbf{X}))^2 \\
            &\leq k\,\sigma_1(\mathbf{W}_2)^2\;\sigma_1(\mathbf{W}_1 \mathbf{X})^2 = k \|\mathbf{W}_2\|_2^2 \|\mathbf{W}_1 \mathbf{X}\|_2^2.
        \end{aligned}
    \end{equation*}
    This is, we can control the number of large singular values of $\mathbf{W}_2 \,\omega(\mathbf{W}_1 \mathbf{X})$ by
    \[
        \left| \left\{j\, \middle|\, \sigma_j(\mathbf{W}_2 \,\omega(\mathbf{W}_1 \mathbf{X})) > \varepsilon \|\mathbf{W}_2\|_2 \|\mathbf{W}_1 \mathbf{X}\|_2 \right\}\right| \leq \varepsilon^{-2} k.
    \]
    Moreover, the rank of the matrix $\mathbf{W}_3 \mathbf{X}$ is at most $k$. Using the singular value inequality that
    \[
        \sigma_{i+j-1}(\mathbf{A} + \mathbf{B}) \leq \sigma_i(\mathbf{A}) + \sigma_j(\mathbf{B}), \qquad \mathbf{A}, \mathbf{B} \in \R^{d \times L}, \quad i+j-1 \leq \min(d,L),
    \]
    we have
    \[
        \sigma_{k+j}(\boldsymbol{\phi}(\mathbf{X})) \leq \sigma_{k+1}(\mathbf{W}_3 \mathbf{X}) + \sigma_j(\mathbf{W}_2 \,\omega(\mathbf{W}_1 \mathbf{X})) = \sigma_j(\mathbf{W}_2 \,\omega(\mathbf{W}_1 \mathbf{X})), \quad j \leq \min(d,L) - k.
    \]
    This gives us
    \begin{equation*}
        \begin{aligned}
            \left| \left\{j\, \middle|\, \sigma_j(\boldsymbol{\phi}(\mathbf{X})) > \varepsilon \|\mathbf{W}_2\|_2 \|\mathbf{W}_1 \mathbf{X}\|_2 \right\}\right| &\leq \left| \left\{j\, \middle|\, \sigma_j(\mathbf{W}_2 \,\omega(\mathbf{W}_1 \mathbf{X})) > \varepsilon \|\mathbf{W}_2\|_2 \|\mathbf{W}_1 \mathbf{X}\|_2 \right\}\right| + k \\
            &\leq (1+\varepsilon^{-2}) k,
        \end{aligned}
    \end{equation*}
    which proves the result.
\end{proof}

\clearpage

\section{Proof of~\Cref{thm.attentioncompression}}\label{app:attcompress}

To prove the upper bound in~\Cref{thm.attentioncompression}, we will prove a stronger result concerning not only a single-head attention but also a multi-head one.

\begin{thm}\label{thm.multiheadcompression}
    Let $C > 0$ be a constant. Let $\boldsymbol{\Xi} = \begin{bmatrix} \mathbf{x}_1 & \cdots & \mathbf{x}_N \end{bmatrix} \in \R^{d \times N}$ be an embedding matrix, where $d$ is the hidden dimension, $N$ is the vocabulary size, $\mathbf{x}_j \in \R^{d}$, and $\|\mathbf{x}_j\|_2 \leq C$ for all $1 \leq j \leq N$. Let $\mathbf{W}_Q, \mathbf{W}_K, \mathbf{W}_V \in \R^{d \times d}$ be multi-head attention matrices with $h$ heads, such that $\|(\mathbf{W}^{(i)}_Q)^\top \mathbf{W}^{(i)}_K\|_2 \leq C\sqrt{d_h}$, and $\|\mathbf{W}^{(i)}_V\|_2 \leq C\sqrt{d_h}$ for every $1 \leq i \leq h$. For any $\tilde{d} < d$ such that $\sigma_{\tilde{d}+1} := \sigma_{\tilde{d}+1}(\boldsymbol{\Xi}) \leq 1$, there exists a stable low-rank approximation $\tilde{\mathbf{W}}_Q, \tilde{\mathbf{W}}_K, \tilde{\mathbf{W}}_V \in \R^{d \times d}$ with $\text{rank}(\tilde{\mathbf{W}}_Q) = \text{rank}(\tilde{\mathbf{W}}_K) = \text{rank}(\tilde{\mathbf{W}}_V) = \tilde{d}$, $\|\tilde{\mathbf{W}}_Q^\top \tilde{\mathbf{W}}_K\|_2 \leq \|{\mathbf{W}}_Q^\top {\mathbf{W}}_K\|_2$, and $\|\tilde{\mathbf{W}}_V\|_2 \leq \|{\mathbf{W}}_V\|_2$, such that given any input matrix $\mathbf{U} \in \R^{d \times L}$ for any $L \geq 1$, where each column of $\mathbf{U}$ is a column of $\boldsymbol{\Xi}$, we have that the low-rank attention matrices uniformly approximate the original one:
        \begin{equation}\label{eq.lowrankattentiongeneral}
            \left\| \text{MH-Attention}(\mathbf{U}; \!\mathbf{W}_Q, \!\mathbf{W}_K, \!\mathbf{W}_V, h) \!-\! \text{MH-Attention}(\mathbf{U}; \!\tilde{\mathbf{W}}_Q, \!\tilde{\mathbf{W}}_K, \!\tilde{\mathbf{W}}_V) \right\|_F \!\leq\! \mathcal{O}\left(\!\sqrt{d}\,\sigma_{\tilde{d}+1}\!\right),
        \end{equation}
        where the constant in the $\mathcal{O}$-notation only depends on $C$.
\end{thm}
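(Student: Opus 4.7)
The plan is a projection-based construction. Take an SVD $\boldsymbol{\Xi} = \mathbf{A}\mathbf{\Sigma}\mathbf{B}^\top$ and let $\mathbf{P} = \mathbf{A}_{\tilde{d}}\mathbf{A}_{\tilde{d}}^\top$ be the rank-$\tilde{d}$ orthogonal projection onto the leading left singular subspace, so that $\|(\mathbf{I}-\mathbf{P})\boldsymbol{\Xi}\|_2 = \sigma_{\tilde{d}+1}$ and, columnwise, $\|(\mathbf{I}-\mathbf{P})\mathbf{u}_j\|_2 \leq \sigma_{\tilde{d}+1}$ for every column of $\mathbf{U}$ (since each such column equals a column of $\boldsymbol{\Xi}$). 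Define $\tilde{\mathbf{W}}_Q := \mathbf{W}_Q\mathbf{P}$, $\tilde{\mathbf{W}}_K := \mathbf{W}_K\mathbf{P}$, $\tilde{\mathbf{W}}_V := \mathbf{W}_V\mathbf{P}$; because this amounts to pre-composing every head with the same input-side projection, the per-head blocks $\tilde{\mathbf{W}}_{Q/K/V}^{(i)} = \mathbf{W}_{Q/K/V}^{(i)}\mathbf{P}$ inherit the same factorization and all have rank at most $\tilde{d}$. Using $\|\mathbf{P}\|_2 = 1$ and submultiplicativity immediately yields the required stability bounds $\|\tilde{\mathbf{W}}_Q^\top\tilde{\mathbf{W}}_K\|_2 = \|\mathbf{P}\mathbf{W}_Q^\top\mathbf{W}_K\mathbf{P}\|_2 \leq \|\mathbf{W}_Q^\top\mathbf{W}_K\|_2$ and $\|\tilde{\mathbf{W}}_V\|_2 \leq \|\mathbf{W}_V\|_2$, both globally and head-wise.

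For the approximation bound, fix a head $i$ and let $\mathbf{S}^{(i)}$, $\tilde{\mathbf{S}}^{(i)}$ denote the scaled logit matrices formed with the original and projected weights. The per-head error splits telescopically as
\begin{equation*}
\mathbf{W}_V^{(i)}\mathbf{U}\,\text{softmax}(\mathbf{S}^{(i)}) - \tilde{\mathbf{W}}_V^{(i)}\mathbf{U}\,\text{softmax}(\tilde{\mathbf{S}}^{(i)}) = \mathbf{W}_V^{(i)}(\mathbf{I}-\mathbf{P})\mathbf{U}\,\text{softmax}(\mathbf{S}^{(i)}) + \tilde{\mathbf{W}}_V^{(i)}\mathbf{U}\bigl(\text{softmax}(\mathbf{S}^{(i)}) - \text{softmax}(\tilde{\mathbf{S}}^{(i)})\bigr).
\end{equation*}
The value-side term is benign: each column of $\mathbf{W}_V^{(i)}(\mathbf{I}-\mathbf{P})\mathbf{U}$ has $\ell_2$-norm at most $C\sqrt{d_h}\,\sigma_{\tilde{d}+1}$, and column-stochasticity of softmax preserves this bound under convex combinations. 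For the second, more delicate, term I would first bound $\mathbf{S}^{(i)} - \tilde{\mathbf{S}}^{(i)}$ by decomposing $\mathbf{W}_Q^{(i)\top}\mathbf{W}_K^{(i)} - \mathbf{P}\mathbf{W}_Q^{(i)\top}\mathbf{W}_K^{(i)}\mathbf{P} = (\mathbf{I}-\mathbf{P})\mathbf{W}_Q^{(i)\top}\mathbf{W}_K^{(i)} + \mathbf{P}\mathbf{W}_Q^{(i)\top}\mathbf{W}_K^{(i)}(\mathbf{I}-\mathbf{P})$; after the sandwich $\mathbf{U}^\top(\cdot)\mathbf{U}/\sqrt{d_h}$ each summand isolates an $(\mathbf{I}-\mathbf{P})\mathbf{U}$ factor, and combining with $\|\mathbf{W}_Q^{(i)\top}\mathbf{W}_K^{(i)}\|_2 \leq C\sqrt{d_h}$ and $\|\mathbf{u}_j\|_2 \leq C$ gives a per-entry bound of order $C^3\,\sigma_{\tilde{d}+1}$ that is independent of $d_h$.

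The main obstacle is precisely the softmax perturbation: softmax is not globally Lipschitz in any norm with a dimension-free $\ell_2$-to-$\ell_2$ constant, so the choice of norms matters. I would use the absolute $\ell_\infty$-to-$\ell_1$ Jacobian bound $\|\text{softmax}(\mathbf{s}) - \text{softmax}(\tilde{\mathbf{s}})\|_1 \leq 2\|\mathbf{s} - \tilde{\mathbf{s}}\|_\infty$ applied columnwise, yielding $\mathcal{O}(C^3\,\sigma_{\tilde{d}+1})$ per column in $\ell_1$; pre-multiplication by $\tilde{\mathbf{W}}_V^{(i)}\mathbf{U}$, whose columns have $\ell_2$-norm $\mathcal{O}(\sqrt{d_h})$, converts this into an $\mathcal{O}(\sqrt{d_h}\,\sigma_{\tilde{d}+1})$ per-column contribution. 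Concatenating the $h$ head outputs and using $d = h\,d_h$ then delivers the claimed $\mathcal{O}(\sqrt{d}\,\sigma_{\tilde{d}+1})$, with constants depending only on $C$; the single-head statement of~\Cref{thm.attentioncompression} is the $h=1$ special case. The key quantitative point underpinning this plan is that the hypothesis $\|\mathbf{W}_Q^\top\mathbf{W}_K\|_2 \leq C\sqrt{d}$ is tailored to cancel the $1/\sqrt{d_h}$ normalization inside softmax, keeping the logits on an $\mathcal{O}(1)$ scale and preventing the effective Lipschitz constant of softmax from degrading with the head dimension.
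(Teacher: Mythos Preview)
Your construction (right-multiply each weight by the rank-$\tilde d$ projector $\mathbf{P}=\mathbf{A}_{\tilde d}\mathbf{A}_{\tilde d}^\top$ onto the leading left singular space of $\boldsymbol{\Xi}$), the stability verification via $\|\mathbf{P}\|_2=1$, and the telescoping split into a value-side term and a softmax-perturbation term are exactly what the paper does. The only substantive difference is how the softmax perturbation is controlled: the paper works entrywise, using that bounded logits force every softmax entry into $[e^{-2}/L,\,e^2/L]$ and every entrywise difference $d_j^t$ into $O(\sigma_{\tilde d+1}/L)$; you instead invoke the columnwise $\ell_\infty\!\to\!\ell_1$ Lipschitz bound $\|\text{softmax}(\mathbf{s})-\text{softmax}(\tilde{\mathbf{s}})\|_1\le 2\|\mathbf{s}-\tilde{\mathbf{s}}\|_\infty$. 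At the level of a single output column both routes deliver the same $O(\sqrt{d_h}\,\sigma_{\tilde d+1})$ estimate, and your route is cleaner.

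There is, however, one step you have glossed over. Your argument produces a per-column bound of $O(\sqrt{d_h}\,\sigma_{\tilde d+1})$ per head, hence $O(\sqrt{d}\,\sigma_{\tilde d+1})$ after stacking the $h$ heads; but the theorem asks for a \emph{Frobenius} bound over all $L$ output columns, and summing squares of per-column norms would introduce an extra $\sqrt{L}$ factor, yielding $O(\sqrt{dL}\,\sigma_{\tilde d+1})$ rather than the claimed $O(\sqrt{d}\,\sigma_{\tilde d+1})$. Your $\ell_1$-aggregated softmax bound discards precisely the fine structure the paper leans on for the $L$-cancellation: it is the entrywise $O(1/L)$ size of both the softmax weights $|\text{softmax}(\tilde g_j^t)|=\Theta(1/L)$ and their perturbations $|d_j^t|=O(\sigma_{\tilde d+1}/L)$---not merely their $\ell_1$ totals---that the paper uses inside the sum over $t$ to offset the column count. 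If you want to recover the paper's stated $L$-independent bound, you need to retain and use this per-entry $O(1/L)$ control rather than immediately collapsing it to an $\ell_1$ norm; your closing remark about the logits staying $O(1)$ addresses $d_h$-independence of the Lipschitz constant but not this $L$-dependence issue.
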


\begin{proof}
    Fix a $\tilde{d} < d$. For the sake of simplicity, we assume, without loss of generality, that $C = 1$. Let
    \[
        \boldsymbol{\Xi} = \mathbf{U}_{\tilde{d}} \mathbf{S}_{\tilde{d}} \mathbf{V}_{\tilde{d}}^\top, \qquad \mathbf{U}_{\tilde{d}} \in \R^{d \times \tilde{d}},\quad \mathbf{S}_{\tilde{d}} \in \R^{\tilde{d} \times \tilde{d}},\quad \mathbf{V}_{\tilde{d}} \in \R^{N \times \tilde{d}}
    \]
    be the truncated SVD of $\boldsymbol{\Xi}$. Let $\mathbf{Q}_{\tilde{d}} = \mathbf{S}_{\tilde{d}} \mathbf{V}_{\tilde{d}}^\top \in \R^{\tilde{d} \times N}$.
    Therefore, we have
    \[
        \|\boldsymbol{\Xi} - \mathbf{U}_{\tilde{d}}\mathbf{Q}_{\tilde{d}}\|_2 \leq \sigma_{\tilde{d}+1}(\boldsymbol{\Xi}).
    \]
    Define $\tilde{\mathbf{W}}_Q, \tilde{\mathbf{W}}_K$, and $\tilde{\mathbf{W}}_V$ by
    \[
        \tilde{\mathbf{W}}_Q = \mathbf{W}_Q \mathbf{U}_{\tilde{d}} \mathbf{U}_{\tilde{d}}^\top \in \R^{{d} \times d}, \qquad \tilde{\mathbf{W}}_K = \mathbf{W}_K \mathbf{U}_{\tilde{d}} \mathbf{U}_{\tilde{d}}^\top \in \R^{{d} \times d}, \qquad \tilde{\mathbf{W}}_V = \mathbf{W}_V \mathbf{U}_{\tilde{d}} \mathbf{U}_{\tilde{d}}^\top \in \R^{d \times d} .
    \]
    Since $\mathbf{U}_{\tilde{d}}$ has orthonormal columns,
    we have that $\|\mathbf{U}_{\tilde{d}}\|_2 = \|\mathbf{U}_{\tilde{d}}^\top\|_2 = 1$. By the sub-multiplicity of the spectral norm, we have that
    \[
        \|\tilde{\mathbf{W}}_Q^\top \tilde{\mathbf{W}}_K\|_2 \leq \|{\mathbf{W}}_Q^\top {\mathbf{W}}_K\|_2, \qquad \|\tilde{\mathbf{W}}_V\|_2 \leq \|{\mathbf{W}}_V\|_2,
    \]
    which proves the stability of the low-rank representation. Let $\mathbf{Q}_U \in \R^{\tilde{d} \times L}$ be the matrix defined by the condition that the $i$th column of $\mathbf{Q}_U$ is the $j$th column of $\mathbf{Q}_{\tilde{d}}$ if and only if the $i$th column of $\mathbf{U}$ is the $j$th column of $\boldsymbol{\Xi}$, and let
    \[
        \boldsymbol{\Delta}_U = \mathbf{U}_{\tilde{d}} \mathbf{Q}_U - \mathbf{U}.
    \]
    Since every column of $\boldsymbol{\Delta}_U$ is a column of $\boldsymbol{\Xi} - \mathbf{U}_{\tilde{d}} \mathbf{Q}_{\tilde{d}}$, its norm is no greater than $\sigma_{\tilde{d}+1}(\boldsymbol{\Xi})$, i.e., 
    \begin{equation}\label{eq.DX}
    \begin{aligned}
        &\|\boldsymbol{\Delta}_U(:,j)\|_2 \leq \sigma_{\tilde{d}+1}(\boldsymbol{\Xi}), \qquad 1 \leq j \leq L, \\
        &\|\boldsymbol{\Delta}_U\|_2 \leq \|\boldsymbol{\Delta}_U\|_F \leq \sqrt{L} \sigma_{\tilde{d}+1}(\boldsymbol{\Xi}).
    \end{aligned}
    \end{equation}
    Next, we have
    \begin{equation*}
        \begin{aligned}
            \|\boldsymbol{\Delta}^{(i)}_V\|_2 &\leq \|\boldsymbol{\Delta}^{(i)}_U\|_2 \leq \sqrt{d_hL} \; \sigma_{\tilde{d}+1}(\boldsymbol{\Xi}), \\ 
            \boldsymbol{\Delta}_V^{(i)} :\!&= \mathbf{W}^{(i)}_V \mathbf{U} - \tilde{\mathbf{W}}^{(i)}_V \mathbf{U} = \mathbf{W}^{(i)}_V \mathbf{U} - \mathbf{W}^{(i)}_V \mathbf{U}_{\tilde{d}} \mathbf{U}_{\tilde{d}}^\top \mathbf{U} \\
            &= \mathbf{W}^{(i)}_V \mathbf{U} - \mathbf{W}^{(i)}_V \mathbf{U}_{\tilde{d}} \mathbf{U}_{\tilde{d}}^\top \mathbf{U}_{\tilde{d}} \mathbf{Q}_U \\
            &= \mathbf{W}^{(i)}_V \mathbf{U} - \mathbf{W}^{(i)}_V \mathbf{U}_{\tilde{d}} \mathbf{Q}_U = -\mathbf{W}^{(i)}_V \boldsymbol{\Delta}_U,
        \end{aligned}
    \end{equation*}
    where the first inequality comes from the sub-multiplicity of the spectral norm.
    Moreover, every column of $\boldsymbol{\Delta}_V^{(i)}$ is a column of $-\mathbf{W}_V^{(i)} \boldsymbol{\Delta}_U$ and must satisfy that
    \begin{equation}\label{eq.Vdiff}
        \|\boldsymbol{\Delta}^{(i)}_V(:,j)\|_2 \leq \sqrt{d_h}\;\sigma_{\tilde{d}+1}(\boldsymbol{\Xi}), \qquad 1 \leq j \leq L.
    \end{equation}
    Similarly, we have that
    \begin{equation*}
        \begin{aligned}
            \|\boldsymbol{\Delta}^{(i)}_Q\|_2 &\leq \sqrt{d_h}\,L (2\sigma_{\tilde{d}+1}(\boldsymbol{\Xi}) + \sigma_{\tilde{d}+1}(\boldsymbol{\Xi})^2), \\ 
            \boldsymbol{\Delta}^{(i)}_Q :\!&= \mathbf{U}^\top (\mathbf{W}_Q^{(i)})^\top {\mathbf{W}}_K^{(i)} \mathbf{U} - \mathbf{U}^\top (\tilde{\mathbf{W}}_Q^{(i)})^\top \tilde{\mathbf{W}}_K^{(i)} \mathbf{U} \\
            &= \mathbf{U}^\top (\mathbf{W}_Q^{(i)})^\top {\mathbf{W}}_K^{(i)} \mathbf{U} - \mathbf{U}^\top \mathbf{U}_{\tilde{d}} \mathbf{U}_{\tilde{d}}^\top (\mathbf{W}_Q^{(i)})^\top {\mathbf{W}}_K^{(i)} \mathbf{U}_{\tilde{d}} \mathbf{U}_{\tilde{d}}^\top \mathbf{U} \\
            &= \mathbf{U}^\top (\mathbf{W}_Q^{(i)})^\top {\mathbf{W}}_K^{(i)} \mathbf{U} - \mathbf{Q}_X^\top \mathbf{U}_{\tilde{d}}^\top (\mathbf{W}_Q^{(i)})^\top {\mathbf{W}}_K^{(i)} \mathbf{U}_{\tilde{d}} \mathbf{Q}_U \\
            &= -\boldsymbol{\Delta}_U^\top (\mathbf{W}_Q^{(i)})^\top {\mathbf{W}}_K^{(i)} \mathbf{U} \!-\! \mathbf{U}^\top (\mathbf{W}_Q^{(i)})^\top {\mathbf{W}}_K^{(i)} \boldsymbol{\Delta}_U \!-\! \boldsymbol{\Delta}_U^\top (\mathbf{W}_Q^{(i)})^\top {\mathbf{W}}_K^{(i)} \boldsymbol{\Delta}_U,
        \end{aligned}
    \end{equation*}
    where the inequality is obtained by recalling that $\|\mathbf{U}\|_2 \leq \sqrt{L}$ and $\|\boldsymbol{\Delta}_U\|_2 \leq \sqrt{L} \; \sigma_{\tilde{d}+1}(\boldsymbol{\Xi})$. Moreover, since we have $\|\mathbf{U}(:,j)\|_2 \leq 1$ and $\|\boldsymbol{\Delta}_U(:,j)\|_2 \leq \sigma_{\tilde{d}+1}(\boldsymbol{\Xi})$ for all $1 \leq j \leq N$, every entry of $\boldsymbol{\Delta}^{(i)}_Q$ satisfies that
    \begin{equation}\label{eq.Qdiff}
        \|\boldsymbol{\Delta}^{(i)}_Q(t,j)\|_2 \leq \sqrt{d_h} (2\sigma_{\tilde{d}+1}(\boldsymbol{\Xi}) + \sigma_{\tilde{d}+1}(\boldsymbol{\Xi})^2), \qquad 1 \leq t \leq L, \quad 1 \leq j \leq L.
    \end{equation}
    For each fixed $1 \leq i \leq h$, define the notations
    \[
        \begin{split}
        \mathbf{G}^{(i)} &= \frac{\mathbf{U}^\top ({\mathbf{W}}^{(i)}_Q)^\top \mathbf{W}_K^{(i)} \mathbf{U}}{\sqrt{d_h}},\\
        \boldsymbol{g}^{(i)}_j &= \mathbf{G}(:,j),
        \end{split}
        \qquad
        \begin{split}
            \tilde{\mathbf{G}}^{(i)} &= \frac{\mathbf{U}^\top (\tilde{\mathbf{W}}^{(i)}_Q)^\top \tilde{\mathbf{W}}_K \mathbf{U}}{\sqrt{{d_h}}},\\
            \tilde{\boldsymbol{g}}^{(i)}_j &= \tilde{\mathbf{G}}(:,j).
        \end{split}
    \]
    For simplicity, we drop the superscripts $(i)$ on $\mathbf{G}$, $\tilde{\mathbf{G}}$, $\boldsymbol{g}$, and $\tilde{\boldsymbol{g}}$. Since we assumed $\|\mathbf{u}_j\|_2 \leq 1$ 
    for all $1 \leq j \leq N$ and $\|(\mathbf{W}_Q^{(i)})^\top \mathbf{W}_K^{(i)}\|_2 \leq \sqrt{d_h}$, we have that
    \[
        \|\boldsymbol{g}_j\|_{\max} \leq 1, \qquad \|\tilde{\boldsymbol{g}}_j\|_{\max} \leq 1, \qquad 1 \leq j \leq L.
    \]
    Denote by $g_j^t$ and $\tilde{g}_j^t$ the $t$th entry of $\boldsymbol{g}_j$ and $\tilde{\boldsymbol{g}}_j$, respectively. For every fixed $1 \leq j \leq L$ and $1 \leq t \leq L$, 
    we have 
    \begin{equation}\label{eq.DQ}
        \begin{aligned}
            &|\underbrace{\text{softmax}(\boldsymbol{g}_j)^t - \text{softmax}(\tilde{\boldsymbol{g}}_j)^t}_{d_j^t}| = \left|\frac{\text{exp}(g_j^t)}{\sum_{k=1}^L \text{exp}(g_j^k)} - \frac{\text{exp}(\tilde{g}_j^t)}{\sum_{k=1}^L \text{exp}(\tilde{g}_j^k)}\right| \\
            &= \frac{\left|\sum_{k=1}^L \left(\text{exp}(g_j^t) \text{exp}(\tilde{g}_j^k) - \text{exp}(\tilde{g}_j^t) \text{exp}({g}_j^k)\right)\right|}{\left|\left(\sum_{k=1}^L \text{exp}(g_j^k)\right)\left(\sum_{k=1}^L \text{exp}(\tilde{g}_j^k)\right)\right|} \leq \frac{L \max_k \left|\text{exp}(g_j^t) \text{exp}(\tilde{g}_j^k) - \text{exp}(\tilde{g}_j^t) \text{exp}({g}_j^k)\right|}{L^2\; \text{exp}(-2)} \\
            &= \frac{L \max_k \left|\text{exp}(g_j^t) (\text{exp}({g}_j^k) \!+\! (\text{exp}(\tilde{g}_j^k) \!-\! \text{exp}({g}_j^k))) - (\text{exp}({g}_j^t) \!+\! (\text{exp}(\tilde{g}_j^t) \!-\! \text{exp}({g}_j^t))) \text{exp}({g}_j^k)\right|}{L^2\; \text{exp}(-2)} \\
            &\leq\! \frac{\max_k\!\! \left(\text{exp}(1)\! \left(\!|\text{exp}({g}_j^k) \!-\! \text{exp}(\tilde{g}_j^k)| \!\!+\!\! |\text{exp}({g}_j^t) \!-\! \text{exp}(\tilde{g}_j^t)|\!\right) \!\!+\!\! |\text{exp}({g}_j^k) \!-\! \text{exp}(\tilde{g}_j^k)| \!\,|\text{exp}({g}_j^t) \!-\! \text{exp}(\tilde{g}_j^t)| \!\right)}{L\; \text{exp}(-2)} \\
            &\leq \underbrace{\frac{2\,\text{exp}(1) (2\sigma_{\tilde{d}+1}(\boldsymbol{\Xi}) + \sigma_{\tilde{d}+1}(\boldsymbol{\Xi})^2) + (2\sigma_{\tilde{d}+1}(\boldsymbol{\Xi}) + \sigma_{\tilde{d}+1}(\boldsymbol{\Xi})^2)^2 }{L \text{exp}(-2)}}_{D_Q},
        \end{aligned}
    \end{equation}
    where the last inequality follows from~\cref{eq.Qdiff}. Hence, for each fixed $1 \leq j \leq L$ and $1 \leq i \leq h$, we have that
    \begin{equation*}
    \begin{aligned}
        &\left\|\mathbf{W}^{(i)}_V \mathbf{U}\; \text{softmax}(\boldsymbol{g}^{(i)}_j) - \tilde{\mathbf{W}}^{(i)}_V \mathbf{U}\; \text{softmax}(\tilde{\boldsymbol{g}}^{(i)}_j)\right\|_2 \\
        &\quad= \left\|\sum_{i=1}^L \left(\mathbf{W}_V^{(i)} \mathbf{U}(:,i)\; \text{softmax}({g}^t_j) - \tilde{\mathbf{W}}_V^{(i)} \mathbf{U}(:,i)\; \text{softmax}(\tilde{{g}}^t_j)\right)\right\|_2 \\
        &\quad\leq \sqrt{L} \max_{1 \leq t \leq L} \left\|\left(\mathbf{W}^{(i)}_V \mathbf{U}(:,t)\; \text{softmax}({g}^t_j) - ({\mathbf{W}}^{(i)}_V \mathbf{U}(:,t) - \boldsymbol{\Delta}^{(i)}_V(:,t))\; (\text{softmax}({{g}}^t_j) - d_j^t)\right)\right\|_2 \\
        &\quad\leq \sqrt{L} \max_{1 \leq t \leq L} \left(\|\boldsymbol{\Delta}^{(i)}_V(:,t)\|_2 |\text{softmax}(\tilde{{g}}^t_j)| + \|\mathbf{W}_V^{(i)} \mathbf{U}(:,t)\|_2\; D_Q\right) \\
        &\quad\leq \sqrt{L} \left(\sqrt{d_h} \;\sigma_{\tilde{d}+1} \Theta(1/L) + \sqrt{d_h}\;\mathcal{O}\!\left(\left(\sigma_{\tilde{d}+1} +\sigma_{\tilde{d}+1}^2\right) / L\right)\right) = \mathcal{O}\left(\sqrt{d_h}\;\sigma_{\tilde{d}+1} / \sqrt{L}\right),
    \end{aligned}
    \end{equation*}
    where the last inequality follows from~\cref{eq.Vdiff} and~(\ref{eq.DQ}). Since we have $\mathbf{W}^{(i)}_V \mathbf{U}\; \text{softmax}(\boldsymbol{g}^{(i)}_j) - \tilde{\mathbf{W}}^{(i)}_V \mathbf{U}\; \text{softmax}(\tilde{\boldsymbol{g}}^{(i)}_j)$ is the $j$th column of
    \[
        \text{Attention}(\mathbf{U}; \mathbf{W}_Q^{(i)}, \mathbf{W}_K^{(i)}, \mathbf{W}_V^{(i)}) - \text{Attention}(\mathbf{U}; \tilde{\mathbf{W}}_Q^{(i)}, \tilde{\mathbf{W}}_K^{(i)}, \tilde{\mathbf{W}}_V^{(i)}),
    \]
    we have that
    \[
        \begin{aligned}
            &\|\text{MH-Attention}(\mathbf{U}; \mathbf{W}_Q, \mathbf{W}_K, \mathbf{W}_V, h) - \text{Attention}(\mathbf{U}; \tilde{\mathbf{W}}_Q, \tilde{\mathbf{W}}_K, \tilde{\mathbf{W}}_V, h)\|_F \\
            &\qquad = \sqrt{\sum_{i=1}^h \|\text{Attention}(\mathbf{U}; \mathbf{W}_Q^{(i)}, \mathbf{W}_K^{(i)}, \mathbf{W}_V^{(i)}) - \text{Attention}(\mathbf{U}; \tilde{\mathbf{W}}_Q^{(i)}, \tilde{\mathbf{W}}_K^{(i)}, \tilde{\mathbf{W}}_V^{(i)})\|_F^2} \\
            &\qquad \leq \sqrt{\sum_{i=1}^h \sum_{j=1}^L \left\|\mathbf{W}^{(i)}_V \mathbf{U}\; \text{softmax}(\boldsymbol{g}^{(i)}_j) - \tilde{\mathbf{W}}^{(i)}_V \mathbf{U}\; \text{softmax}(\tilde{\boldsymbol{g}}^{(i)}_j)\right\|^2_2} \\
            &\qquad= \sqrt{hL}\mathcal{O}\left(\sqrt{d_h}\;\sigma_{\tilde{d}+1} / \sqrt{L}\right) = \mathcal{O}\left(\sqrt{d}\;\sigma_{\tilde{d}+1}\right).
        \end{aligned}
    \]
    The proof is complete.
\end{proof}

\Cref{thm.multiheadcompression} immediately proves the upper bound in~\Cref{thm.attentioncompression}. The lower bound needs a separate argument.

\begin{proof}[Proof of~\Cref{thm.attentioncompression}]
    The upper bound follows immediately from~\Cref{thm.multiheadcompression} by setting $h = 1$. To prove the lower bound, let $\mathbf{U}' \in \R^{d \times \tilde{d}}$ and $\mathbf{U} \in \R^{d \times L}$ be such that 
    \[
        \mathbf{U} = \left[\begin{array}{c|c}
            \mathbf{U}' & \boldsymbol{0}
        \end{array}\right], \qquad \mathbf{U}' = \text{diag}(\sigma_1, \ldots, \sigma_d) \in \R^{d \times d}, \quad \boldsymbol{0} \in \R^{d \times (L-d)}.
    \]
    Clearly, the singular values of $\mathbf{U}$ are $\sigma_1, \ldots, \sigma_d$. Define $\mathbf{W}_Q$ and $\mathbf{W}_K$ such that
    \[
        \mathbf{W}_Q^\top \mathbf{W}_K = \log(4{d}) \sigma_{{d}}^{-2} \sqrt{d}\; \mathbf{I}_d.
    \]
    Then, we have
    \[
        \mathbf{T} := \frac{\mathbf{U}^\top \mathbf{W}^\top_Q \mathbf{W}_K \mathbf{U}}{\sqrt{d}} = \log(4{d}) \sigma_{{d}}^{-2} \left[
        \begin{array}{c|c}
            \text{diag}(\sigma_1^2, \ldots, \sigma_d^2) & \boldsymbol{0}_{d \times (L-d)} \\
        \hline
           \boldsymbol{0}_{(L-d) \times d}   & \boldsymbol{0}_{(L-d) \times (L-d)} 
        \end{array}
        \right].
    \]
    Let $\mathbf{G} = \text{softmax}(\mathbf{T})$ and let $\boldsymbol{g}_j$ be the $j$th column of $\mathbf{G}$. Then, for every $1 \leq j \leq {d}$, we have
    \begin{equation}\label{eq.diagdominant}
        \begin{aligned}
            \boldsymbol{g}_j = \begin{bmatrix}
                g_j^1, \ldots, g_j^L
            \end{bmatrix}^\top, \qquad g_j^i = \begin{dcases*}
                \frac{1}{(L-1) + \text{exp}\left(\log(4{d}) \sigma_{{d}}^{-2} \sigma_{j}^2\right)} \leq \frac{1}{4{d}}, & $i \neq j$, \\
                \frac{\text{exp}\left(\log(4{d}) \sigma_{{d}}^{-2} \sigma_{j}^2\right)}{(L-1) + \text{exp}\left(\log(4{d}) \sigma_{{d}}^{-2} \sigma_{j}^2\right)} \geq \frac{1}{2}, & $i = j$.
            \end{dcases*}
        \end{aligned}
    \end{equation}
    Write $\mathbf{G}$ into
    \[
        \mathbf{G} = \left[\begin{array}{c|c}
                \mathbf{G}_{11} & \mathbf{G}_{12}\\
                \hline
                \mathbf{G}_{21} & \mathbf{G}_{22}
            \end{array}\right],
    \]
    where
    \[
        \mathbf{G}_{11} = \mathbf{G}' \in \R^{d \times d},\quad \mathbf{G}_{12} \in \R^{d \times (L-d)}, \quad \mathbf{G}_{21} \in \R^{(L-d) \times d},\quad \mathbf{G}_{22} \in \R^{(L-d) \times (L-d)},
    \]
    and write $\mathbf{G}_{11}$ into the sum of its diagonal part and off-diagonal part:
    \[
        \mathbf{G}_{11} = \mathbf{G}_{\text{diag}} + \mathbf{G}_{\text{off-diag}},
    \]
    where $\mathbf{G}_{\text{diag}}$ is a diagonal matrix and $\mathbf{G}_{\text{off-diag}}$ is a matrix with zero diagonal entries. Then, by~\cref{eq.diagdominant}, we have 
    \[
        \sigma_{{d}}(\mathbf{G}_{\text{diag}}) = \min_{1 \leq j \leq {d}} g_j^j \geq \frac{1}{2}, \qquad \|\mathbf{G}_{\text{off-diag}}\|_2 \leq \|\mathbf{G}_{\text{off-diag}}\|_F \leq {d} \frac{1}{4{d}} = \frac{1}{4}.
    \]
    Hence, by Weyl's inequality, we have
    \begin{equation}\label{eq.diagdomnorm}
        \sigma_{{d}}(\mathbf{G}_{11}) \geq \sigma_{{d}}(\mathbf{G}_{\text{diag}}) - \|\mathbf{G}_{\text{off-diag}}\|_2 \geq \frac{1}{4}.
    \end{equation}
    Using the results above, we have
    \begin{equation*}
        \begin{aligned}
            &\mathbf{W}_V \mathbf{U} \;\text{softmax}\!\left(\frac{\mathbf{U}^\top \mathbf{W}^\top_Q \mathbf{W}_K \mathbf{U}}{\sqrt{d}}\right) - \tilde{\mathbf{W}}_V \mathbf{U} \;\text{softmax}\!\left(\frac{\mathbf{U}^\top {\mathbf{W}}^\top_Q {\mathbf{W}}_K \mathbf{U}}{\sqrt{{d}}}\right) \\
            &\qquad = (\mathbf{W}_V \mathbf{U} - \tilde{\mathbf{W}}_V \mathbf{U}) \mathbf{G} = \left[\begin{array}{c|c}
                \mathbf{W}_V \mathbf{U}' - \tilde{\mathbf{W}}_V \mathbf{U}'  & \boldsymbol{0}_{d \times (L-d)} 
            \end{array}\right] \left[\begin{array}{c|c}
                \mathbf{G}_{11} & \mathbf{G}_{12}\\
                \hline
                \mathbf{G}_{21} & \mathbf{G}_{22}
            \end{array}\right] \\
            &\qquad= \left[\begin{array}{c|c}
                \left(\mathbf{W}_V \mathbf{U}' - \tilde{\mathbf{W}}_V \mathbf{U}'\right) \mathbf{G}_{11}  & \left(\mathbf{W}_V \mathbf{U}' - \tilde{\mathbf{W}}_V \mathbf{U}'\right) \mathbf{G}_{12}
            \end{array}\right],
        \end{aligned}
    \end{equation*}
    but $\tilde{\mathbf{W}}_V \mathbf{U}'$ is a rank-$\tilde{d}$ matrix so we must have
    \begin{equation}\label{eq.errorvalues}
        \|{\mathbf{W}}_V \mathbf{U}' - \tilde{\mathbf{W}}_V \mathbf{U}'\|_2 \geq \sigma_{\tilde{d}+1}({\mathbf{W}}_V \mathbf{U}') = \sigma_{\tilde{d}+1}.
    \end{equation}
    Hence, by the singular value inqualities, we have
    \begin{equation*}
        \begin{aligned}
            &\left\|\mathbf{W}_V \mathbf{U} \;\text{softmax}\!\left(\frac{\mathbf{U}^\top \mathbf{W}^\top_Q \mathbf{W}_K \mathbf{U}}{\sqrt{d}}\right) - \tilde{\mathbf{W}}_V \mathbf{U} \;\text{softmax}\!\left(\frac{\mathbf{U}^\top {\mathbf{W}}^\top_Q {\mathbf{W}}_K \mathbf{U}}{\sqrt{{d}}}\right)\right\|_2 \\
            &\qquad\geq \left\|\left(\mathbf{W}_V \mathbf{U}' - \tilde{\mathbf{W}}_V \mathbf{U}'\right) \mathbf{G}_{11}\right\|_2 \geq \sigma_1\left(\mathbf{W}_V \mathbf{U}' - \tilde{\mathbf{W}}_V \mathbf{U}'\right) \sigma_{d}(\mathbf{G}_{11}) \geq \frac{1}{4} \sigma_{\tilde{d}+1},
        \end{aligned}
    \end{equation*}
    where the last inequality is obtained by combining~\cref{eq.diagdomnorm} and~(\ref{eq.errorvalues}). The proof is complete.   
\end{proof}

\clearpage

\section{Numerical Experiments to Corroborate Our Theorem}\label{app:numerical}

In this appendix, we provide numerical experiments, simulated in MATLAB R2024b, that verify the theoretical statements we made in the main manuscript.

\subsection{Two Numerical Experiments on~\Cref{thm.attentioncompression}}

To verify~\Cref{thm.attentioncompression}, we fix the hidden dimension to be $d = 512$ and the vocabulary size $N = 4096$. We create a random embedded matrix $\mathbf{X} \in \R^{d \times N}$ with singular values $\mathbf{s} \in \R^d$ ranging from $e^{-5}$ to $e^0$ and uniformly distributed on a logarithmic scale. This matrix is computed by randomly sampling an orthogonal matrix $\mathbf{U} \in \R^{d \times d}$ and a matrix $\mathbf{V} \in \R^{N \times d}$ with orthonormal columns, via QR-decomposing random matrices, and setting $\mathbf{X} = \mathbf{U}\; \text{diag}(\mathbf{s})\; \mathbf{V}^\top$.

Next, we randomly sample three attention matrices $\mathbf{W}_Q, \mathbf{W}_K, \mathbf{W}_V$, and for each reduced-order $\tilde{d} = 1, \ldots, d-1$, we use the constructive formulas given in the proof of~\Cref{thm.attentioncompression} to compute the reduced matrices $\tilde{\mathbf{W}}_Q, \tilde{\mathbf{W}}_K, \tilde{\mathbf{W}}_V$. We set the input to be the entire vocabulary matrix $\mathbf{X}$ and compute the output of the original attention layer, defined by $\mathbf{W}_Q, \mathbf{W}_K, \mathbf{W}_V$, as well as that of the reduced one, defined by $\tilde{\mathbf{W}}_Q, \tilde{\mathbf{W}}_K, \tilde{\mathbf{W}}_V$. We evaluate the Frobenius norm of the difference between the two outputs.

\begin{figure}[!htb]
    \centering
    \begin{overpic}[width=1\textwidth]{./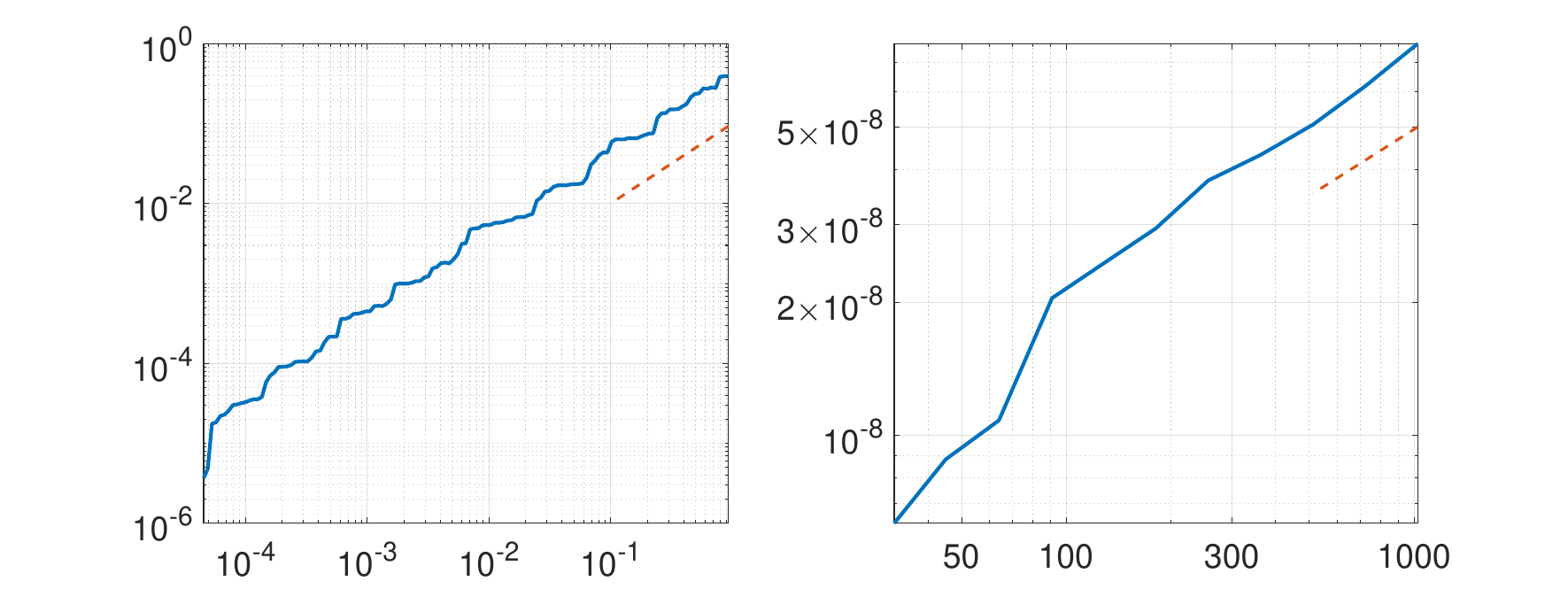}
        \put(3,14){\rotatebox{90}{$\|\mathbf{Y} - \tilde{\mathbf{Y}}\|_F$}}
        \put(28,-2){$\sigma_{\tilde{d}+1}$}
        \put(75,-2){$d$}
    \end{overpic}
    \vspace{0.3cm}
    \caption{The left panel shows the relationship between the reduced-order $\tilde{d}$ and the approximation error of a randomly sampled attention layer applied to a randomly sampled input matrix $\mathbf{X}$ with controlled singular values $\sigma_1, \ldots, \sigma_d$. The reference line has a slope of $1$ in the log-log plot. The right panel shows the relationship between the dimension of the hidden space $d$ and the approximation error of a fixed-degree reduced-order model. For each hidden space $d$, we randomly resample the attention matrices and embedded matrix $\mathbf{X}$, holding its leading singular values unchanged. The reference line has a slope of $1/2$ in the log-log plot.}
    \label{fig:compressnumerical}
\end{figure}

\Cref{fig:compressnumerical} shows two controlled experiments by changing a different variable. On the left, we change the truncation degree $\tilde{d}$, which in turn controls the singular value $\sigma_{\tilde{d}+1}$. The reference line has a slope of $1$, revealing a linear relationship between $\|\mathbf{Y} - \tilde{\mathbf{Y}}\|_F$ and $\sigma_{\tilde{d}+1}$, as indicated in~\Cref{thm.attentioncompression}. On the right, we change the size of the hidden space $d$, and the reference line has a slope of $1/2$, which verifies the $\sqrt{d}$ factor in the statement of~\Cref{thm.attentioncompression}.

\subsection{A Numerical Experiment on~\Cref{thm.sparsebetter}}

The essence of~\Cref{thm.sparsebetter} is that a sparse multi-head sketching is more effective than a dense single-head sketching. We use an experiment to verify that. In our setting, we set $d = 2048$ and $L = 4096$, and we randomly sample an input matrix $\mathbf{X} \in \R^{d \times L}$ with exponentially decaying singular values, as shown in the left panel of~\Cref{fig:sketchingnumerical}. Our sampling method is the same as the one outlined in the previous experiment.

\begin{figure}[!htb]
    \centering
    \begin{overpic}[width=1\linewidth]{./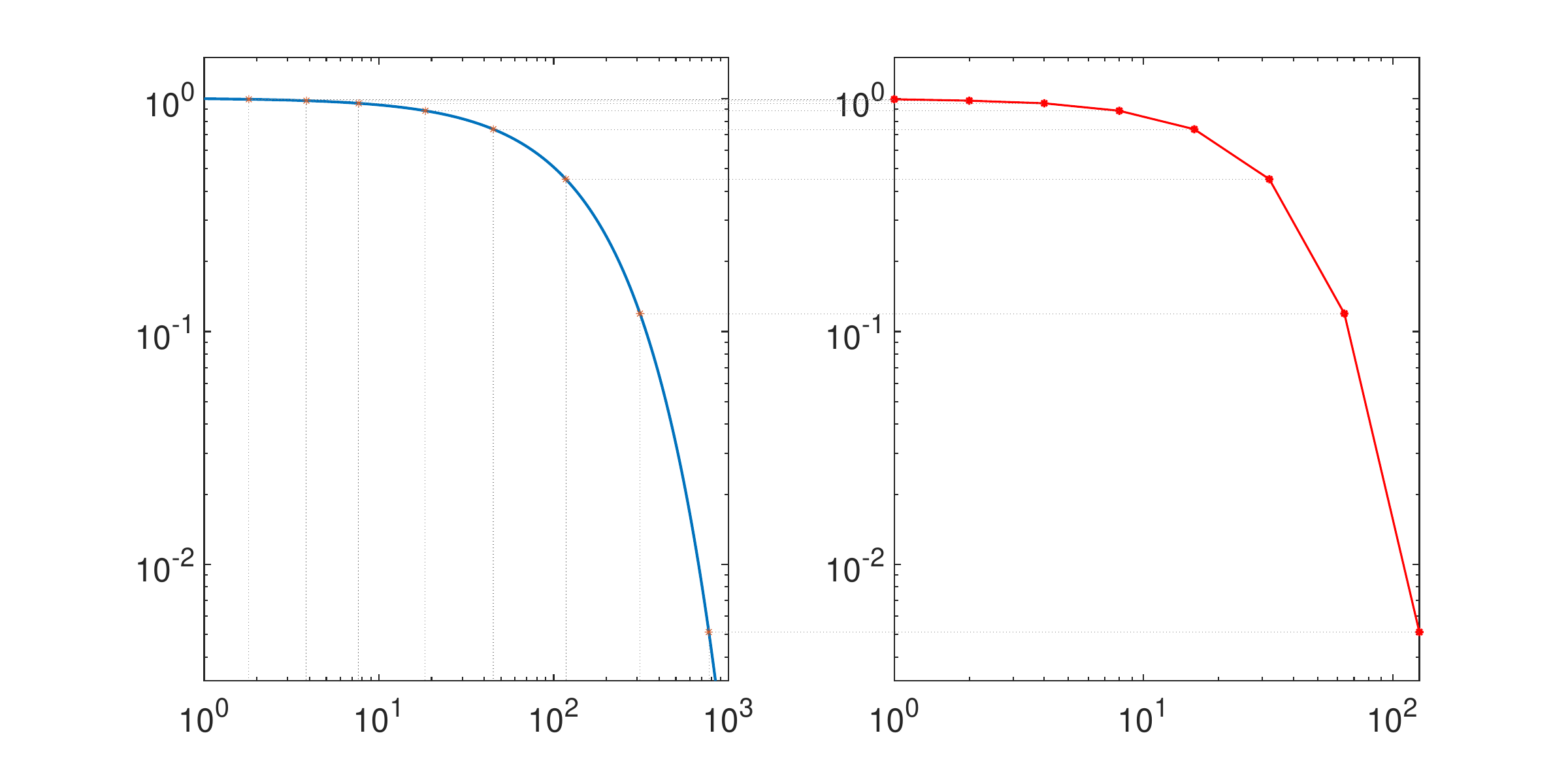}
        \put(28,-2){$j$}
        \put(5,17){\rotatebox{90}{$\sigma_j(\mathbf{X}) / \sigma_1(\mathbf{X})$}}
        \put(50,14){\rotatebox{90}{$d(\mathcal{R}(\mathbf{X}), \mathcal{R}(\mathbf{W}_{2,h} \mathbf{X}))$}}
        \put(65,-2){number of heads $h$}
    \end{overpic}
    \vspace{0.2cm}
    \caption{On the left, we show the singular values of the input matrices. On the right, we compute the ``sketching error'' defined in~\cref{eq.sketchingdistance}, as we change the number of heads in a sparse sketching. We map the sketching errors back to the singular value plot to indicate the index $j$ such that our sketching error achieves an error below $\sigma_{j+1}$.}
    \label{fig:sketchingnumerical}
\end{figure}

We set our reduced rank to $\tilde{d} = 4$. If we just use a single head to sketch a rank-$\tilde{d}$ space from the row space of $\mathbf{X}$, then its accuracy is lower-bounded by $\sigma_{\tilde{d}+1}$, which is still a large number. To explore the potential of sparse head-dependent sketching, we increase the number of heads from $h = 1$ to $128$ and keep it an integral divisor of $d$. For each $h$, we use the sparse sketching; that is, we assemble a random matrix
\[
    \mathbf{W}_{2,h} = \begin{bmatrix}
        \mathbf{W}_{2,h}^{(1), \top} & \cdots & \mathbf{W}_{2,h}^{(1), \top}
    \end{bmatrix}^\top, \qquad \mathbf{W}_{2,h}^{(i)} \in \R^{\tilde{d} \times d}.
\]
The matrix $\mathbf{W}_{2,h}$ is sparse in the sense that it has exactly $d_h = d / h$ non-zero entries, whose positions are randomly chosen, and for each non-zero position, we sample its value i.i.d.\ from $\mathcal{N}(0,1)$. Note that given this sparse design, $\mathbf{W}_{2,h}$ has the same number of non-zero entries for any $h$.

Now, our question is: what is the different between the row space of $\mathbf{W}_{2,h} \mathbf{X}$ and that of $\mathbf{X}$? In other words, how good is the sketching using $\mathbf{W}_{2,h}$? To this end, we clearly have that $\mathcal{R}(\mathbf{W}_{2,h} \mathbf{X}) \subset \mathcal{R}(\mathbf{X})$, where we use the notation $\mathcal{R}$ for the row space of a matrix. Hence, the remaining question is how much in $\mathcal{R}(\mathbf{X})$ is not filled by $\mathcal{R}(\mathbf{W}_{2,h} \mathbf{X})$. This can be measured by projecting $\mathcal{R}(\mathbf{X})$ onto $\mathcal{R}(\mathbf{W}_{2,h} \mathbf{X})$ and measure the loss by the projection:
\begin{equation}\label{eq.sketchingdistance}
    d(\mathcal{R}(\mathbf{X}), \mathcal{R}(\mathbf{W}_{2,h} \mathbf{X})) = \left\|\text{orth}(\mathbf{X}^\top \mathbf{W}_{R,h}^\top)\;\text{orth}(\mathbf{X}^\top \mathbf{W}_{R,h}^\top)^\top \mathbf{X}^\top - \mathbf{X}^\top \right\|_2.
\end{equation}
We show this distance in the right panel of~\Cref{fig:sketchingnumerical}, and we relate this to the singular values of $\mathbf{X}$. In that sense, since the vertical rules in the left panel are almost evenly spaced, it shows that the ``effective rank'' of $\mathcal{R}(\mathbf{W}_{2,h} \mathbf{X})$ grows proportionally with respect to $h$, indicating that the quality of a sparse multi-head sketching is comparable to the quality of a dense multi-head sketching.

\subsection{A Numerical Experiment on~\Cref{thm.flowofrank}}

Our final numerical experiment considers the flow-of-ranks. \Cref{thm.flowofrank} suggests that a larger number of heads also facilitates the flow-of-rank, and this is hard to validate empirically with pretrained Chronos models. In our targeted numerical experiment, we fix an input matrix $\mathbf{X}$ with predefined exponentially decaying singular values, which is, again, randomly sampled from the product of a random orthogonal matrix, a predefined diagonal matrix, and the transpose of a random matrix with orthonormal columns (see the previous two subsections). Then, we randomly select $\mathbf{W}_Q, \mathbf{W}_K, \mathbf{W}_V$ and compute the output of the attention mechanism defined by these three matrices together with a number of heads parameter $h$. \Cref{fig:flowofranksnumerical} gives us that the output matrix 
\[
\mathbf{Y} = \text{MH-Attention}(\mathbf{X}; \mathbf{W}_Q, \mathbf{W}_K, \mathbf{W}_V, h) + \mathbf{X}
\]
is higher-rank as $h$ increases.

\begin{figure}[!htb]
    \centering
    \begin{overpic}[width=0.8\linewidth]{./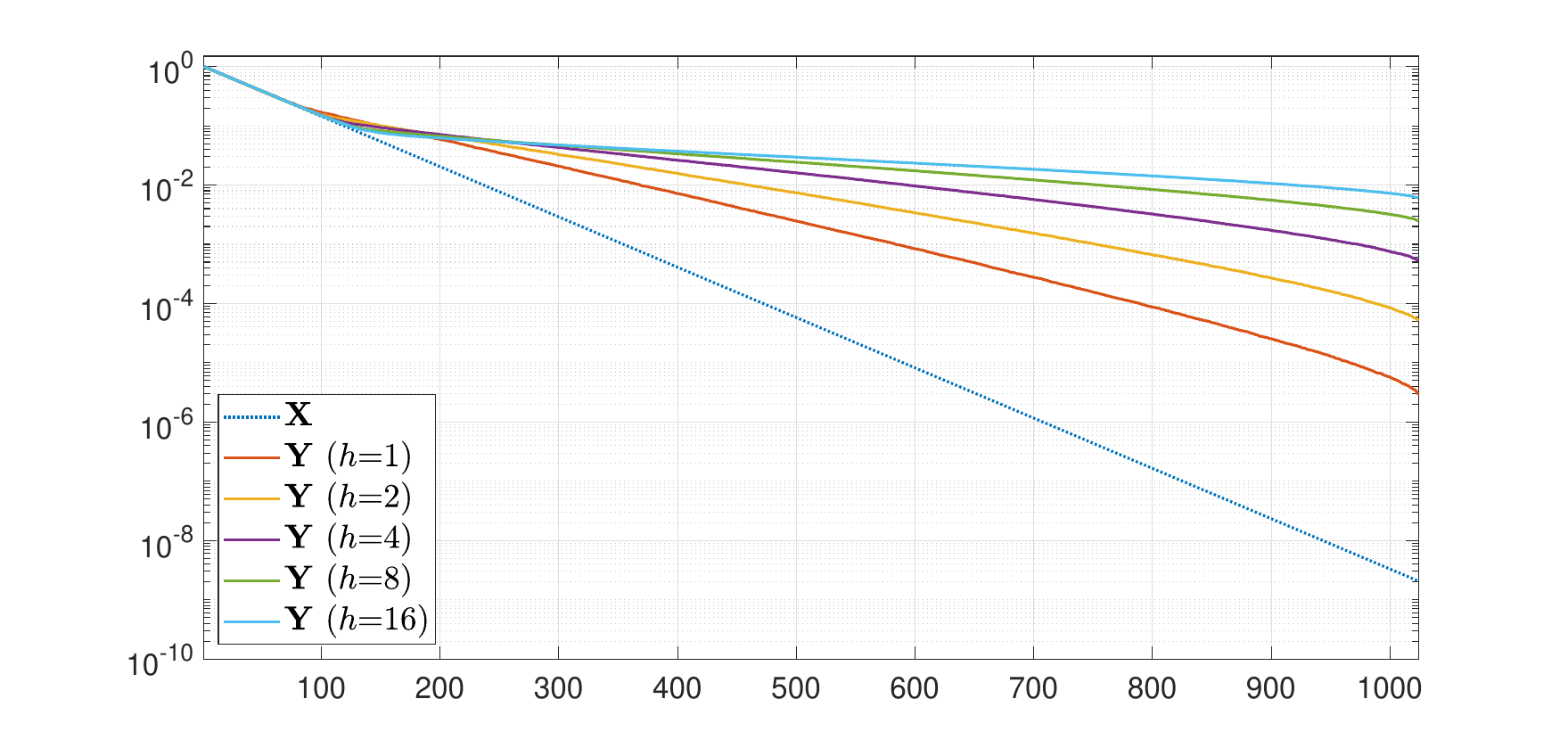}
        \put(52,-1){$j$}
        \put(5,15){\rotatebox{90}{$\sigma_j(\mathbf{Y}) / \sigma_1(\mathbf{Y})$}}
    \end{overpic}
    \caption{Relative singular values of the input matrix $\mathbf{X}$ and output matrices $\mathbf{Y} = \text{MH-Attention}(\mathbf{X}; \mathbf{W}_Q, \mathbf{W}_K, \mathbf{W}_V, h) + \mathbf{X}$ as we change the number of heads $h$. We see that the numerical rank of $\mathbf{Y}$ increases with $h$.}
    \label{fig:flowofranksnumerical}
\end{figure}

\clearpage

\section{Watching the Weights of Large Models}\label{app:weightwatcher}

The central claims made in this paper are heavily based on the low-rank structures of weight matrices in large-scale time series foundation models. 
In this section, we provide more empirical analysis of the singular values of these matrices.

\subsection{Comparing the Weights of TSFMs and LLMs}

For the first set of comparisons, we consider three models: T5, Chronos, and Chronos-Bolt. While T5 is an LLM, Chronos and Chronos-Bolt are TSFMs. The three models we compare have the same base size, and each model contains $12$ encoder layers and $12$ decoder layers. From each layer, we take out a matrix $\mathbf{W}$, which is either the query projection matrix $\mathbf{W}_Q \in \R^{768 \times 768}$ or the first matrix of the MLP layer $\mathbf{W}_{\text{MLP}} \in \R^{3072 \times 768}$. We apply an SVD to the weight matrix $\mathbf{W} = \mathbf{U} \boldsymbol{\Sigma} \mathbf{V}^\top$ and construct a histogram out of all relative singular values in $\text{diag}(\boldsymbol{\Sigma}) / \boldsymbol{\Sigma}_{1,1}$.

\begin{figure}[!htb]
    \centering

    \hspace{0.3cm}
    \begin{minipage}{0.31\textwidth}
        \begin{overpic}[width=1\textwidth]{./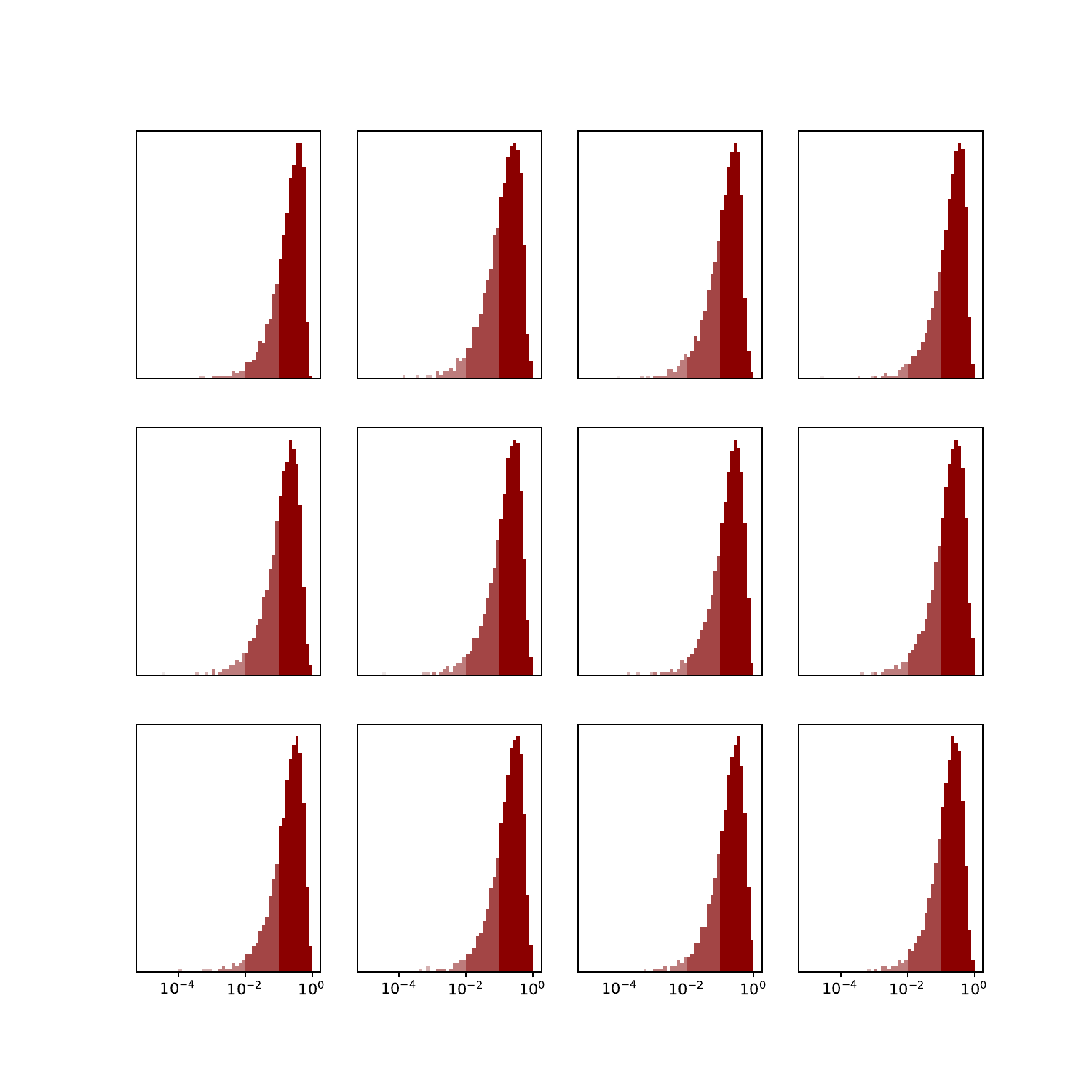}
            \put(-2,42){\rotatebox{90}{$\mathbf{W}_Q$}}
            \put(46,94){\textbf{T5}}
        \end{overpic}
    \end{minipage}
    \hfill
    \begin{minipage}{0.31\textwidth}
        \begin{overpic}[width=1\textwidth]{./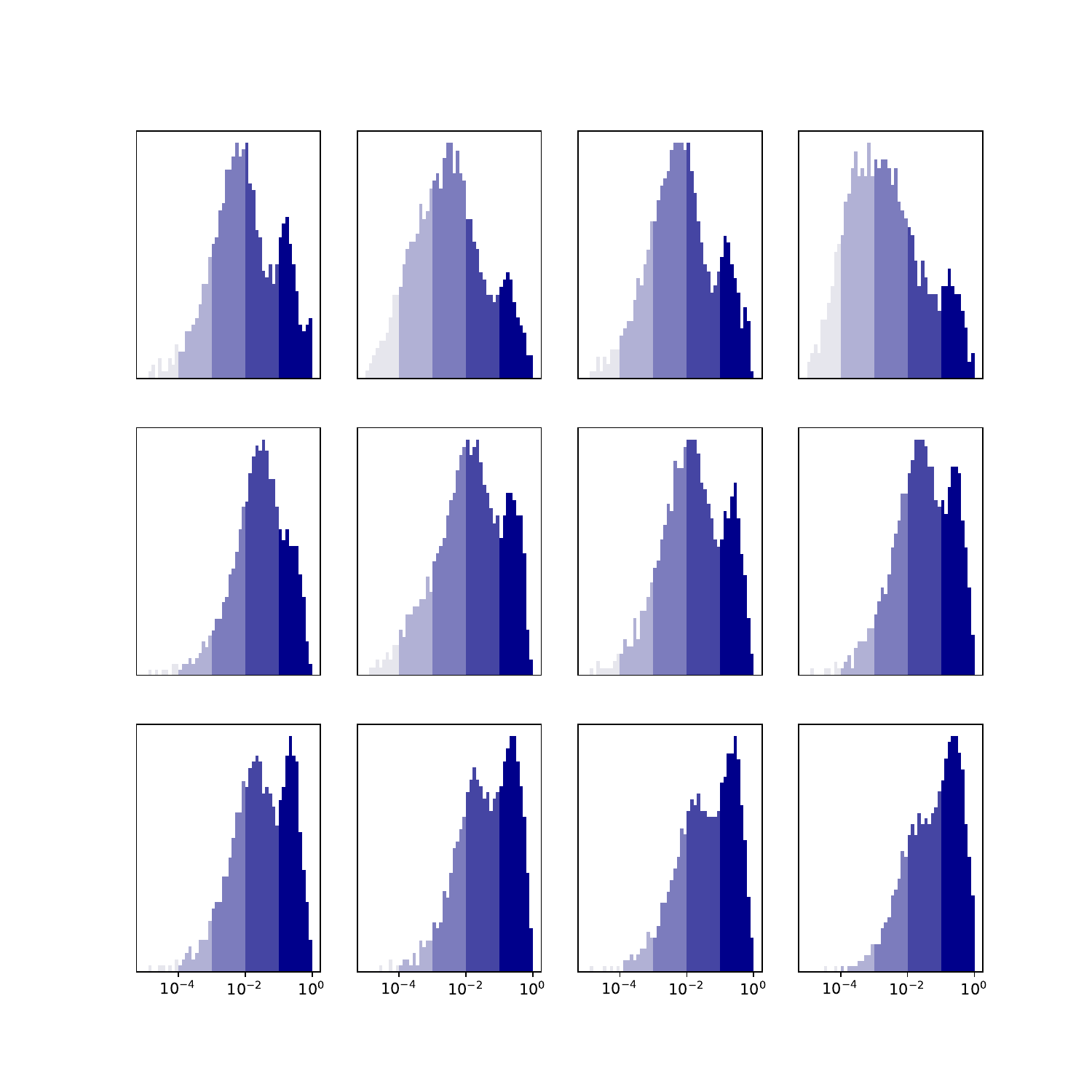}
            \put(36,94){\textbf{Chronos}}
        \end{overpic}
    \end{minipage}
    \hfill
    \begin{minipage}{0.31\textwidth}
        \begin{overpic}[width=1\textwidth]{./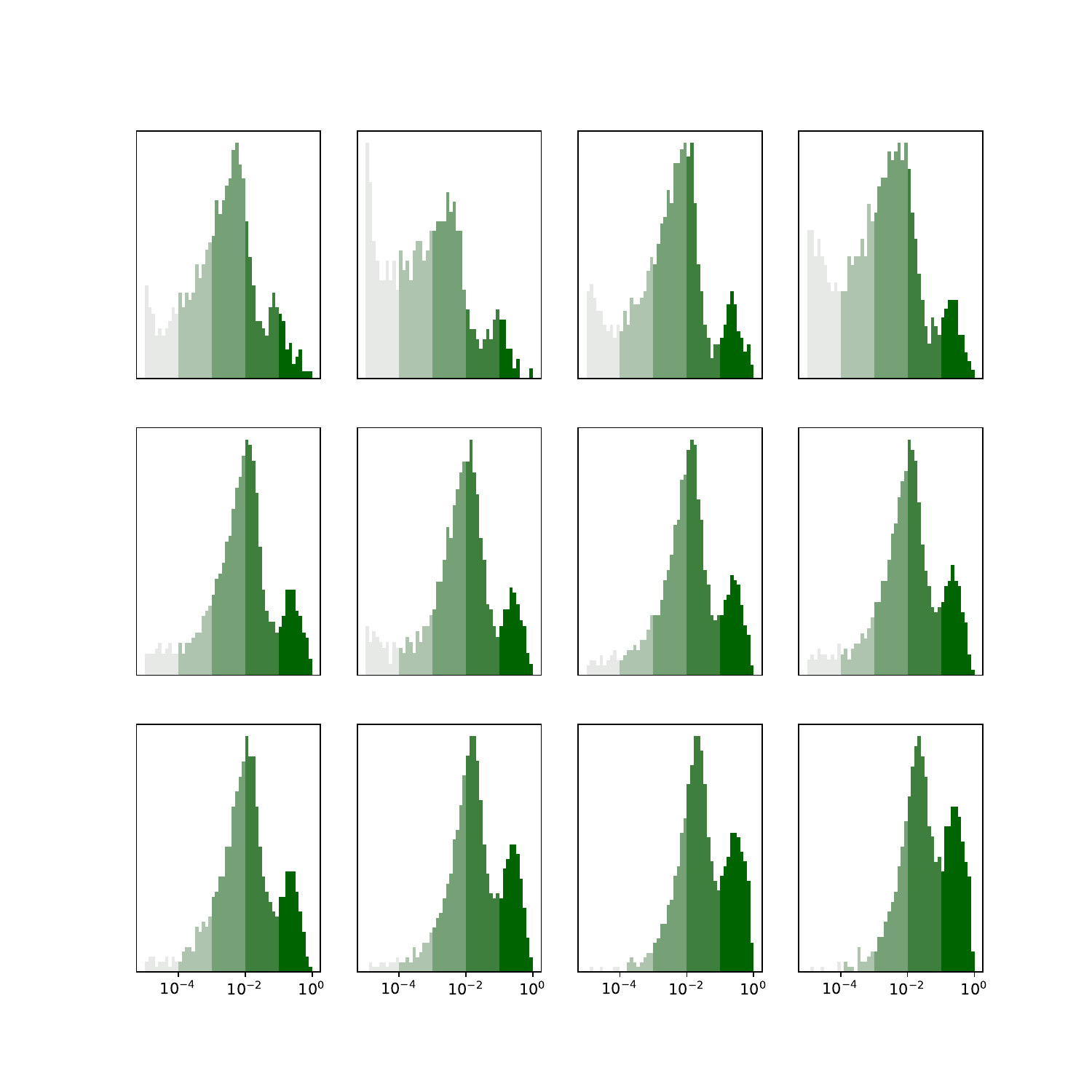}
            \put(28,94){\textbf{Chronos-Bolt}}
        \end{overpic}
    \end{minipage}

    \hspace{0.3cm}
    \begin{minipage}{0.31\textwidth}
        \begin{overpic}[width=1\textwidth]{./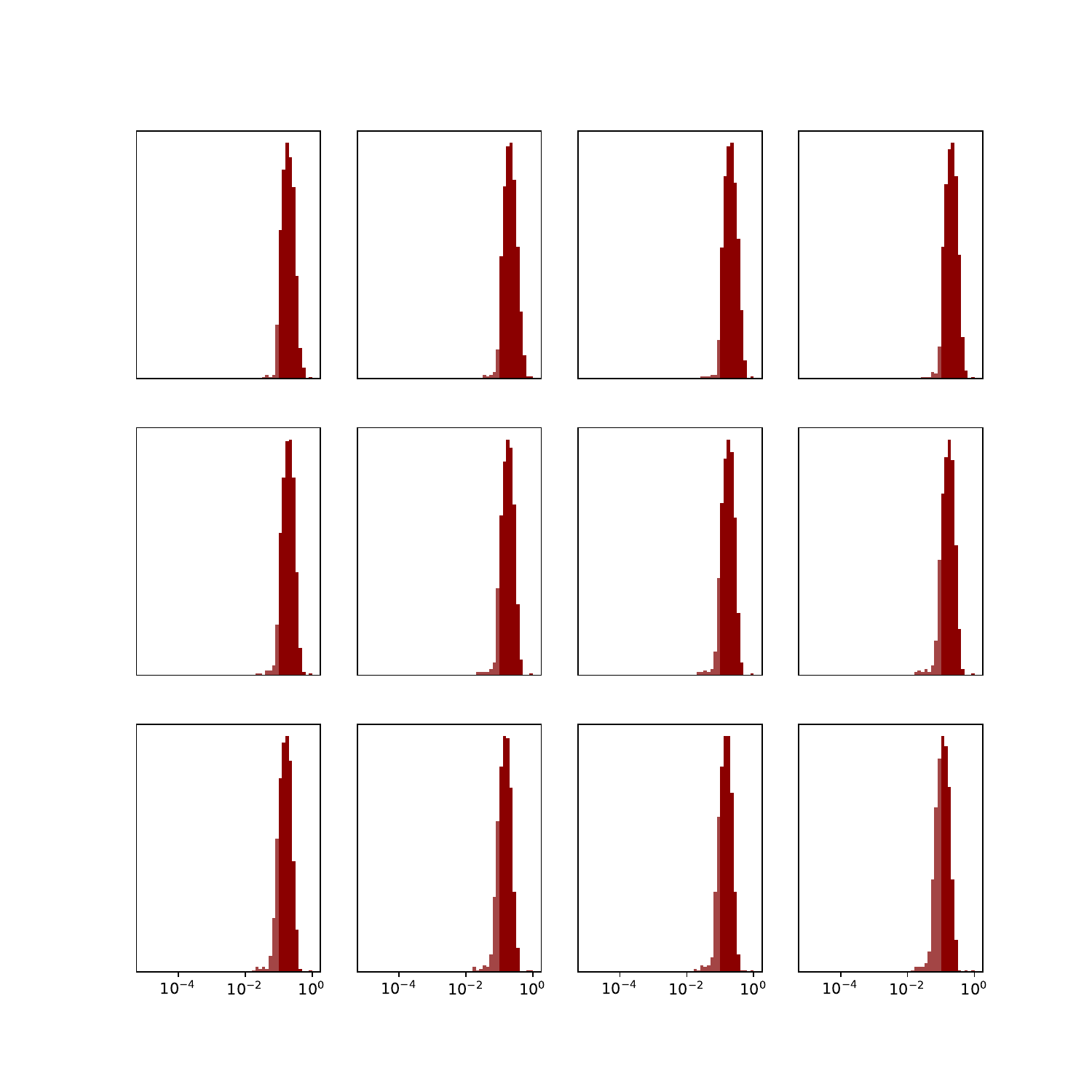}
            \put(-2,38){\rotatebox{90}{$\mathbf{W}_{\text{MLP}}$}}
        \end{overpic}
    \end{minipage}
    \hfill
    \begin{minipage}{0.31\textwidth}
        \begin{overpic}[width=1\textwidth]{./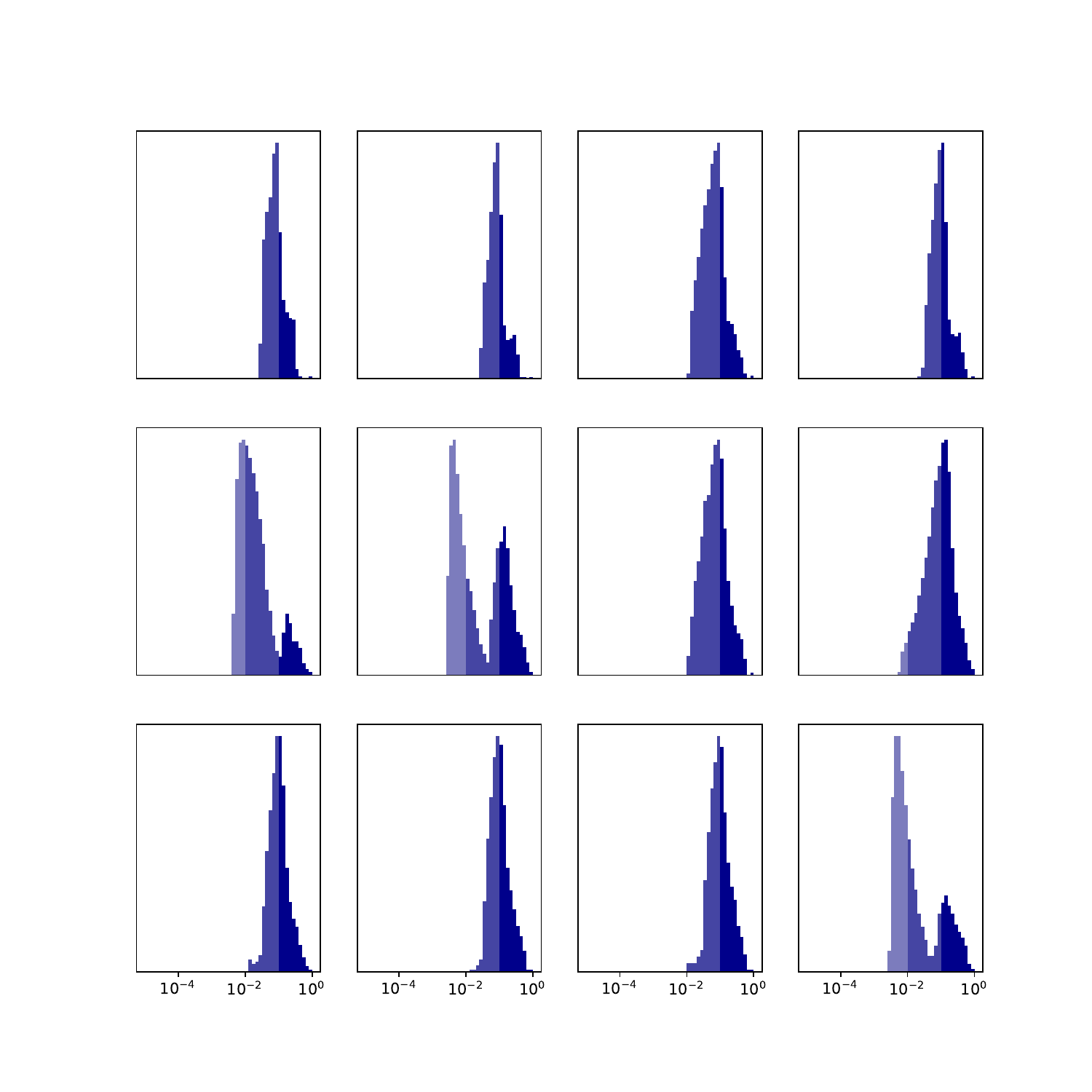}
            
        \end{overpic}
    \end{minipage}
    \hfill
    \begin{minipage}{0.31\textwidth}
        \begin{overpic}[width=1\textwidth]{./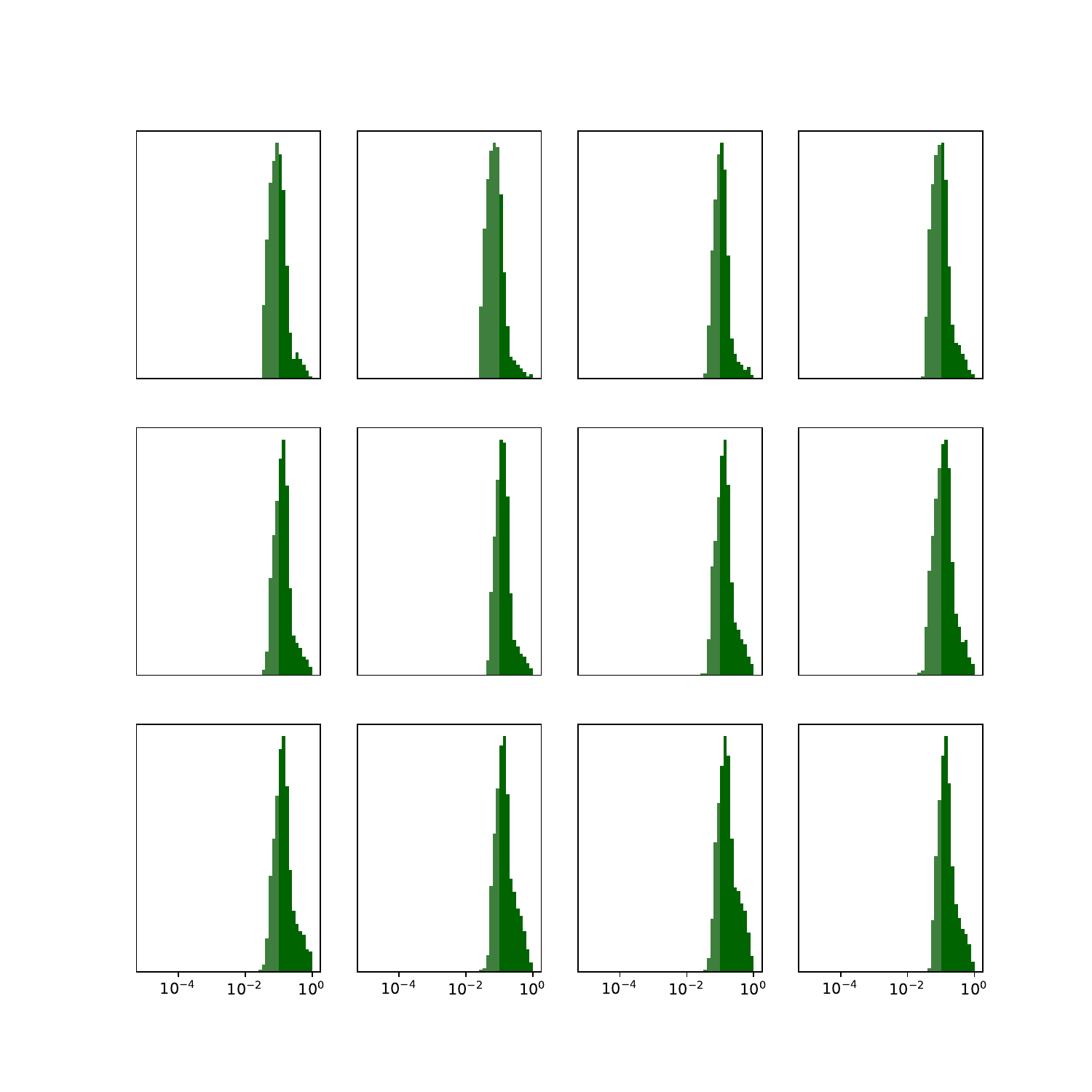}
            
        \end{overpic}
    \end{minipage}
    
    \caption{The distribution of all relative singular values in a weight matrix $\mathbf{W}$ in a T5, Chronos, or Chronos-Bolt model. We use five progressively fainter face colors to indicate the range where the relative singular values are in the $(\texttt{1e-5}, \texttt{1e-4}]$, $(\texttt{1e-4}, \texttt{1e-3}]$, $(\texttt{1e-3}, \texttt{1e-2}]$, $(\texttt{1e-2}, \texttt{1e-1}]$, and $(\texttt{1e-1}, \texttt{1e0}]$ ranges, respectively. Each model contains $12$ encoder layers, and we arrange the corresponding $12$ weight matrices in row-major order. Note that all histograms are made on a semilog-x scale.}
    \label{fig:weightsTSFMLLM}
\end{figure}

From~\Cref{fig:weightsTSFMLLM}, we make some observations that align with the major claims made in the main manuscript:
\begin{enumerate}[leftmargin=*]
    \item The relative singular values decay faster for Chronos and Chronos-Bolt and they decay much slower for T5. The reason is that TSFMs leverage low-rank embeddings and do not require high-rank attention matrices, which is not the case for T5, which inevitably needs a high-rank embedding.
    \vspace{-0.5\baselineskip}
    \item In Chronos and Chronos-Bolt, when the layers get deeper, we generally have a higher-rank structure. This aligns with the flow-of-ranks idea: as an input is pushed through more nonlinear attention and MLP layers, its rank gets higher, and we need higher-rank attention matrices. 
    \vspace{-0.5\baselineskip}
    \item The attention matrices are generally more compressible than MLP matrices. This is not surprising, because the attention matrices in these models are square matrices, making low-rank structures much easier to emerge than in a rectangular MLP matrix.
\end{enumerate}

\subsection{Watching the Weights of TSFMs During Training}

Another mysterious finding that we used in the main manuscript is that the attention matrices in a pretrained Chronos or Chronos-Bolt model all demonstrate low-rank structure; however, while~\Cref{thm.attentioncompression} only suggests that the attention matrices \textit{can} be written in a low-rank form, it does not preclude these matrices from having a high-rank form. To understand why the attention matrices happen to exhibit a low-rank structure, we watch the training dynamics of these attention matrices, where we pretrain a small Chronos model and record the six weight matrices in its six encoder layers at a few different training steps. Unlike~\Cref{fig:weightsTSFMLLM}, in~\Cref{fig:weightsTSFMtraining}, we do not show the relative singular values $\sigma_j / \sigma_1$. The reason will be clear later.

\begin{figure}[!htb]
    \centering

    \hspace{0.3cm}
    \begin{minipage}{0.31\textwidth}
        \begin{overpic}[width=1\textwidth]{./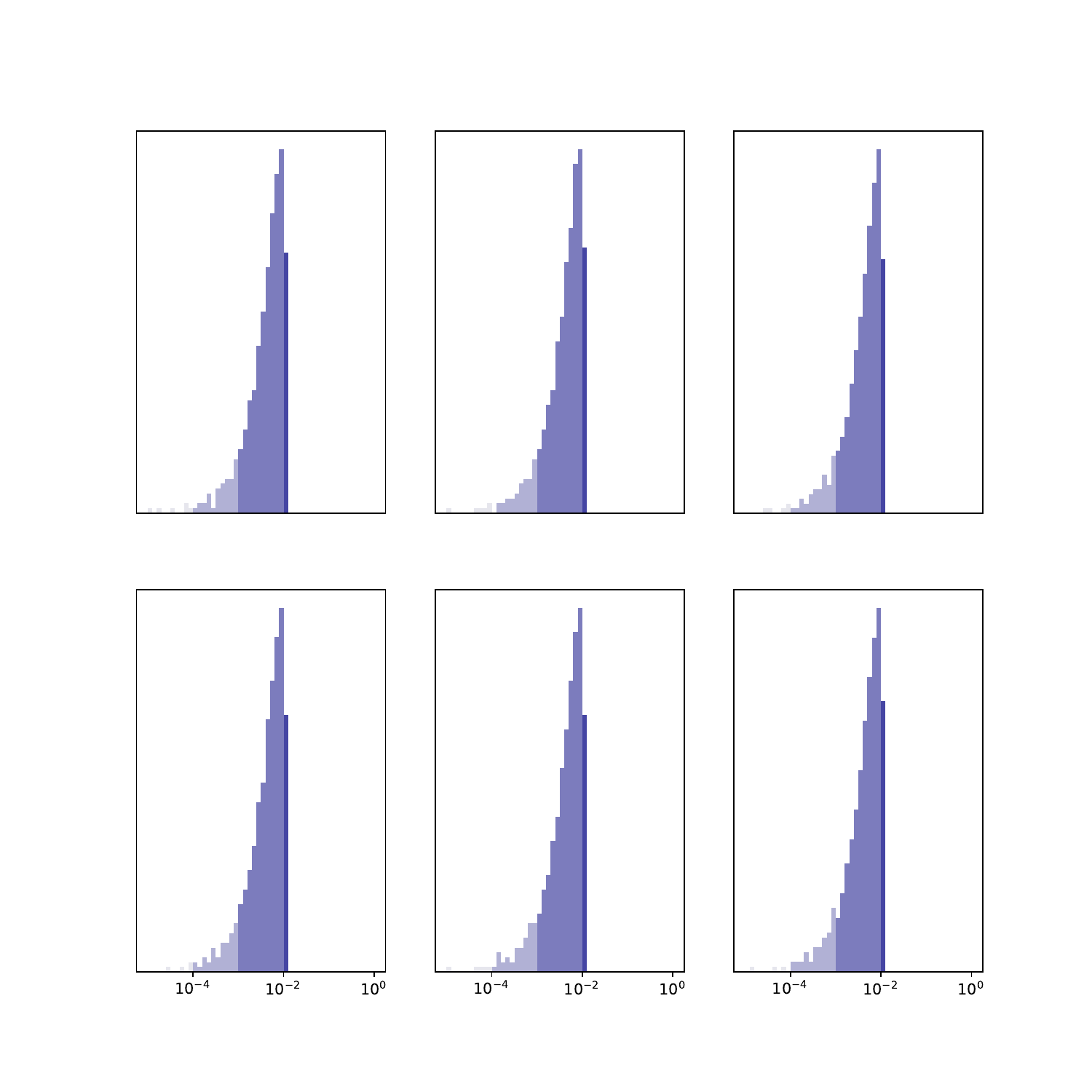}
            \put(38,94){\textbf{Step = 3}}
        \end{overpic}
    \end{minipage}
    \hfill
    \begin{minipage}{0.31\textwidth}
        \begin{overpic}[width=1\textwidth]{./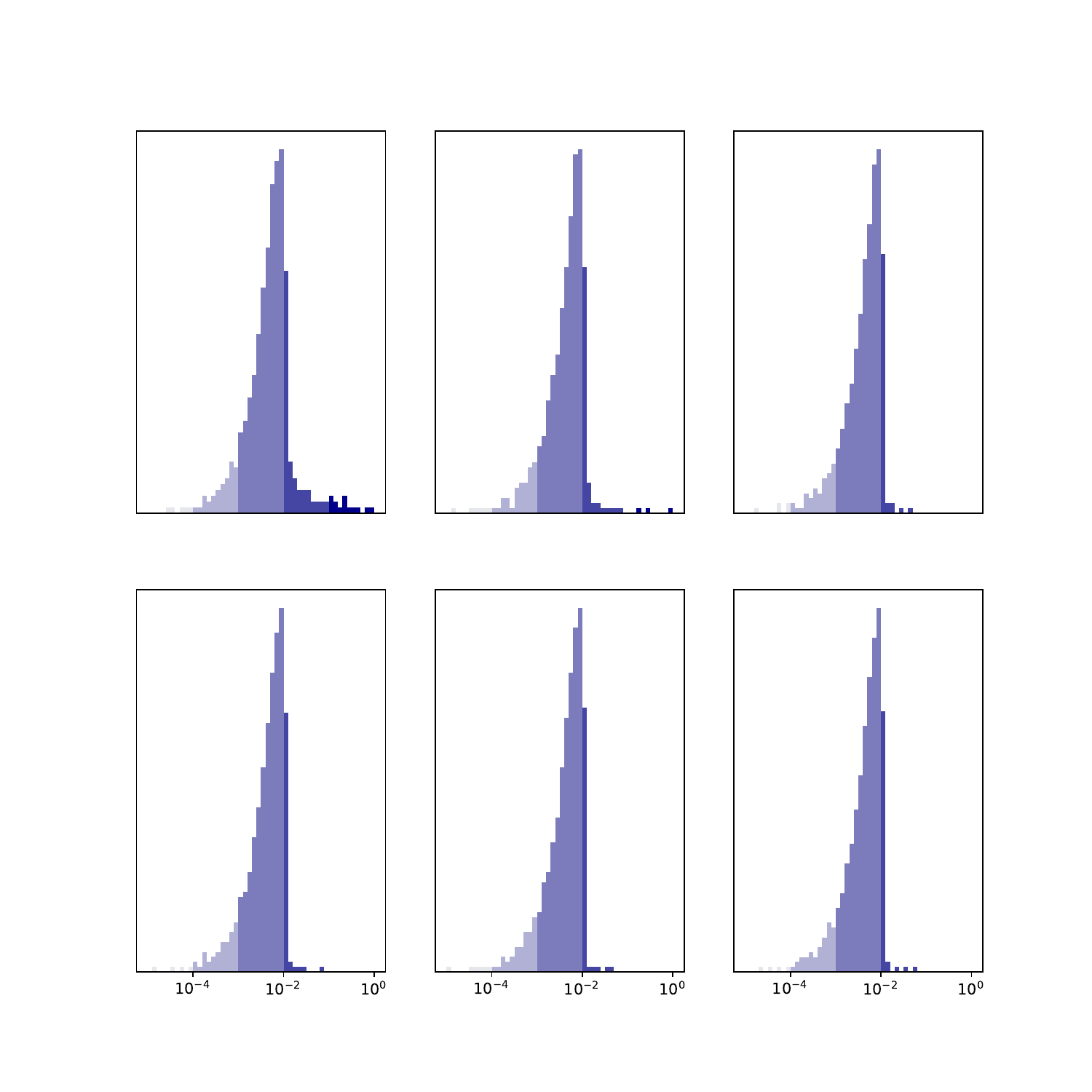}
            \put(34,94){\textbf{Step = 436}}
        \end{overpic}
    \end{minipage}
    \hfill
    \begin{minipage}{0.31\textwidth}
        \begin{overpic}[width=1\textwidth]{./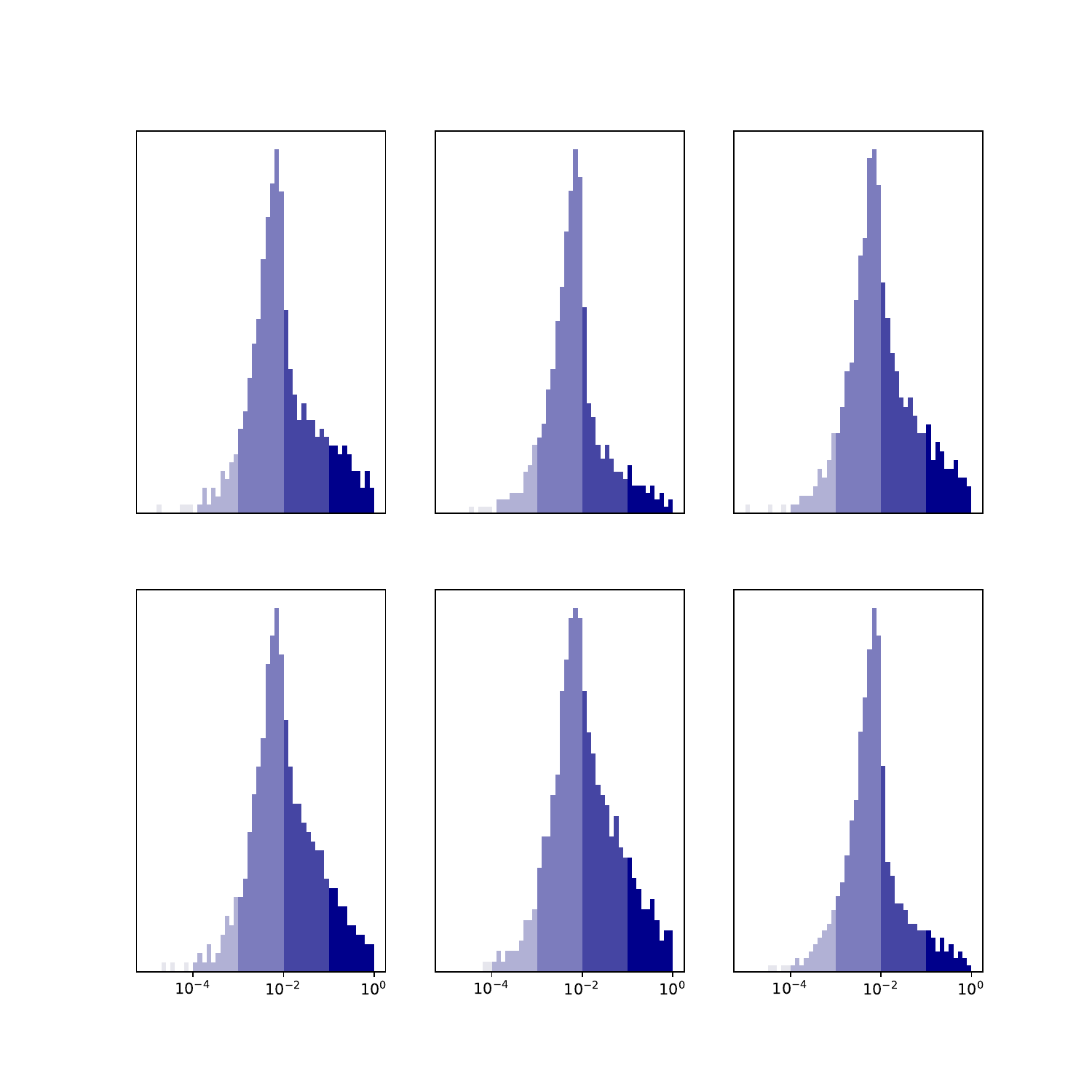}
            \put(32,94){\textbf{Step = 3138}}
        \end{overpic}
    \end{minipage}

    \vspace{0.2cm}

    \hspace{0.3cm}
    \begin{minipage}{0.31\textwidth}
        \begin{overpic}[width=1\textwidth]{./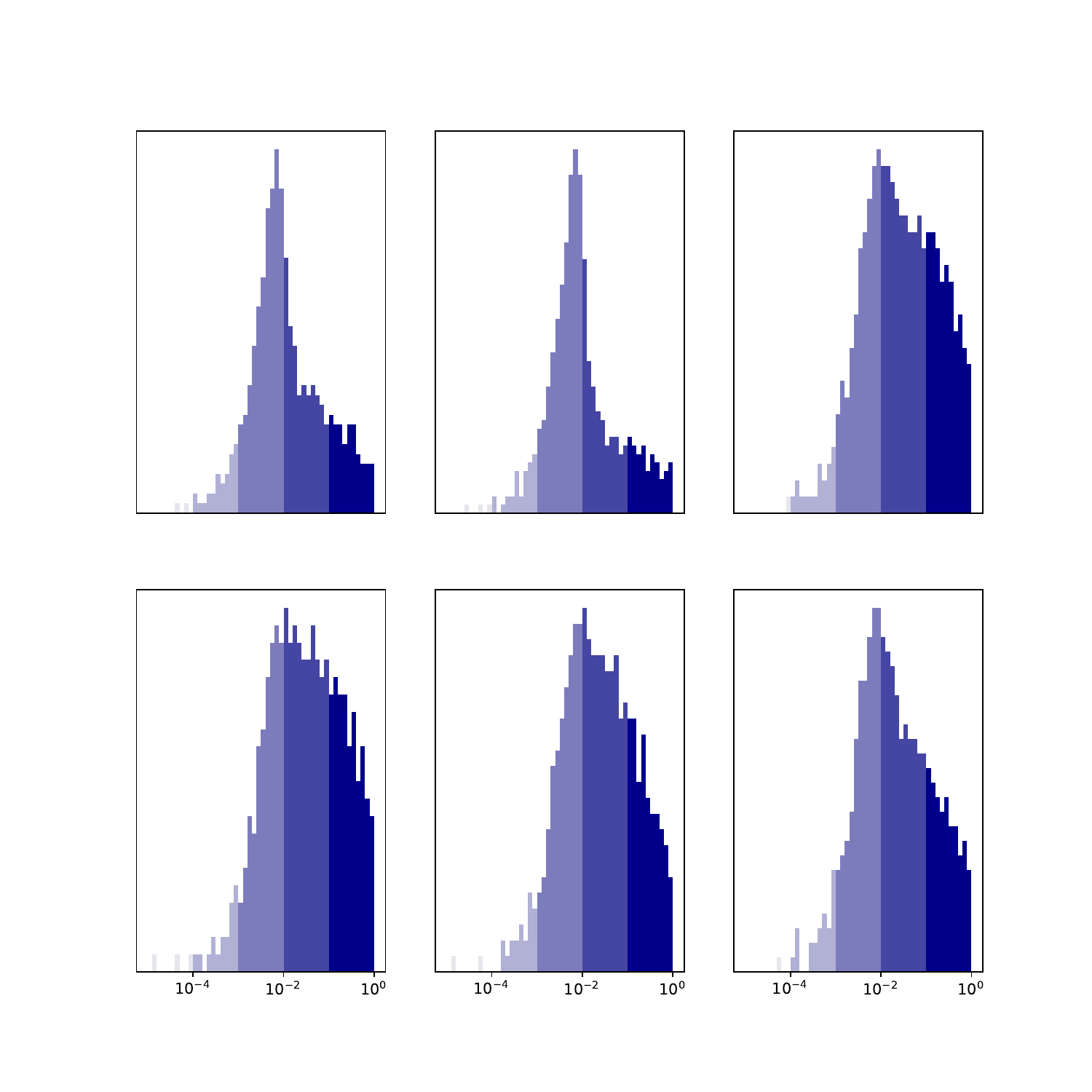}
            \put(32,94){\textbf{Step = 7314}}
        \end{overpic}
    \end{minipage}
    \hfill
    \begin{minipage}{0.31\textwidth}
        \begin{overpic}[width=1\textwidth]{./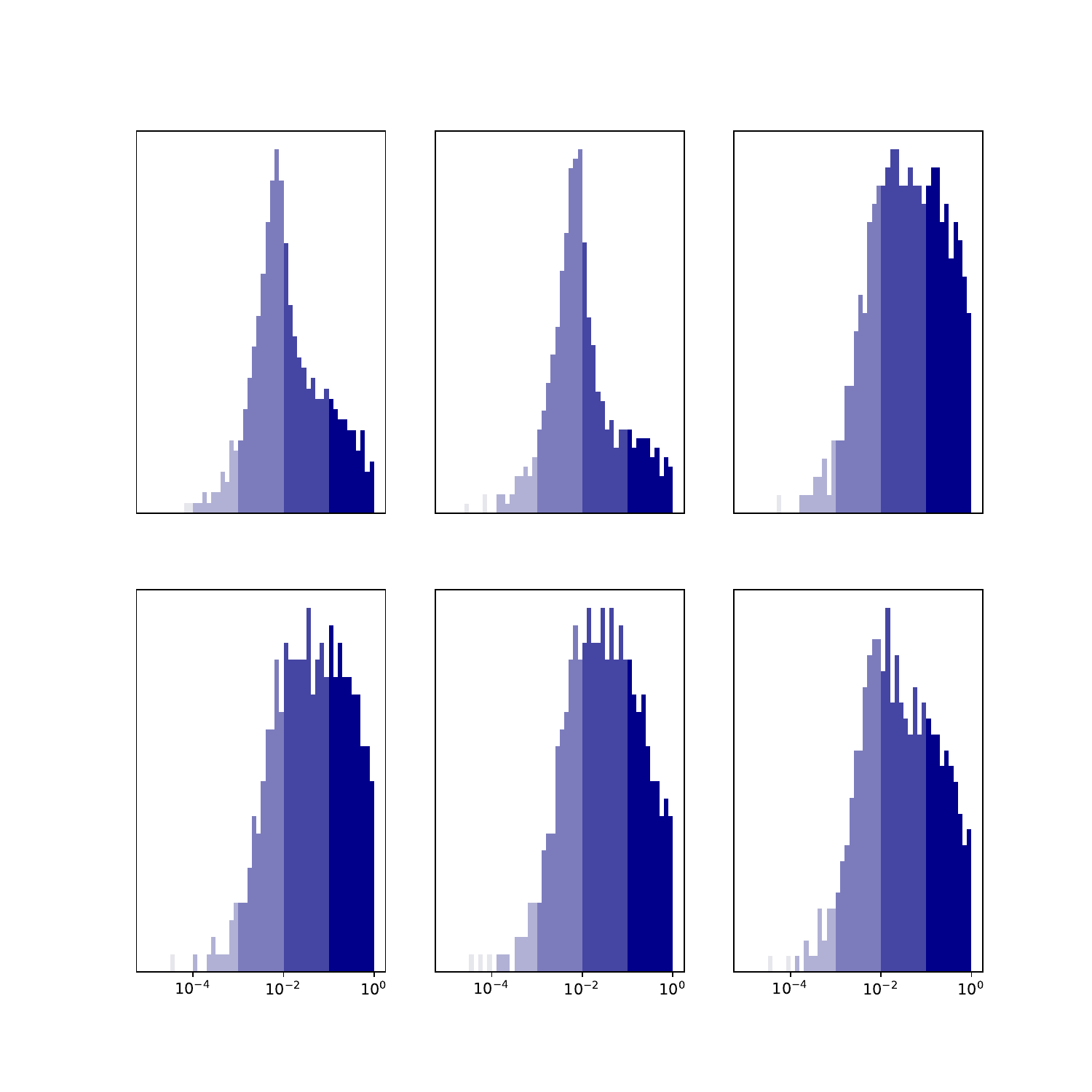}
            \put(30,94){\textbf{Step = 10000}}
        \end{overpic}
    \end{minipage}
    \hfill
    \begin{minipage}{0.31\textwidth}
        \begin{overpic}[width=1\textwidth]{./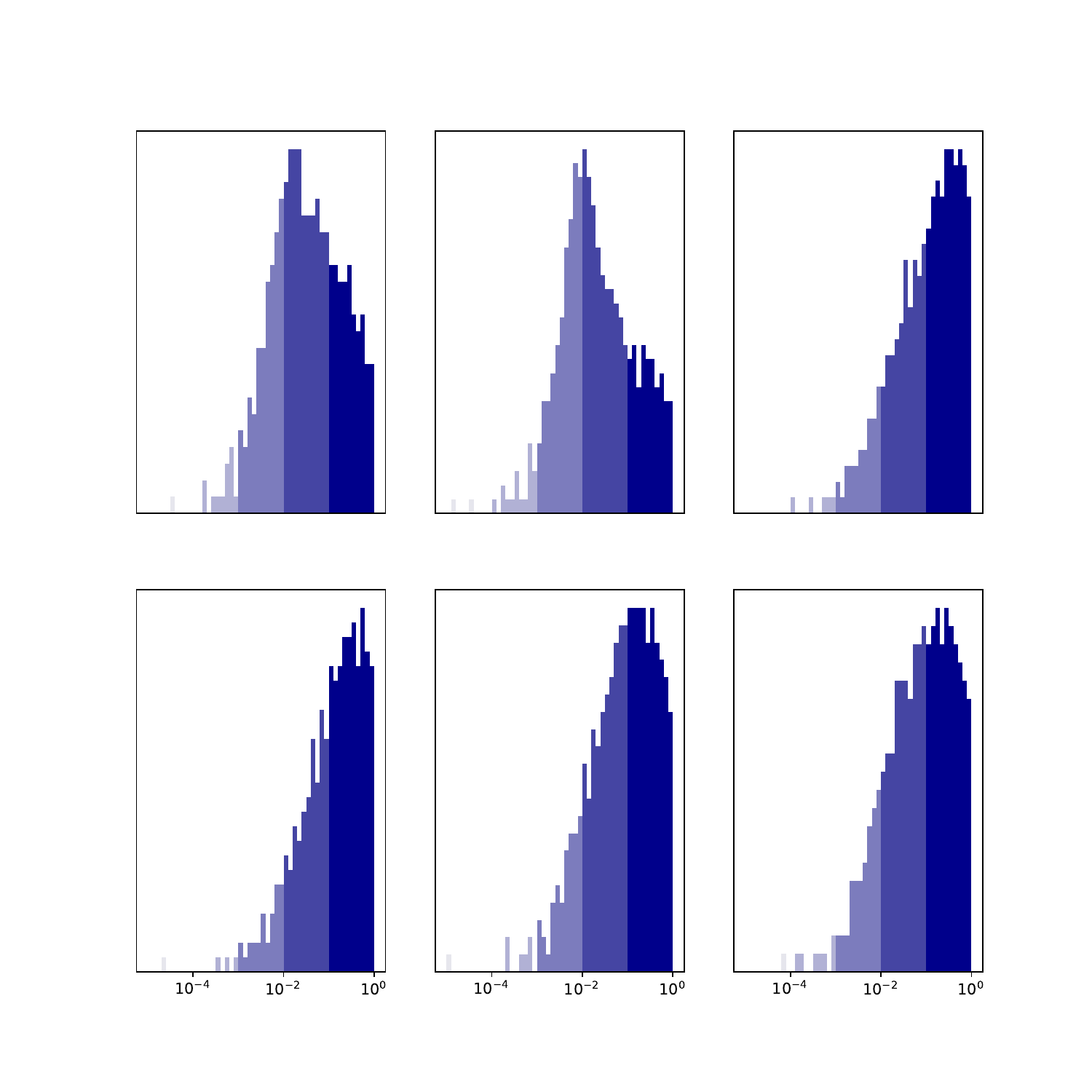}
            \put(30,94){\textbf{Step = 50000}}
        \end{overpic}
    \end{minipage}
    
    \caption{The distribution of all absolute singular values in a weight matrix $\mathbf{W}_Q$ in a Chronos model over training. The model contains $6$ encoder layers, and we arrange the corresponding $6$ weight matrices in row-major order. Note that all histograms are made on a semilog-x scale.}
    \label{fig:weightsTSFMtraining}
\end{figure}

We note that the initialization of Chronos typically relies on a very small scaling factor. That is, the weights are all initialized to be small. From~\Cref{fig:weightsTSFMtraining}, we see that the weight matrices are eventually learned to be larger: that is, by looking at the absolute singular values instead of the relative ones, we see that the leading singular value, i.e., the norm of the weight matrix $\mathbf{W}_Q$, eventually gets larger. When they get larger, we note that the low-rank structure evolves. This ``learning the leading singular direction'' interpretation is more plausible than if the weight matrix $\mathbf{W}_Q$ is initialized large, because it has no incentive to ``forget the residual ranks'' when it does not need to.

\subsection{Watching the Weights in a Multihead Attention}

In~\Cref{fig:compare_head}, we showed the ``correlation matrix'' of a three-head attention matrix $\mathbf{W}_Q$. In~\Cref{fig:headsheatmap}, we show more of these matrices when a different number of heads $h$. For each number of heads $h$, we can see clearly $h$ low-rank blocks on the diagonal, corresponding to the in-head low-rank attention.

\begin{figure}[!htb]
    \centering

    \hspace{0.3cm}
    \begin{minipage}{0.31\textwidth}
        \begin{overpic}[width=1\textwidth]{./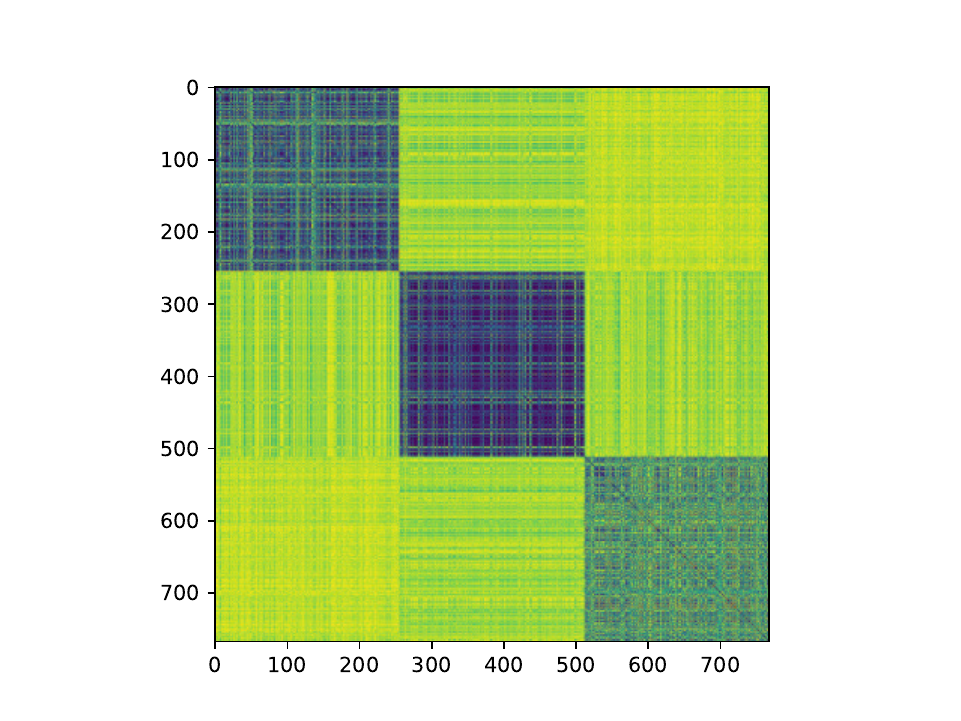}
            \put(43,70){\textbf{$\boldsymbol{h}$ = 3}}
        \end{overpic}
    \end{minipage}
    \hfill
    \begin{minipage}{0.31\textwidth}
        \begin{overpic}[width=1\textwidth]{./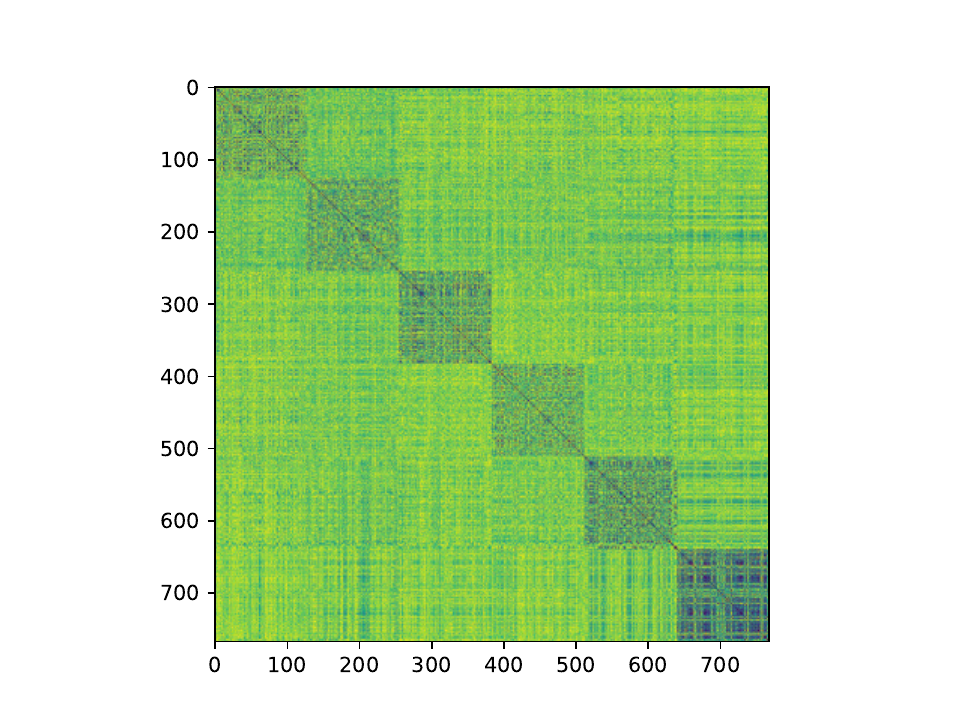}
            \put(43,70){\textbf{$\boldsymbol{h}$ = 6}}
        \end{overpic}
    \end{minipage}
    \hfill
    \begin{minipage}{0.31\textwidth}
        \begin{overpic}[width=1\textwidth]{./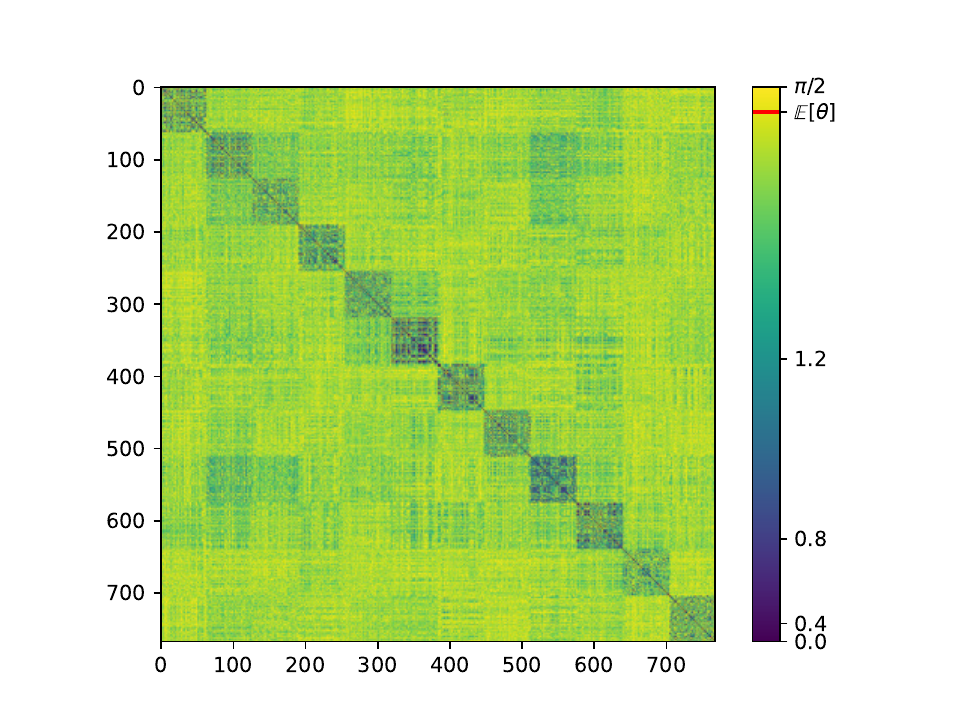}
            \put(35,70){\textbf{$\boldsymbol{h}$ = 12}}
        \end{overpic}
    \end{minipage}

    \vspace{0.2cm}

    \hspace{0.3cm}
    \begin{minipage}{0.31\textwidth}
        \begin{overpic}[width=1\textwidth]{./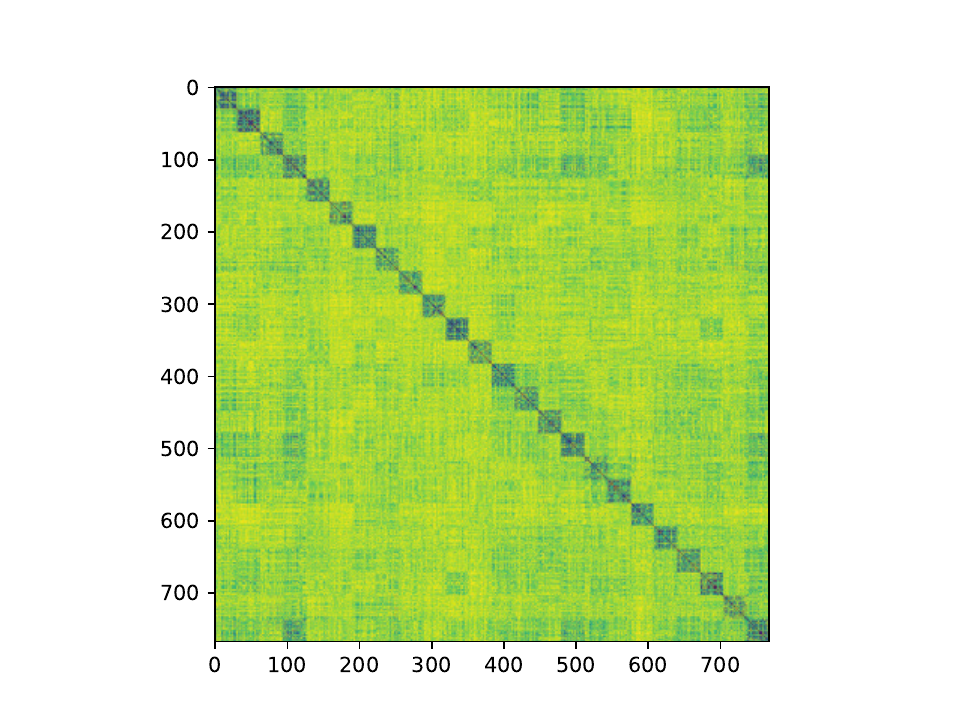}
            \put(41,70){\textbf{$\boldsymbol{h}$ = 24}}
        \end{overpic}
    \end{minipage}
    \hfill
    \begin{minipage}{0.31\textwidth}
        \begin{overpic}[width=1\textwidth]{./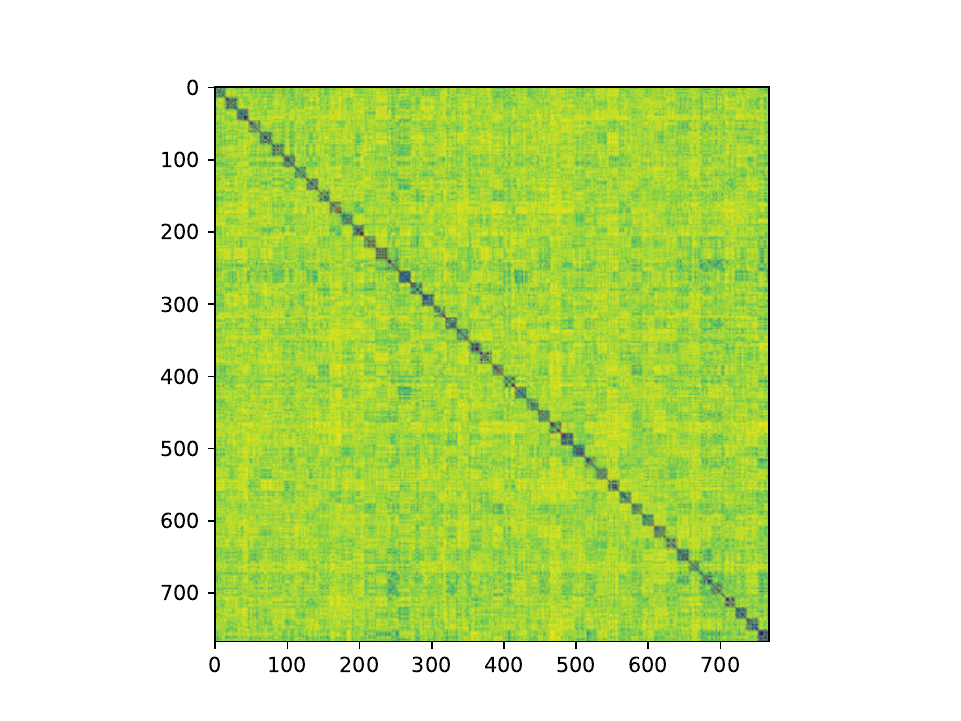}
            \put(41,70){\textbf{$\boldsymbol{h}$ = 48}}
        \end{overpic}
    \end{minipage}
    \hfill
    \begin{minipage}{0.31\textwidth}
        \begin{overpic}[width=1\textwidth]{./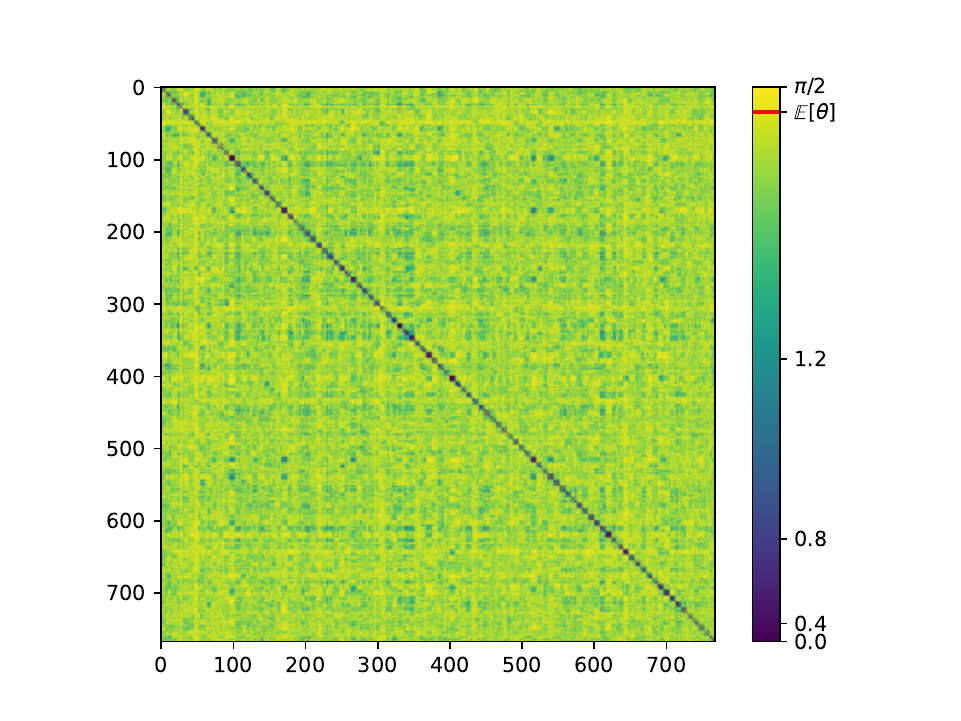}
            \put(35,70){\textbf{$\boldsymbol{h}$ = 96}}
        \end{overpic}
    \end{minipage}
    
    \caption{We pretrain a Chronos model with a different number of heads $h$. As $h$ changes, we show the angle between every pair of rows of $\mathbf{W}_Q$ in the first encoder layer.}
    \label{fig:headsheatmap}
\end{figure}

One interesting observation we can make is that the off-diagonal parts of the heatmaps, while much more yellowish than the diagonal parts, also have
block structures. This is expected: if the rows in each head are very colinear, then when you compare rows between two different heads, they should share a similar angle. That is, we have a duality that not only holds for attention matrices but also holds in general: the darker a diagonal block is, the clearer the edges of off-diagonal blocks in the same row of blocks will be.

The off-diagonal parts of attention matrices, while much more orthogonal than the rows within each head, are still far from random. This means the heads themselves are also somewhat correlated, which is not surprising given
that the row spaces of query, key, and value matrices are subspaces of the potentially low-dimensional row space of the input and that the models are trained with a single objective. There are two forces pulling against each other: a head-dependent random initialization, which “orthogonalizes” different heads, and a head-independent training objective that tries to align these heads.

There is a third trend (and the second duality) that is also very interesting: darker diagonal blocks seem to correspond to
lighter off-diagonal blocks. There is one potential explanation for that: if the diagonal blocks are very dark, that means each head only attends to a tiny bit of information of the input. In order to obtain good results, other heads must incorporate the remaining bits of the input, resulting in very different sketchings and larger angles. On the other hand, if the diagonal blocks are brighter, that means it already contains a fair amount of information about the input, and lots of the input information is shared across heads. Hence, they should be much more correlated during training.

\clearpage

\section{More on the Chebyshev Embedding}\label{sec:chebyshev}

In~\Cref{fig:compare_ranks}, we show an experiment where we increase the rank of a fixed input embedding and watch the numerical ranks of pretrained Chronos models with that embedding. Here, we further explain how we can control the rank of this fixed input embedding function using Chebyshev polynomials. To motivate our design, we first consider how we can compute a rank-$1$ embedding function $\boldsymbol{\Phi}_1: \R \rightarrow \R^d$. In this case, what $\boldsymbol{\Phi}_1$ needs to do is to map the real line linearly onto a one-dimensional subspace of $\R^d$. That is, we can set $\boldsymbol{\Phi}_1(x) = x\mathbf{u}$ for some fixed unit vector $\mathbf{u} \in \R^d$.

\begin{figure}[!htb]
    \centering

    \begin{minipage}{0.32\textwidth}
        \begin{overpic}[width=1\textwidth]{./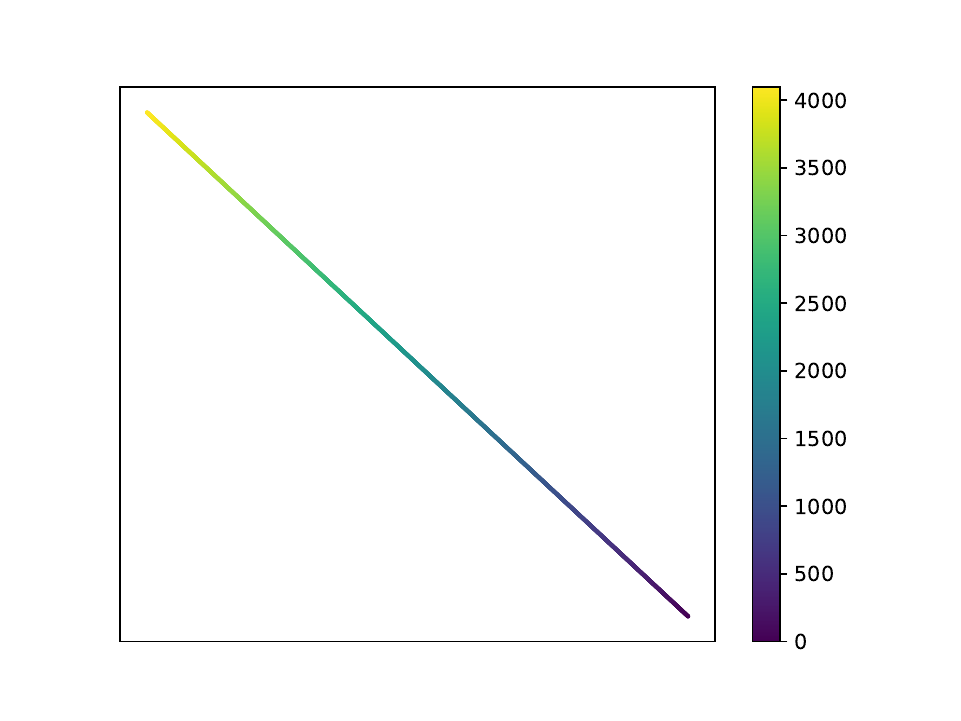}
        \put(27,71){\textbf{degree = 1}}
        \end{overpic}
    \end{minipage}
    \begin{minipage}{0.32\textwidth}
        \begin{overpic}[width=1\textwidth]{./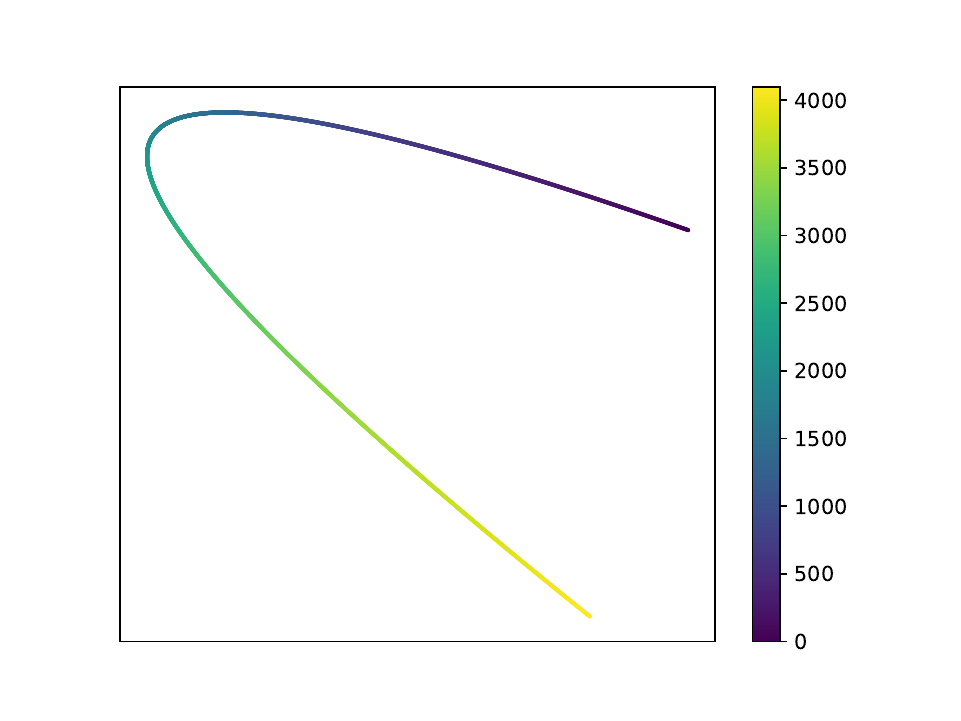}
        \put(27,71){\textbf{degree = 2}}
        \end{overpic}
    \end{minipage}
    \begin{minipage}{0.32\textwidth}
        \begin{overpic}[width=1\textwidth]{./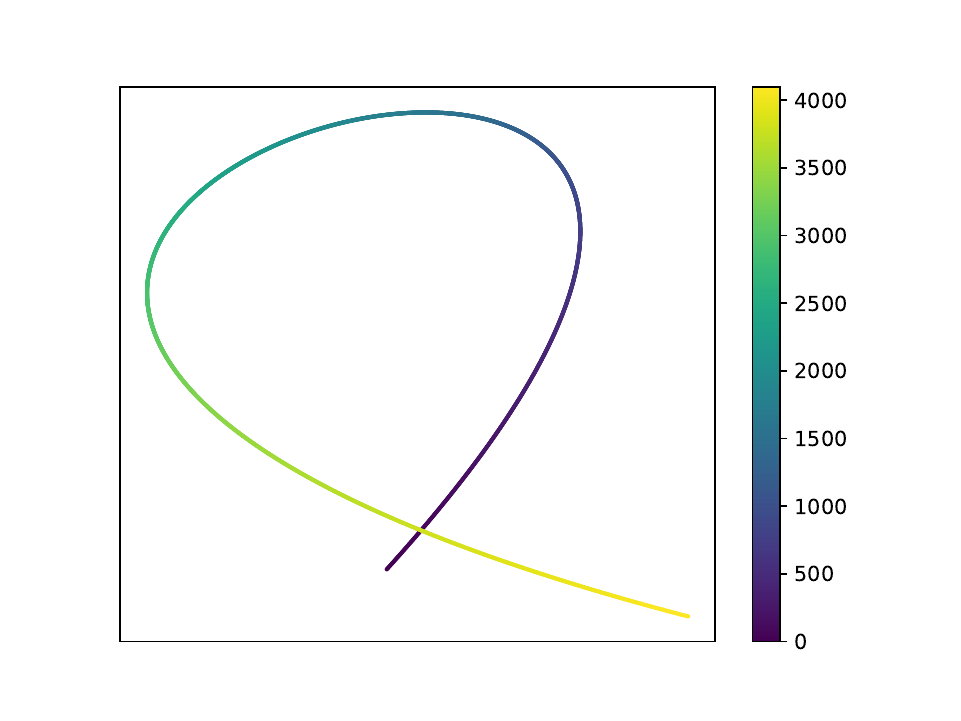}
        \put(27,71){\textbf{degree = 3}}
        \end{overpic}
    \end{minipage}

    \vspace{0.5cm}

    \begin{minipage}{0.32\textwidth}
        \begin{overpic}[width=1\textwidth]{./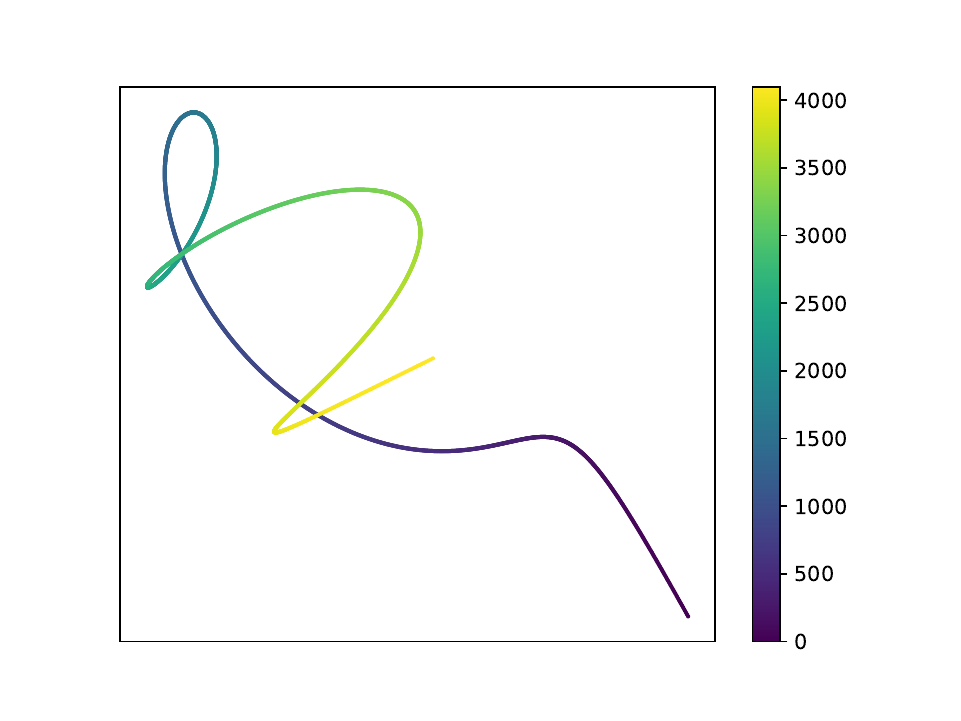}
        \put(27,71){\textbf{degree = 8}}
        \end{overpic}
    \end{minipage}
    \begin{minipage}{0.32\textwidth}
        \begin{overpic}[width=1\textwidth]{./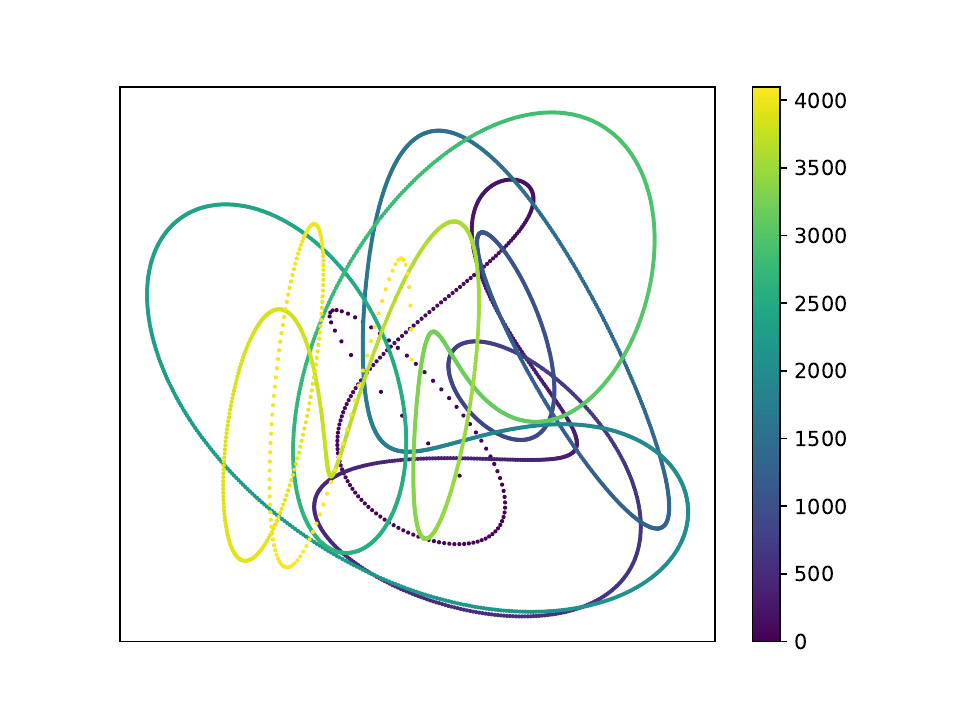}
        \put(25,71){\textbf{degree = 32}}
        \end{overpic}
    \end{minipage}
    
    \caption{Visualization of the Chebyshev embeddings. For each embedding, we visualize the first two dimensions out of the $768$ in the hidden space. There are $4096$ points embedded in the hidden space, whose corresponding values in the input domain range from $-15$ to $15$.}
    \label{fig:chebyshev}
\end{figure}

Now, how can we use this idea to design a rank-$k$ embedding $\boldsymbol{\Phi}_k(x)$? One way to do that is by considering the following extension of $\boldsymbol{\Phi}_k$:
\[
    \boldsymbol{\Phi}_k: \R \rightarrow \R^d, \qquad x \mapsto f_1(x) \mathbf{u}_1 + \cdots + f_k(x) \mathbf{u}_k.
\]
As long as the functions $f_1, \ldots, f_k$ are linearly independent, the image of the real line, $\boldsymbol{\Phi}_k(\R)$, is a subset of a $k$-dimensional subspace of $\R^d$. The only question that remains is: how to choose $f_1(x), \ldots, f_k(x)$. Perhaps the easiest way to choose such a basis is by making them monomials: $f_j(x) = x^j$. However, the monomial basis often suffers from many numerical stability issues and is not ideal for the embedding, e.g., for a large $j$, the set $\{x, \ldots, x^j\}$ is very ill-conditioned. To choose a well-conditioned basis, we use Chebyshev polynomials, which are orthogonal on $[-1,1]$. That is, we set $f_j(x) = T_j(x / x_{\max})$, where $x_{\max}$ is the maximum number considered in quantization, which equals $15$ in the case of Chronos. Given a set of points $x_1, \ldots, x_L$ to embed, we assemble the embedded matrix $\mathbf{X} \in \R^{d \times L}$ as follows:

\begin{enumerate}[leftmargin=*]
    \item Sample orthonormal random column vectors $\mathbf{u}_1, \ldots, \mathbf{u}_k \in \R^{d \times 1}$.
    \item Compute row vectors $\mathbf{f}_1, \ldots, \mathbf{f}_k \in \R^{1 \times L}$, where the $i$th entry of $\mathbf{f}_j$ is $T_j(x_i / x_{\max})$.
    \item Compute the outer product $\mathbf{X} = \sum_{j=1}^k \mathbf{u}_j \mathbf{f}_j$.
\end{enumerate}

We show the visualization of our Chebyshev embeddings in~\Cref{fig:chebyshev}, where as see that as we increase the rank $k$, the embedding becomes more ``complicated.''

\clearpage

\section{Compressing a multi-head attention layer}\label{app:multiheadcompress}

\Cref{thm.attentioncompression} concerns the compressibility of a single-head attention layer. 
In practice, most TSFMs use multi-head attention instead:\footnote{In~\cref{eq.MHAttention}, the $1/\sqrt{d}$ normalization factor in an attention is changed into $1/\sqrt{d_h}$.}
\begin{equation}\label{eq.MHAttention}
    \begin{aligned}
        &\text{MH-Attention}(\mathbf{U}; \mathbf{W}_Q, \mathbf{W}_K, \mathbf{W}_V, h) = \begin{bmatrix}
            \text{Attention}(\mathbf{U}; \mathbf{W}_Q^{(1)}, \mathbf{W}_K^{(1)}, \mathbf{W}_V^{(1)}) \\ \vdots \\ \text{Attention}(\mathbf{U}; \mathbf{W}_Q^{(h)}, \mathbf{W}_K^{(h)}, \mathbf{W}_V^{(h)})
        \end{bmatrix}, \\
        &\mathbf{W}_{Q/K/V} = \begin{bmatrix}
            (\mathbf{W}_{Q/K/V}^{(1)})^\top & \cdots & (\mathbf{W}_{Q/K/V}^{(h)})^\top
        \end{bmatrix}^\top, \quad \mathbf{W}_{Q/K/V}^{(i)} \in \R^{d_h \times d}, \quad d_h = d / h.
    \end{aligned}
\end{equation}
This leads to the following question: does the number of heads have an effect on the numerical ranks of the attention matrices in trained Chronos models? 
The answer to this is positive, but there are some important subtleties. 
If we look at the left panel of~\Cref{fig:compare_head}, then we see that if we pretrain Chronos models with fixed hidden dimension $d = 1024$ and only increase the number of heads, the numerical rank of attention matrices increases. 
At first, it may seem that this happens because a multi-head attention is less compressible than a single-head one, but this is not the case. It is straightforward to show that~\Cref{thm.attentioncompression} holds as well for multi-head attentions (see~\Cref{thm.multiheadcompression} in~\Cref{app:attcompress}). 
In particular, if $\boldsymbol{\Xi}$ has an algebraic rank of $\tilde{d}$, then all multi-head attention matrices can be compressed to rank-$\tilde{d}$ without affecting the output.

\begin{figure}[h]
    \centering
    \begin{minipage}{0.32\textwidth}
        \begin{overpic}[width = 1.0\textwidth]{./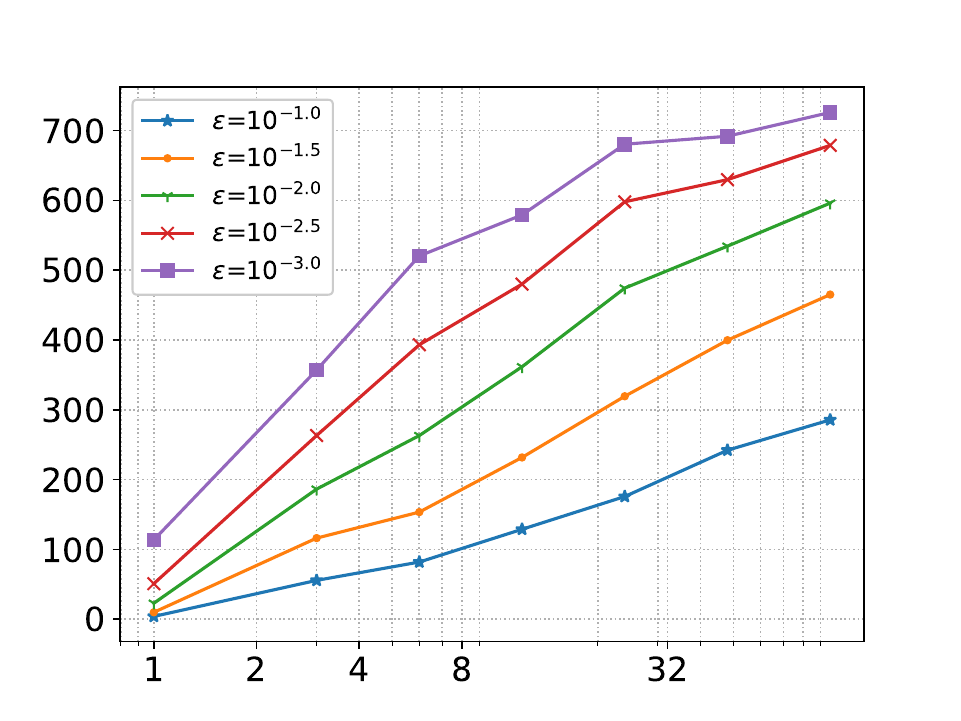}
            \put(-5,22){\rotatebox{90}{\scriptsize $\text{rank}_\varepsilon(\mathbf{W}_Q)$}}
            \put(52,-3){\scriptsize $h$}
            \put(32,70){\small \textbf{Rank of $\mathbf{W}_Q$}}
        \end{overpic}
    \end{minipage}
    \hfill
    \begin{minipage}{0.32\textwidth}
        \begin{overpic}[width = 1.0\textwidth]{./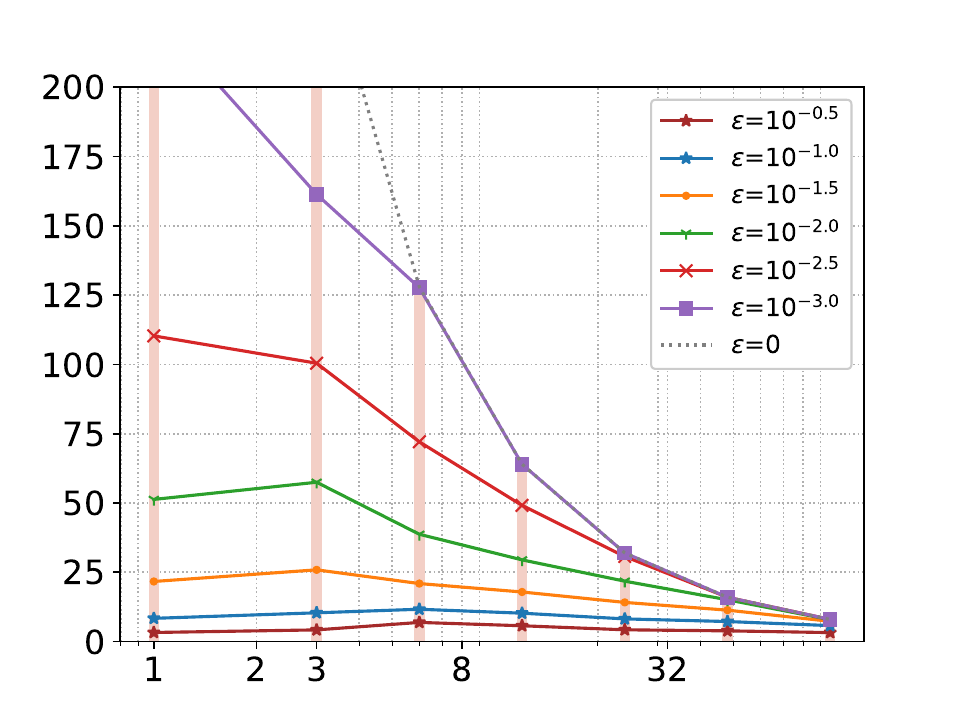}
            \put(-5,22){\rotatebox{90}{\scriptsize $\text{rank}_\varepsilon(\mathbf{W}_Q)$}}
            \put(52,-3){\scriptsize $h$}
            \put(16,70){\small \textbf{Rank of per-head $\mathbf{W}_Q^{(i)}$}}
        \end{overpic}
    \end{minipage}
    \hfill
    \begin{minipage}{0.32\textwidth}
        \begin{overpic}[width = 1.0\textwidth]{./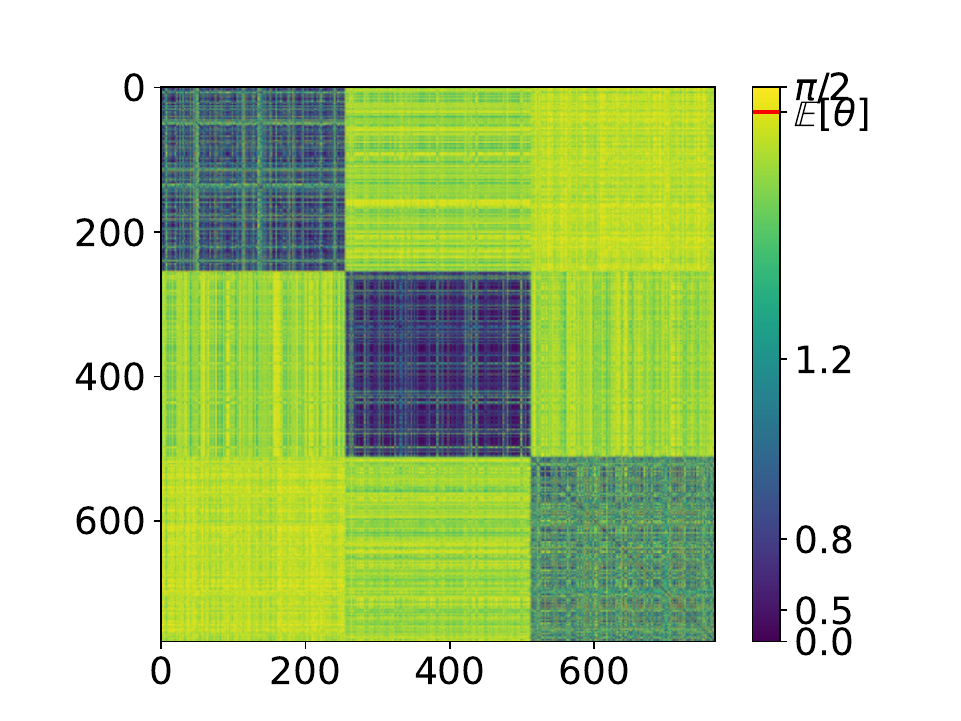}
            \put(45,-2){\scriptsize $i$}
            \put(0,34){\rotatebox{90}{\scriptsize $j$}}
            \put(90,54){\rotatebox{270}{\tiny $\theta(\mathbf{w}^{(i)}_Q, \mathbf{w}^{(j)}_Q)$}}
            \put(5,70){\small \textbf{Angles between Rows of $\mathbf{W}_Q$}}
        \end{overpic}
    \end{minipage}
    \caption{The left panel shows the averaged $\varepsilon$-rank of query projection matrices $\mathbf{W}_Q$ in pretrained Chronos models with varying number of heads. The middle panel shows the averaged $\varepsilon$-rank of query projection submatrices $\mathbf{W}^{(i)}_Q$ for every head. The right panel shows the angle between every pair of rows of $\mathbf{W}_Q$ in the first layer of a $3$-head pretrained Chronos model.}
    \label{fig:compare_head}
\end{figure}

If~\Cref{thm.multiheadcompression} holds for multi-head attention, 
then why does a multi-head attention exhibit a higher numerical rank? 
To understand this, we need to understand the mechanism of a low-rank weight matrix.
We use $\mathbf{W}_Q$ for illustration, and the same analogous applies to $\mathbf{W}_K$ and $\mathbf{W}_V$. 
Let $\mathbf{W}_Q \in \R^{d \times d}$ be approximated by a rank-$\tilde{d}$ matrix, i.e., $\mathbf{W}_Q \approx \mathbf{W}_1 \mathbf{W}_2$, where $\mathbf{W}_1 \in \R^{d \times \tilde{d}}$ and $\mathbf{W}_2 \in \R^{\tilde{d} \times d}$. 
We view the action $\mathbf{W}_Q \mathbf{U}$, which maps every column of $\mathbf{U}$ from $\R^d$ into $\R^d$, as a two-step process. 
First, we multiply $\mathbf{W}_2$ to $\mathbf{U}$ to drop the columns of $\mathbf{U}$ from $\R^d$ to $\R^{\tilde{d}}$. 
In numerical linear algebra, this is known as ``sketching,''~\citep{DM16_CACM} 
i.e., $\mathbf{W}_2$ ``sketches'' a $\tilde{d}$-dimensional subspace $R(\mathbf{W}_2 \mathbf{U})$ in $\R^L$, the row space of $\mathbf{W}_2 \mathbf{U}$, from a $d$-dimensional subspace $R(\mathbf{U})$ in $\R^L$.\footnote{In numerical linear algebra (NLA), sketching is usually considered as an operation on the column space. To adapt it to our framework, we sketch the row space instead. This is achieved by taking the transpose of the matrix $\mathbf{U}$ and sketching its column space as usually done in NLA.} 
If $\sigma_{\tilde{d}+1}(\mathbf{U})$ is small, then there exists a sketching matrix $\mathbf{W}_2$ so that $R(\mathbf{W}_2 \mathbf{U}) \approx R(\mathbf{U}) \supset R(\mathbf{W}_Q \mathbf{U})$; hence, we can apply a matrix $\mathbf{W}_1$ to lift the columns of $\mathbf{W}_2 \mathbf{U}$ from $\R^{\tilde{d}}$ back to $\R^d$ and have $\mathbf{W}_1(\mathbf{W}_2 \mathbf{U}) \approx \mathbf{W}_Q \mathbf{U}$.

If we apply a low-rank approximation to a multi-head attention matrix $\mathbf{W}_Q$, then we obtain:
\begin{equation}\label{eq.sharedsketching}
    \begin{bmatrix}
        \mathbf{W}_Q^{(1)} \\
        \vdots \\
        \mathbf{W}_Q^{(h)}
    \end{bmatrix} = \mathbf{W}_Q \approx \mathbf{W}_1 \mathbf{W}_2 = \begin{bmatrix}
        \mathbf{W}_1^{(1)} \\
        \vdots \\
        \mathbf{W}_1^{(h)}
    \end{bmatrix} \mathbf{W}_2 =  \begin{bmatrix}
        \mathbf{W}_1^{(1)} \mathbf{W}_2 \\
        \vdots \\
        \mathbf{W}_1^{(h)} \mathbf{W}_2
    \end{bmatrix}, \qquad \mathbf{W}_1^{(i)} \in \R^{d_h \times \tilde{d}}.
\end{equation}
When we apply $\mathbf{W}_Q$ to $\mathbf{U}$, for each head $1 \leq i \leq h$, $\mathbf{W}_2$ sketches a $\tilde{d}$-dimensional subspace from the row space of $\mathbf{U}$ while $\mathbf{W}_1^{(i)}$ forms $d_h$ rows from this $\tilde{d}$-dimensional space. 
This makes the problem clear:
in an ``optimal'' low-rank approximation of $\mathbf{W}_Q$, the sketching matrix $\mathbf{W}_2$ is shared across all heads. Since each head is independent from the others in a multi-head attention, there is no guarantee that, in a pretrained model, the sketching matrices will be shared. 
In the right panel of~\Cref{fig:compare_head}, we visualize the angles (see~\cref{eq.angles}) between every pair of rows of $\mathbf{W}_Q$. 
If $\mathbf{W}_2$ is shared across all heads, then all rows of $\mathbf{W}_Q$ should exhibit some linear dependency. We only see low-rank structures within each head $\mathbf{W}_Q^{(i)}$ (i.e., the dark $d_h \times d_h$ blocks along the diagonal). 
This shows that instead of~\cref{eq.sharedsketching}, $\mathbf{W}_Q$ from a pretrained Chronos model looks more like
\begin{equation}\label{eq.independentsketching}
    \begin{bmatrix}
        \mathbf{W}_Q^{(1)} \\
        \vdots \\
        \mathbf{W}_Q^{(h)}
    \end{bmatrix} = \mathbf{W}_Q \approx  \begin{bmatrix}
        \mathbf{W}_1^{(1)} \mathbf{W}_2^{(1)} \\
        \vdots \\
        \mathbf{W}_1^{(h)} \mathbf{W}_2^{(h)}
    \end{bmatrix}, \qquad \mathbf{W}_1^{(i)} \in \R^{d_h \times \tilde{d}}, \qquad \mathbf{W}_2^{(i)} \in \R^{\tilde{d} \times d},
\end{equation}
which leverages a head-dependent sketching $\mathbf{W}_2^{(i)}$. 
The rank of the right-hand side matrix in~\cref{eq.independentsketching}, which consists of $h$ rank-$\tilde{d}$ submatrices, can be as large as $h\tilde{d}$. This finding explains the phenomenon we observed at the beginning of this section, i.e., while the numerical rank $\tilde{d}$ of the input stays the same, the rank of the attention matrices increases with the number of heads $h$. 
The middle panel of~\Cref{fig:compare_head} confirms this numerically. We see that the numerical rank of the projection matrix $\mathbf{W}_Q^{(i)}$ in each head remains relatively constant for reasonably large $\varepsilon$ as we increase the number of heads.

Our observation in~\Cref{fig:compare_head} is empirical and explanatory: it says in a pretrained TSFM, the attention matrices $\mathbf{W}_{Q/K/V}$ are essentially doing the head-dependent sketching (see~\cref{eq.independentsketching}). 
From a methodological point of view, if we train a compressed model (i.e., the one that is paramaterized by $\mathbf{W}_1$ and $\mathbf{W}_2$ instead of $\mathbf{W}_Q$) from scratch, should we adopt a parameter-efficient head-independent sketching in~\cref{eq.sharedsketching} or a head-dependent one, which requires more parameters? 
If we know the numerical rank $\tilde{d}$ of $\boldsymbol{\Xi}$ so that $\sigma_{\tilde{d}+1}(\boldsymbol{\Xi})$ is tiny, then from an approximation theory perspective, there is little loss using~\cref{eq.sharedsketching}. 
The only problem is: we do not know $\tilde{d}$ a priori. 
If we choose $\tilde{d}$ small enough that $\sigma_{\tilde{d}+1}(\boldsymbol{\Xi})$ is still relatively large, then the following theorem (see~\Cref{app:multiheadcompress} for the proof) shows the benefit of using a head-dependent sketching $\mathbf{W}_2^{(h)}$ in~\cref{eq.independentsketching}.

\begin{thm}\label{thm.sparsebetter}
Fix some $d = h \times d_h$ for two positive integers $h$ and $d_h$ and $L \geq d$. Let $C \geq 1$ be any stability bound. Let $1 \geq \sigma_1 \geq \sigma_2 \geq \cdots \geq \sigma_d > 0$ be any fixed sequence. There exist an input $\mathbf{U} \in \R^{d \times L}$ with $\sigma_j(\mathbf{U}) = \sigma_j$ for all $1 \leq j \leq d$ and three matrices $\mathbf{W}_Q, \mathbf{W}_K, \mathbf{W}_V \in \R^{d \times d}$, such that the following statements hold for any $\tilde{d} < {d}_h$:
\begin{enumerate}[leftmargin=*]
    \item Given any rank-$\tilde{d}$ matrix $\tilde{\mathbf{W}}_V \in \R^{d \times d}$ with $\|\tilde{\mathbf{W}}_V\|_F \leq C \|{\mathbf{W}}_V\|_F$, we have
    \[
        \left\|\text{MH-Attention}(\mathbf{U}; \mathbf{W}_Q, \mathbf{W}_K, \mathbf{W}_V, h) - \text{MH-Attention}(\mathbf{U}; \mathbf{W}_Q, \mathbf{W}_K, \tilde{\mathbf{W}}_V, h)\right\|_2 \geq \frac{1}{2}\sigma_{\tilde{d}+1}.
    \]
    \item There exist low-rank matrices $\tilde{\mathbf{W}}_Q, \tilde{\mathbf{W}}_K, \tilde{\mathbf{W}}_V \in \R^{d \times d}$ with $\|\tilde{\mathbf{W}}_Q\|_F \leq \|{\mathbf{W}}_Q\|_F$, $\|\tilde{\mathbf{W}}_K\|_F \leq \|{\mathbf{W}}_K\|_F$, and $\|\tilde{\mathbf{W}}_V\|_F \leq \|{\mathbf{W}}_V\|_F$, such that
    \[
    \begin{aligned}
        \tilde{\mathbf{W}}_{Q/K/V} \!=\!\! \left[\begin{array}{c}
            \!{\mathbf{W}}_{Q/K/V,1}^{(1)} {\mathbf{W}}_{Q/K/V,2}^{(1)}\! \\
            \hline
            \vdots \\
            \hline
            \!{\mathbf{W}}_{Q/K/V,1}^{(h)} {\mathbf{W}}_{Q/K/V,2}^{(h)}\!
        \end{array}\right], \quad\!\! \mathbf{W}_{Q/K/V,1}^{(i)} \!\in\! \R^{d_h \times (\tilde{d}+1)}, \quad\!\! \mathbf{W}_{Q/K/V,2}^{(i)} \!\in\! \R^{(\tilde{d}+1) \times d},
    \end{aligned}
    \]
    where every row of ${\mathbf{W}}_{Q/K/V,2}^{(i)}$, except the last row of ${\mathbf{W}}_{K,2}^{(i)}$, contains at most $d_h$ non-zero entries, and satisfies that
    \[
        \left\|\text{MH-Attention}(\mathbf{U}; \!\mathbf{W}_Q, \!\mathbf{W}_K, \!\mathbf{W}_V, \!h) \!-\! \text{MH-Attention}(\mathbf{U}; \!\tilde{\mathbf{W}}_Q, \!\tilde{\mathbf{W}}_K, \!\tilde{\mathbf{W}}_V, \!h)\right\|_2 \!\!\leq\! 4\sqrt{h} \sigma_{h\tilde{d}+1}.
    \]
\end{enumerate}
\end{thm}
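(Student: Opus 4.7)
The plan is to use a single explicit construction of $\mathbf{U}$, $\mathbf{W}_Q$, $\mathbf{W}_K$, $\mathbf{W}_V$ that is adversarial for globally shared sketching (Part 1) while admitting a near-lossless sparse head-dependent sketching (Part 2). I take $\mathbf{U} = [\,\mathrm{diag}(\sigma_1, \ldots, \sigma_d)\mid\mathbf{0}\,] \in \R^{d \times L}$, which realizes the prescribed singular values with singular directions aligned to coordinate axes. For the value projection, $\mathbf{W}_V$ is the permutation matrix that interleaves coordinates across heads, routing input coordinate $(j-1)h + i$ to slot $j$ of head $i$; then $\mathbf{W}_V^{(i)}\mathbf{U}$ carries exactly the singular values $\sigma_i, \sigma_{h+i}, \ldots, \sigma_{(d_h-1)h+i}$ on distinct coordinates. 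For the scoring matrices, $\mathbf{W}_Q^{(i)}, \mathbf{W}_K^{(i)}$ are chosen so that head $i$'s score $(\mathbf{W}_Q^{(i)})^\top \mathbf{W}_K^{(i)}$ is a large positive multiple of $\sum_{j=1}^{d_h}\sigma_{(j-1)h+i}^{-2}\mathbf{e}_{(j-1)h+i}\mathbf{e}_{(j-1)h+i}^\top$; a scale slightly larger than the $\log(4d)$ used in the proof of Theorem~\ref{thm.attentioncompression} ensures each head's softmax $\mathbf{G}^{(i)}$ concentrates on the $d_h$ columns it ``owns,'' with the induced principal block satisfying $\sigma_{d_h}(\mathbf{G}^{(i)}_{\text{own}}) \geq 1/2$.

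For Part~1, I use that a global rank bound $\mathrm{rank}(\tilde{\mathbf{W}}_V) \leq \tilde{d}$ forces every head's slice $\tilde{\mathbf{W}}_V^{(i)}$ to share a common $\tilde{d}$-dimensional row space: writing $\tilde{\mathbf{W}}_V = \mathbf{A}\mathbf{B}$ with $\mathbf{B} \in \R^{\tilde{d} \times d}$, all $\tilde{\mathbf{W}}_V^{(i)}\mathbf{U}$ lie in the row space of $\mathbf{B}\mathbf{U}$, so the concatenated $\tilde{\mathbf{W}}_V \mathbf{U}$ has global rank at most $\tilde{d}$. Because $\mathbf{W}_V$ is orthogonal, $\sigma_j(\mathbf{W}_V \mathbf{U}) = \sigma_j$ for all $j$, and Eckart--Young yields $\|(\mathbf{W}_V - \tilde{\mathbf{W}}_V)\mathbf{U}\|_2 \geq \sigma_{\tilde{d}+1}$. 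The softmax blocks $\mathbf{G}^{(i)}$ act as near-identity on each head's owned columns (with minimum singular value $\geq 1/2$ on that block), so multiplying by them in the block-row sense loses at most a factor of $1/2$, giving the claimed lower bound $\sigma_{\tilde{d}+1}/2$.

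For Part~2, the coordinate-aligned structure of $\mathbf{U}$ together with the per-head routing of $\mathbf{W}_V$ makes sparse sketching natural. Head $i$'s routed singular values are $\sigma_i, \sigma_{h+i}, \ldots, \sigma_{(d_h-1)h+i}$; keeping only the top $\tilde{d}$ of them leaves a residual of order $\sigma_{\tilde{d}h + i} \leq \sigma_{h\tilde{d}+1}$. I let the sparse factor $\mathbf{W}_{Q/K/V,2}^{(i)} \in \R^{(\tilde{d}+1) \times d}$ consist of $\tilde{d}$ one-hot rows selecting the coordinates $\{(j-1)h + i : 1 \leq j \leq \tilde{d}\}$, plus one additional row that realizes a shared normalization direction used to stabilize the softmax denominator. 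Each selecting row has a single non-zero, well within the $d_h$-sparsity budget; only the last row of $\mathbf{W}_{K,2}^{(i)}$ needs to be dense so that the truncated key can reproduce the large softmax denominator of the original attention. A per-head spectral bound in the style of Theorem~\ref{thm.multiheadcompression}, together with the softmax-sensitivity computation $D_Q$ from its proof, yields per-head error $\mathcal{O}(\sigma_{h\tilde{d}+1})$; converting the block-stacked concatenation back to a single spectral norm contributes a factor of $\sqrt{h}$, and tracking constants produces the $4\sqrt{h}\,\sigma_{h\tilde{d}+1}$ upper bound.

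The hardest step will be the softmax analysis in Part~2. Because the original $(\mathbf{W}_Q^{(i)})^\top \mathbf{W}_K^{(i)}$ carries a large scale chosen to sharpen the softmax, even small sketching perturbations of $\mathbf{W}_Q^{(i)}, \mathbf{W}_K^{(i)}$ could in principle be amplified through the exponential. The role of the dense last row of $\mathbf{W}_{K,2}^{(i)}$ is precisely to absorb this effect by contributing a constant shift to every score in a softmax column (which leaves the softmax output invariant), and verifying that this single ``bias dimension'' suffices to simulate the full attention on the truncated coordinates is the most delicate bookkeeping. The Frobenius stability constraints follow essentially for free: $\mathbf{W}_V$ is a permutation, so any row-restriction of the sketched factors decreases its Frobenius norm, and the sub-selected rows of $\mathbf{W}_Q, \mathbf{W}_K$ likewise lose mass upon truncation.
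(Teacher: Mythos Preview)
Your Part~1 argument has a genuine gap at the sentence ``multiplying by them in the block-row sense loses at most a factor of $1/2$.'' With head-dependent score matrices supported on each head's own coordinate set $S_i=\{(j-1)h+i:1\le j\le d_h\}$, the softmax $\mathbf{G}^{(i)}$ is near-identity only on columns in $S_i$ and collapses the remaining columns to the uniform vector $(1/L)\mathbf 1$. Consequently the MH-output difference is \emph{not} $(\mathbf W_V-\tilde{\mathbf W}_V)\mathbf U$ times a single well-conditioned matrix; instead, up to $O(d/L)$ averaging terms, column $k\in S_i$ of the output difference carries only $\mathbf D^{(i)}(:,k)$ in row-block $i$ and essentially zero in the other row-blocks. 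The spectral norm therefore satisfies $\|\mathbf M\|_2\lesssim\max_i\|\mathbf D^{(i)}|_{S_i}\|_2$, and the global Eckart--Young bound on $\mathbf D=(\mathbf W_V-\tilde{\mathbf W}_V)\mathbf U$ does \emph{not} transfer.

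Concretely, take $\tilde d=1$ and the rank-one adversary $\tilde{\mathbf W}_V=\mathbf a\,\mathbf b^\top$ with $\mathbf b=\sum_{i=1}^h\mathbf e_i$ and $\mathbf a$ having a $1$ in the first slot of every head-block. Then on each head's owned columns the first column is matched exactly and the residual has spectral norm $\sigma_{h+i}$, so $\|\mathbf M\|_2\le\sigma_{h+1}+O(d/L)$. Whenever the prescribed spectrum has $\sigma_{h+1}<\sigma_2/2$ (and $L$ is taken large), your construction violates the claimed lower bound $\tfrac12\sigma_{\tilde d+1}=\tfrac12\sigma_2$. The paper avoids this by making all heads share the \emph{same} $\overline{\mathbf W}_Q,\overline{\mathbf W}_K$, built from unit vectors $\mathbf v_j^{(i)}\in\R^{d_h}$ with pairwise inner products strictly below $1$; after a large scaling $\alpha$, the common $\mathbf G$ is near-identity on \emph{all} $d$ columns, so the output difference is literally $(\mathbf W_V-\tilde{\mathbf W}_V)\mathbf U_D\,\mathbf G$ and Eckart--Young applies directly. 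Your Part~2 plan (sparse per-head selectors plus one dense key row to stabilize the softmax denominator) is essentially the paper's construction up to the $\mathbf W_V$-permutation reparametrization and should go through; the repair needed is solely in the choice of $\mathbf W_Q,\mathbf W_K$.
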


\Cref{thm.sparsebetter} highlights an ``error-correcting'' mechanism of the head-dependent low-rank design in~\cref{eq.independentsketching} in the following sense: if we choose a $\tilde{d}$ where $\sigma_{\tilde{d}+1}$ is large, then the first statement shows that the shared sketching in~\cref{eq.sharedsketching} inevitably limits the expressiveness of the reduced multi-head attention layer. Using a head-dependent low-rank representation relaxes the error to the order of $\sigma_{h\tilde{d}+1}$, which can be significantly smaller than $\sigma_{\tilde{d}+1}$ when the number of heads $h$ is large and the singular values decay fast. While using~\cref{eq.independentsketching} requires a larger number of parameters,~\Cref{thm.sparsebetter} suggests the potential of a sparse parameterization of the sketching matrices $\mathbf{W}^{(i)}_{Q/K/V, 2}$.
We leave this as a promising future direction. 

\subsection{Proof of~\Cref{thm.sparsebetter}}

We prove that a sparse but head-dependent sketching performs better in theory than a head-independent sketching, especially when $\tilde{d}$ is lower than the numerical rank of the input matrix. The intuition is that if $\tilde{d}$ is too small, then the sketching performed by a single sketching matrix must be bad, but if we use head-dependent sketchings, then we can leverage multiple of them to still obtain a lot from the input matrix.

\begin{proof}[Proof of~\Cref{thm.sparsebetter}]
    Set $\mathbf{W}_V = \mathbf{I}_d$.
    Interleave the singular values $\sigma_1, \ldots, \sigma_d$ as follows:
    \begin{equation*}
        \begin{aligned}
            \boldsymbol{\sigma}^{(1)} &= \left(\sigma_1^{(1)}, \sigma_2^{(1)} \ldots, \sigma_{d_h}^{(1)}\right) = \left(\sigma_1, \sigma_{h+1}, \ldots, \sigma_{(d_h-1)h+1}\right), \\
            \boldsymbol{\sigma}^{(2)} &= \left(\sigma_1^{(2)}, \sigma_2^{(2)} \ldots, \sigma_{d_h}^{(2)}\right) = \left(\sigma_2, \sigma_{h+2}, \ldots, \sigma_{(d_h-1)h+2}\right), \\
            &\;\;\vdots \\
            \boldsymbol{\sigma}^{(h)} &= \left(\sigma_1^{(h)}, \sigma_2^{(h)} \ldots, \sigma_{d_h}^{(h)}\right) = \left(\sigma_h, \sigma_{2h}, \ldots, \sigma_{d_h \cdot h}\right).
        \end{aligned}
    \end{equation*}
    Define the input matrix by
    \[
        \mathbf{U} = 
        \left[
        \begin{array}{c|c}
            \underbrace{\text{diag}(\text{diag}(\boldsymbol{\sigma}^{(1)}), \ldots, \text{diag}(\boldsymbol{\sigma}^{(h)}))}_{\mathbf{U}_D} & \mathbf{0}_{d \times (L-d)}
        \end{array}
        \right].
    \]
    For every $1 \leq i \leq h$ and $1 \leq j \leq d_h$, let $\mathbf{v}_j^{(i)} \in \R^{d_h}$ be a unit vector satisfying that
    \begin{equation*}
        \left\|{\mathbf{v}_j^{(i)}}^\top {\mathbf{v}_{j'}^{(i')}}\right\|_2 < 1, \qquad (i, j) \neq (i',j').
    \end{equation*}
    Then, consider the following matrix:
    \begin{equation*}
        \overline{\mathbf{W}}_Q = \overline{\mathbf{W}}_K = \left[\begin{array}{ccc|c|ccc}
           \mathbf{v}_1^{(1)} / \sigma_1^{(1)}  &  \cdots & \mathbf{v}_{d_h}^{(1)} / \sigma_{d_h}^{(1)} & \cdots & \mathbf{v}_1^{(h)} / \sigma_1^{(h)}  &  \cdots & \mathbf{v}_{d_h}^{(h)} / \sigma_{d_h}^{(h)}
        \end{array}\right] \in \R^{d_h \times d}.
    \end{equation*}
    Then, by our construction, the matrix $\mathbf{U}^\top \overline{\mathbf{W}}_Q^\top \overline{\mathbf{W}}_K \mathbf{U}$ satisfies that 
    \[
        \left(\mathbf{U}^\top \overline{\mathbf{W}}_Q^\top \overline{\mathbf{W}}_K \mathbf{U}\right)(j,j) = 1, \qquad \left|\left(\mathbf{U}^\top \overline{\mathbf{W}}_Q^\top \overline{\mathbf{W}}_K \mathbf{U}\right)(i,j)\right| < 1, \qquad i \neq j.
    \]
    Set 
    \[
    \varepsilon = \min\left\{(1+C)^{-1} \sigma_{\tilde{d}+1} / 2, \min_{1 \leq i \leq h} \sigma_{\tilde{d}+1}^{(i)} \big/ \left(2\left\|\mathbf{W}_{V} \mathbf{U}\right\|_2 + 2\sigma_1\right), 1\right\}.
    \]
    By choosing a sufficiently large $\alpha > 0$, we guarantee that
    \begin{equation}\label{eq.subQK}
        \left\|\underbrace{\text{softmax}\!\left(\!\frac{\mathbf{U}^\top (\alpha\overline{\mathbf{W}}_Q^\top) (\alpha \overline{\mathbf{W}}_K) \mathbf{U}}{\sqrt{d_h}}\!\right)}_{\mathbf{G}} \!-\! \underbrace{\left[\begin{array}{c|c}
            \mathbf{I_d} & L^{-1}\boldsymbol{1}_{d \times (L-d)} \\
            \hline
            \boldsymbol{0}_{(L-d) \times d} & L^{-1} \boldsymbol{1}_{(L-d) \times (L-d)}
        \end{array}\right]}_{\tilde{\mathbf{G}}} \right\|_F \!\!\leq \varepsilon.
    \end{equation}
    Let query and key matrices $\mathbf{W}_Q$ and $\mathbf{W}_K$ be defined in~\cref{eq.MHAttention} using submatrices $\mathbf{W}^{(i)}_Q = \alpha \overline{\mathbf{W}}_Q$ and $\mathbf{W}^{(i)}_K = \alpha \overline{\mathbf{W}}_K$, respectively, for all $1 \leq i \leq h$. We use these matrices $\mathbf{W}_Q, \mathbf{W}_K, \mathbf{W}_V$, and $\mathbf{U}$ to prove the two claims.

    \noindent \textbf{Claim 1: No low-rank parameterization gets to an accuracy below $\sigma_{\tilde{d}+1}$.} Let $\tilde{\mathbf{W}}_V \in \R^{d \times d}$ be any rank-$\tilde{d}$ matrix. We can write the approximation error as
    \begin{equation*}
    \begin{aligned}
        &\left\|\text{MH-Attention}(\mathbf{U}; \mathbf{W}_Q, \mathbf{W}_K, \mathbf{W}_V, h) - \text{MH-Attention}(\mathbf{U}; \mathbf{W}_Q, \mathbf{W}_K, \tilde{\mathbf{W}}_V, h)\right\|_2 \\
        &\qquad= \left\|\left[\begin{array}{c}
             \mathbf{W}_V^{(1)} \mathbf{U} \mathbf{G} \\
             \hline
             \vdots \\
             \hline 
             \mathbf{W}_V^{(h)} \mathbf{U} \mathbf{G} 
        \end{array}\right] - \left[\begin{array}{c}
             \tilde{\mathbf{W}}_V^{(1)} \mathbf{U} \mathbf{G} \\
             \hline
             \vdots \\
             \hline 
             \tilde{\mathbf{W}}_V^{(h)} \mathbf{U} \mathbf{G} 
        \end{array}\right]\right\|_2 \\
        &\qquad\geq \underbrace{\left\|\left[\begin{array}{c}
             \mathbf{W}_V^{(1)} \mathbf{U} \tilde{\mathbf{G}} \\
             \hline
             \vdots \\
             \hline 
             \mathbf{W}_V^{(h)} \mathbf{U} \tilde{\mathbf{G}} 
        \end{array}\right] - \left[\begin{array}{c}
             \tilde{\mathbf{W}}_V^{(1)} \mathbf{U} \tilde{\mathbf{G}} \\
             \hline
             \vdots \\
             \hline 
             {\mathbf{W}}_V^{(h)} \mathbf{U} \tilde{\mathbf{G}} 
        \end{array}\right]\right\|_2}_{E_{\text{lead}}} - \underbrace{\left\|\left[\begin{array}{c}
             ({\mathbf{W}}_V^{(1)} - \tilde{\mathbf{W}}_V^{(1)}) \mathbf{U} (\mathbf{G} -\tilde{\mathbf{G}})\\
             \hline
             \vdots \\
             \hline 
             ({\mathbf{W}}_V^{(h)} - \tilde{\mathbf{W}}_V^{(h)}) \mathbf{U} (\mathbf{G}-\tilde{\mathbf{G}})
        \end{array}\right]\right\|_2}_{E_{\text{trail}}}.
    \end{aligned}
    \end{equation*}
    To control $E_{\text{trail}}$, we note that
    \begin{equation}\label{eq.trailerr}
    \begin{aligned}
        E_{\text{trail}} &\leq\! \max_{1 \leq i \leq h}\! \left\|({\mathbf{W}}_V^{(i)} \!-\! \tilde{\mathbf{W}}_V^{(i)}) \mathbf{U} (\mathbf{G} \!-\!\tilde{\mathbf{G}}) \right\|_2 \!\leq\! \max_{1 \leq i \leq h}\! \left(\left\|{\mathbf{W}}_V^{(i)}\right\|_2 \!\!+\! \left\|\tilde{\mathbf{W}}_V^{(i)} \right\|_2\right) \left\|{\mathbf{G}} \!-\! \tilde{\mathbf{G}} \right\|_2 \\
        &\leq (1+C)\varepsilon \!=\! \frac{1}{2}\sigma_{\tilde{d}+1}.
    \end{aligned}
    \end{equation}
    To control $E_{\text{lead}}$, we note that 
    \begin{equation*}
        \begin{aligned}
            \left[\begin{array}{c}
             (\mathbf{W}_V^{(1)} - \tilde{\mathbf{W}}_V^{(1)}) \mathbf{U} \tilde{\mathbf{G}} \\
             \hline
             \vdots \\
             \hline 
             (\mathbf{W}_V^{(h)} - \tilde{\mathbf{W}}_V^{(h)}) \mathbf{U} \tilde{\mathbf{G}}
            \end{array}\right] = \left[\begin{array}{c}
             (\mathbf{W}_V^{(1)} - \tilde{\mathbf{W}}_V^{(1)}) \left[\begin{array}{c|c}
                 \mathbf{U}_D & L^{-1} \mathbf{U}_D \mathbf{1}_{d \times (L-d)}
             \end{array} \right] \\
             \hline
             \vdots \\
             \hline 
             (\mathbf{W}_V^{(h)} - \tilde{\mathbf{W}}_V^{(h)}) \left[\begin{array}{c|c}
                 \mathbf{U}_D & L^{-1} \mathbf{U}_D \mathbf{1}_{d \times (L-d)}
             \end{array} \right] 
            \end{array}\right].
        \end{aligned}
    \end{equation*}
    Hence, we have
    \begin{equation}\label{eq.leaderr}
        \begin{aligned}
            E_{\text{lead}} &= \left\|\left[\begin{array}{c}
             (\mathbf{W}_V^{(1)} - \tilde{\mathbf{W}}_V^{(1)}) \mathbf{U} \tilde{\mathbf{G}} \\
             \hline
             \vdots \\
             \hline 
             (\mathbf{W}_V^{(h)} - \tilde{\mathbf{W}}_V^{(h)}) \mathbf{U} \tilde{\mathbf{G}}
            \end{array}\right]\right\|_2 \geq \left\|\left[\begin{array}{c}
             (\mathbf{W}_V^{(1)} - \tilde{\mathbf{W}}_V^{(1)}) \mathbf{U}_D \\
             \hline
             \vdots \\
             \hline 
             (\mathbf{W}_V^{(h)} - \tilde{\mathbf{W}}_V^{(h)}) \mathbf{U}_D
            \end{array}\right]\right\|_2 \\
            &= \left\|\mathbf{W}_V \mathbf{U}_D - \tilde{\mathbf{W}}_V \mathbf{U}_D \right\|_2 \geq \sigma_{\tilde{d}+1},
        \end{aligned}
    \end{equation}
    where the last inequality follows from the fact that the singular values of $\mathbf{W}_V \mathbf{U}_D$ are exactly $\sigma_1, \ldots, \sigma_d$ and $\tilde{\mathbf{W}}_V \mathbf{U}_D$ is a matrix whose rank is at most $\tilde{d}$. Combining~\cref{eq.trailerr} and~(\ref{eq.leaderr}), we have
    \[
    \begin{aligned}
        &\left\|\text{MH-Attention}(\mathbf{U}; \mathbf{W}_Q, \mathbf{W}_K, \mathbf{W}_V, h) - \text{MH-Attention}(\mathbf{U}; \mathbf{W}_Q, \mathbf{W}_K, \tilde{\mathbf{W}}_V, h)\right\|_2 \\
        &\qquad\geq E_{\text{lead}} - E_{\text{trail}} \geq \frac{1}{2}\sigma_{\tilde{d}+1}.
    \end{aligned}
    \]

    \noindent \textbf{Claim 2: A sparse parameterization gets to an accuracy below $\sigma_{h\tilde{d}+1}$.} For every $1 \leq i \leq h$, define $\mathbf{W}_{V, L}^{(i)} \in \R^{d_h \times \tilde{d}}$ and $\mathbf{W}_{V, R}^{(i)} \in \R^{\tilde{d} \times d}$ as follows:
    \[
        \mathbf{W}_{V, L}^{(i)} = \left[\begin{array}{c}
             \mathbf{I}_{\tilde{d}} \\
             \boldsymbol{0}_{(d_h-\tilde{d}) \times \tilde{d}}
        \end{array}\right], \qquad \mathbf{W}_{V, R}^{(i)} = \left[\begin{array}{c|c|c}
             \boldsymbol{0}_{\tilde{d} \times d_h(i-1)} & \mathbf{I}_{\tilde{d}} & \boldsymbol{0}_{\tilde{d} \times ((d_h-\tilde{d}) + d_h(h-i))}
        \end{array}\right].
    \]
    Then, we have
    \[
        \mathbf{W}_{V,L}^{(i)} \mathbf{W}_{V,R}^{(i)} \mathbf{U} = \left[
            \begin{array}{c|c|c}
                \boldsymbol{0}_{\tilde{d} \times d_h(i-1)} & \text{diag}\left(\sigma_1^{(i)}, \ldots, \sigma_{\tilde{d}}^{(i)}\right) & \boldsymbol{0}_{\tilde{d} \times ((d_h-\tilde{d}) + d_h(h-i))} \\
                \hline
                \boldsymbol{0}_{(d_h-\tilde{d}) \times d_h(i-1)} & \boldsymbol{0}_{(d_h-\tilde{d}) \times \tilde{d}} & \boldsymbol{0}_{(d_h-\tilde{d}) \times ((d_h-\tilde{d}) + d_h(h-i))} \\
            \end{array}
        \right],
    \]
    while
    \[
        \mathbf{W}_V^{(i)} \mathbf{U} = \left[
            \begin{array}{c|c|c}
                \boldsymbol{0}_{\tilde{d} \times d_h(i-1)} & \text{diag}\left(\sigma_1^{(i)}, \ldots, \sigma_{d_h}^{(i)}\right) & \boldsymbol{0}_{\tilde{d} \times d_h(h-i)} 
            \end{array}
        \right].
    \]
    Hence, we have that
    \begin{equation}\label{eq.Vsparsecompress}
        \|\mathbf{W}_V^{(i)} \mathbf{U} - \mathbf{W}_{V,L}^{(i)} \mathbf{W}_{V,R}^{(i)} \mathbf{U}\|_2 = \sigma^{(i)}_{\tilde{d}+1}.
    \end{equation}
    Similarly, let $\mathbf{W}^{(i)}_{Q, L}, \mathbf{W}^{(i)}_{K, L} \in \R^{d_h \times (\tilde{d} + 1)}$ and $\mathbf{W}^{(i)}_{Q, R}, \mathbf{W}^{(i)}_{K, R} \in \R^{(\tilde{d}+1) \times d}$ be given by
    \[
    \begin{aligned}
        \mathbf{W}_{Q, L}^{(i)} &= \mathbf{W}_{K, L}^{(i)} = \alpha \left[\begin{array}{cccc}
            \mathbf{v}_1^{(i)} / \sigma_1^{(i)} & \cdots & \mathbf{v}_{\tilde{d}}^{(i)} / \sigma_{\tilde{d}}^{(i)} & \mathbf{v}_{1}^{(i')}
        \end{array}
        \right], \qquad i' = \text{mod}(i, h)+1, \\
        \mathbf{W}_{Q, R}^{(i)} &= \begin{cases}
            \left[\begin{array}{c|c}
            \mathbf{I}_{\tilde{d}} & \boldsymbol{0}_{d - \tilde{d}} \\
             \hline
            \boldsymbol{0}_{1 \times \tilde{d}} & [1/\sigma_{\tilde{d}+1}, 0, \ldots, 0]
        \end{array}\right], & i = 1, \\[15pt]
        \left[\begin{array}{c|c|c}
             \boldsymbol{0}_{\tilde{d} \times d_h(i-1)} & \mathbf{I}_{\tilde{d}} & \boldsymbol{0}_{\tilde{d} \times ((d_h-\tilde{d}) + d_h(h-i))} \\
             \hline
             [1 / \sigma_1, 0, \ldots, 0] & \boldsymbol{0}_{1 \times \tilde{d}} & \mathbf{0}_{1 \times ((d_h-\tilde{d}) + d_h(h-i))}
        \end{array}\right], & i > 1,
        \end{cases}
        \\
        \mathbf{W}_{K, R}^{(i)} &= \left[\begin{array}{c|c|c}
             \boldsymbol{0}_{\tilde{d} \times d_h(i-1)} & \mathbf{I}_{\tilde{d}} & \boldsymbol{0}_{\tilde{d} \times ((d_h-\tilde{d}) + d_h(h-i))} \\
             \hline
             [1/\sigma_1, \ldots, 1/\sigma_{d_h(i-1)}] & \boldsymbol{0}_{1 \times \tilde{d}} & [1 / \sigma_{d_h(i-1)+\tilde{d}+1}, \ldots, 1/\sigma_{d}]
        \end{array}\right].
    \end{aligned}
    \]
    Note that since we have $\|\mathbf{v}_j^{(i)}\|_2$ for every $1 \leq i \leq h$ and $1 \leq j \leq d_h$, we clearly have that $\|\mathbf{W}^{(i)}_{Q,L} \mathbf{W}^{(i)}_{Q,R}\|_F \leq \|\alpha \overline{\mathbf{W}}_Q\|_F$ and $\|\mathbf{W}^{(i)}_{K,L} \mathbf{W}^{(i)}_{K,R}\|_F = \|\alpha \overline{\mathbf{W}}_K\|_F$ for every $1 \leq i \leq h$. Hence, we have $\|\tilde{\mathbf{W}}_Q\|_F \leq \|\mathbf{W}_Q\|_F$ and $\|\tilde{\mathbf{W}}_K\|_F = \|\mathbf{W}_K\|_F$. From now on, we only argue for the first head, i.e., $i = 1$. The rest follows easily from symmetry. Set $i = 1$, we have
    \[
        \left(\underbrace{\mathbf{U}^\top {\mathbf{W}_{Q, R}^{(1)}}^\top {\mathbf{W}_{Q, L}^{(1)}}^\top \mathbf{W}_{K, L}^{(1)} \mathbf{W}_{K, R}^{(1)} \mathbf{U}}_{\tilde{\mathbf{T}}^{(1)}}\right)(j, k) = \alpha^2 
        \begin{cases}
            {\mathbf{v}_j^{(1)}}^\top \mathbf{v}_k^{(1)}, & 1 \leq j \leq \tilde{d}, \quad 1 \leq k \leq \tilde{d}, \\
            {\mathbf{v}_j^{(1)}}^\top \mathbf{v}_1^{(2)}, & 1 \leq j \leq \tilde{d}, \quad \tilde{d} < k \leq d, \\
            {\mathbf{v}_1^{(2)}}^\top \mathbf{v}_k^{(1)}, & j = \tilde{d}+1, \quad 1 \leq k \leq \tilde{d}, \\
            {\mathbf{v}_1^{(2)}}^\top \mathbf{v}_1^{(2)}, & j = \tilde{d}+1, \quad \tilde{d} < k \leq d, \\
            0, & \text{otherwise}.
        \end{cases}
    \]
    That is, for $1 \leq j \leq d$, the $j$th column of $\tilde{\mathbf{T}}^{(1)}$ has exactly one entry that equals $1$, which is the $j$th element for $1 \leq j \leq \tilde{d}$ and the $(j+1)$st element for $j > \tilde{d}$. Moreover, for $j > d$, the $j$th column of $\tilde{\mathbf{T}}^{(1)}$ is zero. Hence, from the definition of $\alpha$, we have
    \begin{equation}\label{eq.QKsparsecompress}
        \begin{aligned}
            \left\|\text{softmax}\left(\!\tilde{\mathbf{T}}^{(1)}\!\right)(1\!:\!\tilde{d}, :) 
            \!-\! \mathbf{G}(1\!:\!\tilde{d}, :)\right\|_2 \!\!\leq\! \left\|\text{softmax}\left(\!\tilde{\mathbf{T}}^{(1)}\!\right)(1\!:\!\tilde{d}, :) \!-\! \tilde{\mathbf{G}}(1\!:\!\tilde{d}, :)\right\|_2 \!\!+\! \left\|\tilde{\mathbf{G}} \!-\! \mathbf{G}\right\|_2 \!\!\leq\! 2\varepsilon.
        \end{aligned}
    \end{equation}
    Combine~\cref{eq.Vsparsecompress} and~(\ref{eq.QKsparsecompress}). We have
    \begin{equation*}
        \begin{aligned}
            &\left\|\mathbf{W}_V^{(1)} \mathbf{U} \mathbf{G} - \mathbf{W}_{V,L}^{(1)} \mathbf{W}_{V,R}^{(1)} \mathbf{U} \;\text{softmax}\left(\tilde{\mathbf{T}}^{(1)}\right)\right\|_2 \\
            &\quad\leq \left\|\mathbf{W}_V^{(1)} \mathbf{U} \mathbf{G} \!-\! \mathbf{W}_{V,L}^{(1)} \mathbf{W}_{V,R}^{(1)} \mathbf{U} \mathbf{G}\right\|_2 + \left\|\mathbf{W}_{V,L}^{(1)} \mathbf{W}_{V,R}^{(1)} \mathbf{U} \mathbf{G} \!-\! \mathbf{W}_{V,L}^{(1)} \mathbf{W}_{V,R}^{(1)} \mathbf{U} \;\text{softmax}\!\left(\tilde{\mathbf{T}}^{(1)}\right)\!\right\|_2 \\
            &\quad= \left\|\mathbf{W}_V^{(1)} \mathbf{U} \mathbf{G} - \mathbf{W}_{V,L}^{(1)} \mathbf{W}_{V,R}^{(1)} \mathbf{U} \mathbf{G}\right\|_2 + \left\|\mathbf{W}_{V,L}^{(1)} \mathbf{W}_{V,R}^{(1)} \mathbf{U} \left(\mathbf{G} - \text{softmax}\left(\tilde{\mathbf{T}}^{(1)}\right)\right)(1\!:\!\tilde{d}, :)\right\|_2 \\
            &\quad\leq \left\|\mathbf{W}_V^{(1)} \mathbf{U} \!-\! \mathbf{W}_{V,L}^{(1)} \mathbf{W}_{V,R}^{(1)} \mathbf{U}\right\|_2\! \|\mathbf{G}\|_2      \!+\!\!      \left(\left\|\mathbf{W}_{V}^{(1)} \mathbf{U}\right\|_2 \!\!+\! \sigma^{(1)}_{\tilde{d}+1}\right)\! \left\|\!\left(\!\mathbf{G} \!-\! \text{softmax}\left(\tilde{\mathbf{T}}^{(1)}\right)\!\!\right)\!(1\!:\!\tilde{d}, :)\!\right\|_2 \\
            &\quad\leq \sigma_{\tilde{d}+1}^{(1)} (2+\varepsilon) + 2\varepsilon \left\|\mathbf{W}_{V,L}^{(1)} \mathbf{W}_{V,R}^{(1)} \mathbf{U}\right\|_2  \leq 4 \sigma_{\tilde{d}+1}^{(1)} \leq 4 \sigma_{h\tilde{d}+1},
        \end{aligned}
    \end{equation*}
    where the first inequality follows from the triangle inequality, the first equation follows from the sparsity of $\mathbf{W}^{(1)}_{V,R}$, the second inequality follows from the sub-multiplicity of the spectral norm and~\cref{eq.Vsparsecompress}, the third inequality follows from~\cref{eq.Vsparsecompress} and~\cref{eq.QKsparsecompress}, the fifth inequality follows from the definition of $\varepsilon$, and the last inequality follows from the definition of $\sigma_{\tilde{d}+1}^{(1)}$. Notably, this inequality holds for every head $1 \leq i \leq h$. Hence, we have
    \begin{equation*}
        \begin{aligned}
            &\left\|\text{MH-Attention}(\mathbf{U}; \mathbf{W}_Q, \mathbf{W}_K, \mathbf{W}_V, h) - \text{MH-Attention}(\mathbf{U}; \mathbf{W}_Q, \mathbf{W}_K, \tilde{\mathbf{W}}_V, h)\right\|_2 \\
            &\qquad\leq \sqrt{h} \,\max_{1 \leq i \leq h} \left\| \mathbf{W}_V^{(i)} \mathbf{U} \mathbf{G} - \mathbf{W}_{V,L}^{(i)} \mathbf{W}_{V,R}^{(i)} \mathbf{U} \;\text{softmax}\left(\tilde{\mathbf{T}}^{(i)}\right) \right\|_2 \leq 4\sqrt{h} \; \sigma_{h\tilde{d}+1}.
        \end{aligned}
    \end{equation*}
    The proof is complete.
\end{proof}

\clearpage

\section{Proof of~\Cref{thm.flowofrank} and~\Cref{cor.flowofranks}}\label{app:lowofranks}

In this section, we prove the idea of flow-of-ranks. The upper bound is proved via a straightforward application of singular value inequalities while the lower bound requires a careful construction.

\begin{proof}[Proof of~\Cref{thm.flowofrank}]
    We prove the two statements separately.
    
    \noindent \textbf{The upper bound.} Fix a head index $1 \leq i \leq h$ and let
    \[
        \mathbf{Y}^{(i)} = \mathbf{W}_V^{(i)} \mathbf{U}\; \text{softmax}\!\left(\mathbf{T}^{(i)}\right), \qquad \mathbf{T}^{(i)} = \frac{\mathbf{U}^\top {\mathbf{W}_Q^{(i)}}^\top \mathbf{W}_K^{(i)} \mathbf{U}}{\sqrt{d_h}}.
    \]
    Then, every entry of $\mathbf{T}^{(i)}$ has a magnitude no greater than $1$ because every column of $\mathbf{U}$ has a $2$-norm no greater than $1$ and $\|{\mathbf{W}_Q^{(i)}}^\top \mathbf{W}_K^{(i)}\|_2 \leq \sqrt{d_h}$. Hence, every entry of $\text{softmax}(\mathbf{T}^{(i)})$ has a magnitude between $L^{-1}e^{-2}$ and $L^{-1} e^2$. This means
    \[
        \sigma_1\left(\text{softmax}\!\left(\mathbf{T}^{(i)}\right)\right) = \left\|\text{softmax}\!\left(\mathbf{T}^{(i)}\right)\right\|_2 \leq \left\|\text{softmax}\!\left(\mathbf{T}^{(i)}\right)\right\|_F \leq e^2.
    \]
    Moreover, since $\left\|\mathbf{W}_V^{(i)}\right\|_2 \leq 1$, we have that
    \[
        \sigma_j\left(\mathbf{Y}^{(i)}\right) \leq e^2 \sigma_j(\mathbf{U}), \qquad 1 \leq j \leq d_h.
    \]
    Fix some $1 \leq j \leq d$ and let $j_h = \lfloor (j-1) / h \rfloor+1$. When we concatenate $\mathbf{Y}^{(i)}$ to form $\mathbf{Y}$, using the singular value inequality that $\sigma_{i+j-1}(\mathbf{A} + \mathbf{B}) \leq \sigma_i(\mathbf{A}) + \sigma_j(\mathbf{B})$ and defining $\mathbf{Y}^{\mathcal{I}}$ to be the concatenation of $\mathbf{Y}^{(i)}$ for all $i \in \mathcal{I}$, we have
    \[
        \begin{aligned}
            \sigma_j\left(\mathbf{Y}\right) &\leq \sigma_{j_h}\left(\mathbf{Y}^{(h)}\right) \!+\! \sigma_{j - j_h + 1}\left(\mathbf{Y}^{[h-1]}\right) \leq \sigma_{j_h}\left(\mathbf{Y}^{(h)}\right) \!+\! \sigma_{j_h}\left(\mathbf{Y}^{(h-1)}\right) \!+\! \sigma_{j - 2j_h + 2}\left(\mathbf{Y}^{[h-2]}\right) \\
            &\leq \cdots \leq \left(\sum_{i=2}^{h}\sigma_{j_h}\left(\mathbf{Y}^{{(i)}}\right)\!\!\right) \!+\! \sigma_{j - (h-1)(j_h-1)}\left(\mathbf{Y}^{(1)}\right) \leq \sum_{i=1}^h \sigma_{j_h}\left(\mathbf{Y}^{(i)}\right) \leq e^2 h \,\sigma_{j_h}(\mathbf{U}).
        \end{aligned}
    \]
    Hence, applying the same inequality again, we have that
    \begin{equation}\label{eq.smallsvd}
        \sigma_k(\mathbf{Z}) = \sigma_k\left(\mathbf{U} + \frac{1}{\sqrt{N}} \mathbf{Y}\right) \leq \sigma_{k-j+1}\left(\mathbf{U}\right) + \sigma_{j}\left(\frac{1}{\sqrt{N}} \mathbf{Y}\right) \leq \sigma_{k-j+1} + \frac{e^2 h}{\sqrt{N}} \sigma_{\lfloor (j-1)/h \rfloor + 1}.
    \end{equation}
    Moreover, from the triangle inequality, we have
    \begin{equation}\label{eq.largesvd}
        \sigma_1(\mathbf{Z}) \geq \sigma_1(\mathbf{U}) - \sigma_1\left(\frac{1}{\sqrt{N}} \mathbf{Y}\right) \geq \sigma_1(\mathbf{U}) - \frac{1}{\sqrt{N}} e^2h\sigma_{1}(\mathbf{U}) \geq \frac{1}{2} \sigma_1(\mathbf{U}).
    \end{equation}
    Combining~\cref{eq.smallsvd} and~(\ref{eq.largesvd}), we prove the theorem.

    \noindent \textbf{The lower bound.} Let $1 = \sigma_1 \geq \cdots \geq \sigma_d > 0$ be given. Define the input matrix to be
    \[
        \mathbf{U} = \begin{bmatrix}
            \text{diag}(\sigma_1, \ldots, \sigma_d) & \boldsymbol{0}_{d \times (L-d)}
        \end{bmatrix}.
    \]
    For every $1 \leq i \leq h$, let $\mathbf{W}_V^{(i)}$ be
    \[
        \mathbf{W}_V^{(i)} = \left[\begin{array}{cc}
            \mathbf{I}_{d_h-1} & \boldsymbol{0}_{(d_h-1) \times (d-d_h+1)} \\
            \boldsymbol{0}_{1 \times ({d_h}-1)} & \boldsymbol{0}_{1 \times (d-d_h+1)}
        \end{array}\right].
    \]
    Set $\varepsilon = \sqrt{N}\,\sigma_{d} / (5h)$. For every $1 \leq i \leq h$, let $\mathbf{P}^{(i)} \in \R^{L \times L}$ be the matrix so that $\mathbf{P}^{(i)}(1:d_h, ((i\!-\!1)d_h\!+\!1):id_h) = \mathbf{I}_{d_h}$, $\mathbf{P}^{(i)}(d_h, k) = 1$ for every $1 \leq k < (i-1)d_h$ and every $k > id_h$, and is zero elsewhere. For a sufficiently large $\alpha > 0$ determined later on, let
    \[
    \begin{aligned}
        \mathbf{W}^{(i)}_Q &= \alpha\begin{bmatrix}
            \text{diag}(\sigma_1^{-1}, \ldots, \sigma_{d_h}^{-1}) & \boldsymbol{0}_{d_h \times (d-d_h)}
        \end{bmatrix},\\
        \mathbf{W}^{(i)}_K &= \left[\begin{array}{c|c|c}
            \begin{array}{c}
                 \boldsymbol{0}_{(d_h-1) \times (i-1)d_h}  \\[5pt]
                 \begin{bmatrix}
                     \sigma_{1}^{-1} & \cdots & \sigma_{(i-1)d_h}^{-1}
                 \end{bmatrix}
            \end{array}
             & \text{diag}(\sigma_{(i-1)d_h+1}^{-1}, \ldots, \sigma_{id_h}^{-1}) & \begin{array}{c}
                 \boldsymbol{0}_{(d_h-1) \times (h-i)d_h}  \\[5pt]
                 \begin{bmatrix}
                     \sigma_{id_h+1}^{-1} & \cdots & \sigma_{d}^{-1}
                 \end{bmatrix}
            \end{array}
        \end{array}\right].
    \end{aligned}
    \]
    Then, we have
    \[
    \begin{aligned}
        &\mathbf{T}^{(i)} := \mathbf{U}^\top {\mathbf{W}^{(i)}_Q}^\top \mathbf{W}_K^{(i)} \mathbf{U} \\
        &\quad=\! \alpha \begin{bmatrix}
            \mathbf{I}_{d_h}\! & \!\boldsymbol{0}_{d_h \times (L-d_h)}
        \end{bmatrix}^\top \!\left[\begin{array}{c|c|c|c}
            \begin{array}{c}
                 \!\!\!\boldsymbol{0}_{(d_h-1) \times (i-1)d_h}\!\!\!  \\[5pt]
                 \boldsymbol{1}_{1 \times (i-1)d_h}\!
            \end{array}
             & \!\mathbf{I}_{d_h}\!\! & \begin{array}{c}
                 \!\!\boldsymbol{0}_{(d_h-1) \times (h-i)d_h}\!\!\!  \\[5pt]
                 \!\mathbf{1}_{1 \times (h-i)d_h}\!
            \end{array}
            & \!\boldsymbol{0}_{d_h \times (L-d)}\!\!
        \end{array}\right] \!\!=\! \alpha \mathbf{P}^{(i)}.
    \end{aligned}
    \]
    Therefore, by picking $\alpha$ sufficiently large, we have
    \begin{equation}\label{eq.closepermutation}
        \|\mathbf{G}^{(i)} \!-\! \tilde{\mathbf{G}}^{(i)}\|_2 \!\leq\! \epsilon, \quad \mathbf{G}^{(i)} \!:=\! \text{softmax}(\mathbf{T}^{(i)}), \quad \tilde{\mathbf{G}}^{(i)}(j,k) = \begin{cases}
            1, & k = j+(i-1)d_h, \\
            1, & j = d_h \text{ and } k \leq (i-1)d_h, \\
            1, & j = d_h \text{ and } k > id_h, \\
            L^{-1}, & k > d, \\
            0, & \text{otherwise}.
        \end{cases}
    \end{equation}
    Note that we have
    \[
    \begin{aligned}
        &\mathbf{W}_V^{(i)} \mathbf{U} \tilde{\mathbf{G}}^{(i)} = \left[\begin{array}{ccc}
            \text{diag}(\sigma_1, \ldots, \sigma_{h_d-1}) & 0 & \boldsymbol{0}_{(d_h-1) \times (L-d_h)} \\
            \boldsymbol{0}_{1 \times ({d_h}-1)} & 0 & \boldsymbol{0}_{1 \times (L-d_h)}
        \end{array}\right] \\
        &\qquad\qquad\qquad \times \left[\begin{array}{c|c|c|c}
            \begin{array}{c}
                 \!\boldsymbol{0}_{(d_h-1) \times (i-1)d_h}\!  \\[5pt]
                 \!\boldsymbol{1}_{1 \times (i-1)d_h}\!
            \end{array}
             & \!\mathbf{I}_{d_h}\! & \begin{array}{c}
                 \!\boldsymbol{0}_{(d_h-1) \times (h-i)d_h}\!  \\[5pt]
                 \mathbf{1}_{1 \times (h-i)d_h}
            \end{array}
            & L^{-1} \boldsymbol{1}_{d_h \times (L-d)} \\
            \hline
            \!\boldsymbol{0}_{(L-d_h) \times (i-1)d_h}\! & \!\boldsymbol{0}_{(L-d_h) \times d_h}\! & \!\boldsymbol{0}_{(L-d_h) \times (h-i)d_h}\! & \!\!L^{-1} \boldsymbol{1}_{(L-d_h) \times (L-d)}\!\!
        \end{array}\right] \\
        &=\!\!\! \left[\begin{array}{c|c}
       \underbracedmatrix{\!\!\boldsymbol{0}_{d_h \!\times\! (i-1)d_h}  \!&\!\!\!\! \text{diag}(\sigma_1, \ldots, \sigma_{h_d-1}, 0) \!\!&\!\! \boldsymbol{0}_{d_h \!\times\! (h-i)d_h}}{\mathbf{Y}^{(i)}_{\text{lead}}} \!\!& \!\!\underbracedmatrix{\begin{bmatrix}
               L^{-1} \text{diag}(\sigma_1, \ldots, \sigma_{h_d-1}) \boldsymbol{1}_{(d_h-1) \!\times\! (L-d)} \\ \boldsymbol{0}_{1 \times (L-d)}
           \end{bmatrix}}{\mathbf{Y}^{(i)}_{\text{trail}}}
        \end{array}\!\!\!\right]\!\!.
    \end{aligned}
    \]
    Note that the mulithead attention output is given by
    \[
        \begin{aligned}
            &\mathbf{Y} = \text{MH-Attention}(\mathbf{U}; \mathbf{W}_Q, \mathbf{W}_K, \mathbf{W}_V, h) = \begin{bmatrix}
            \mathbf{Y}_{\text{lead}} & \mathbf{Y}_{\text{trail}}
            \end{bmatrix} + \mathbf{E},\\
            &\mathbf{Y}_{\text{lead}} = \begin{bmatrix}
                {\mathbf{Y}_{\text{lead}}^{(1)}}^\top & \cdots & {\mathbf{Y}_{\text{lead}}^{(h)}}^\top
            \end{bmatrix}^\top, \qquad \mathbf{Y}_{\text{trail}} = \begin{bmatrix}
                {\mathbf{Y}_{\text{trail}}^{(1)}}^\top & \cdots & {\mathbf{Y}_{\text{trail}}^{(h)}}^\top
            \end{bmatrix}^\top,
        \end{aligned}
    \]
    where
    \[
        \|\mathbf{E}\|_2 \leq \sum_{i=1}^h \|\mathbf{W}_V^{(i)} \mathbf{U} \tilde{\mathbf{G}}^{(i)} - \mathbf{W}_V^{(i)} \mathbf{U} \tilde{\mathbf{G}}^{(i)}\|_2 \leq \sum_{i=1}^h \|\mathbf{W}_V^{(i)} \mathbf{U}\|_2 \|{\mathbf{G}}^{(i)} - \tilde{\mathbf{G}}^{(i)}\|_2 \leq h \varepsilon,
    \]
    and the layer output is given by
    \[
    \begin{aligned}
        \mathbf{Z} &= \mathbf{U} + \frac{1}{\sqrt{N}}\mathbf{Y} = \begin{bmatrix}
            \mathbf{Z}_{\text{lead}} & \mathbf{Z}_{\text{trail}}
        \end{bmatrix} + \frac{1}{\sqrt{N}}\mathbf{E},\\
        \mathbf{Z}_{\text{lead}} &= \text{diag}(\sigma_1, \ldots, \sigma_{d}) + \frac{1}{\sqrt{N}}\mathbf{Y}_{\text{lead}}, \qquad \mathbf{Z}_{\text{trail}} = \frac{1}{\sqrt{N}} \mathbf{Y}_{\text{trail}}.
    \end{aligned}
    \]
    Since $\mathbf{Z}_{\text{lead}}$ is a diagonal matrix with nonnegative entries, its singular values equal its diagonal entries, which are
    \begin{equation}\label{eq.singvals}
    \begin{array}{ccccc}
        \sigma_1 \!+\! \frac{1}{\sqrt{N}}\sigma_1 & \sigma_2 \!+\! \frac{1}{\sqrt{N}}\sigma_2 & \cdots & \sigma_{d_h-1} \!+\! \frac{1}{\sqrt{N}}\sigma_{d_h-1} & \sigma_{d_h} \\
        \sigma_{d_h+1} \!+\! \frac{1}{\sqrt{N}}\sigma_1 & \sigma_{d_h+2} \!+\! \frac{1}{\sqrt{N}}\sigma_2 & \cdots & \sigma_{d_h + (d_h-1)} \!+\! \frac{1}{\sqrt{N}}\sigma_{d_h-1} & \sigma_{d_h + d_h} \\
        \vdots & \vdots & \ddots & \vdots & \vdots \\
        \sigma_{(h-1)d_h+1} \!+\! \frac{1}{\sqrt{N}}\sigma_1 & \sigma_{(h-1)d_h+2} \!+\! \frac{1}{\sqrt{N}}\sigma_2 & \cdots & \sigma_{(h-1)d_h+(d_h-1)} \!+\! \frac{1}{\sqrt{N}}\sigma_{d_h-1} & \sigma_{(h-1)d_h+d_h}
    \end{array}\!.
    \end{equation}
    Moreover, since $\mathbf{Z}_{\text{lead}}$ is a submatrix of $\mathbf{Z}$, the singular values of $\mathbf{Z}$ are no smaller than the singular values of $\mathbf{Z}_{\text{lead}}$. Given $1 \leq i \leq h$ and $1 \leq j \leq d_h$, let $\xi_{i,j} = \sigma_{(i-1)d_h+j} + \mathbbm{1}_{\{j \neq d_h\}}\sigma_{j} / \sqrt{N}$ be the element in row $i$ and column $j$ in the array in~\cref{eq.singvals}. Then, we have that
    \[
        \xi_{i,j} \leq \xi_{i', j'}, \qquad \text{for any } i \geq i' \text{ and } j \geq j'.
    \]
    Hence, $\xi_{i,j}$ is no greater than the $(i \times j)$th singular value of $\mathbf{Z}_{\text{lead}}$. That means for every $1 \leq i \leq h$, the $k$th singular value of $\mathbf{Z}_{\text{lead}}$ satisfies
    \[
        \sigma_k(\mathbf{Z}_{\text{lead}}) \geq \xi_{i, \lceil k/i\rceil} = \sigma_{(i-1)d_h+\lceil k/i\rceil} + \frac{\mathbbm{1}_{\lceil k/i\rceil \neq d_h}}{\sqrt{N}}\sigma_{\lceil k/i\rceil}.
    \]
    Using Weyl's inequality, we have
    \[
    \begin{aligned}
        \sigma_k(\mathbf{Z}) &\geq \sigma_k(\mathbf{Z}_{\text{lead}}) - \frac{1}{\sqrt{N}} \sigma_1(\mathbf{E}) \geq \sigma_k(\mathbf{Z}_{\text{lead}}) - \frac{h\varepsilon}{\sqrt{N}} \\
        &\geq \left(1 - \frac{1}{5}\right) \left(\sigma_{(i-1)d_h+\lceil k/i\rceil} + \frac{\mathbbm{1}_{\lceil k/i\rceil \neq d_h}}{\sqrt{N}}\sigma_{\lceil k/i\rceil}\right).
    \end{aligned}
    \]
    Moreover, we have
    \[
    \begin{aligned}
        \sigma_1(\mathbf{Z}) &\leq \sigma_1(\mathbf{Z}_{\text{lead}}) + \sigma_1(\mathbf{Z}_{\text{trail}}) + \frac{1}{\sqrt{N}}\sigma_1(\mathbf{E}) \leq \left(1 + \frac{1}{\sqrt{N}}\right) + \frac{1}{\sqrt{N}}\|\mathbf{Y}_{\text{trail}}\|_F + \frac{1}{\sqrt{N}} h \varepsilon \\
        &\leq 2 + \frac{1}{\sqrt{N}} \sqrt{L \times d \times L^{-2}} + \frac{1}{5} \leq \frac{16}{5}.
    \end{aligned}
    \]
    Combining the two inequalities above, we obtain the theorem.
\end{proof}

The proof of~\Cref{cor.flowofranks} is a straightforward manipulation of the ceiling and floor operators.

\begin{proof}[Proof of~\Cref{cor.flowofranks}]
    Set $j=\lfloor (hk+1) / (h+1) \rfloor$, we have
    \[
        \sigma_{k-j+1}=\sigma_{k-\lfloor (hk+1) / (h+1) \rfloor+1} \leq \sigma_{\lceil k- (hk+1) / (h+1) +1 \rceil} = \sigma_{\lceil (k+h)/(h+1) \rceil},
    \]
    and
    \[
        \sigma_{\lfloor(j-1)/h\rfloor+1} = \sigma_{\lfloor(\lfloor (hk+1) / (h+1) \rfloor-1)/h\rfloor+1} \leq \sigma_{\lfloor((hk+1) / (h+1) -1)/h\rfloor+1} = \sigma_{\lfloor(k+1)/(h+1)\rfloor+1}.
    \]
    The corollary follows from~\Cref{thm.flowofrank}.
\end{proof}

\clearpage

\section{Additional Experiments}\label{app:compressbolt}

\textbf{Pretraining a compressed Chronos.} The following table supplements the figure in~\Cref{tab:pretrain_compressed}.

\begin{table}[h]
    \centering
    \caption{Results of pretraining a compressed Chronos model. We compare the performance scores \textit{relative to the original pretrained model}. We show prediction losses for both models whose embedding matrix is randomly initialized and models whose embedding matrix is inherited from the original pretrained model. The last row is the baseline.}
    \begin{minipage}{0.9\textwidth}
    \resizebox{1\textwidth}{!}{
        \begin{tabular}{ccc|cc|cc|cc|cc|cc}
        \toprule
        \multirow{3}{*}{$\tilde{d}_0$} & \multirow{3}{*}{$\alpha$} & \multirow{3}{*}{Size Ratio} & \multicolumn{2}{c|}{\multirow{2}{*}{Inference}} & \multicolumn{4}{c|}{Embedding From Scratch} & \multicolumn{4}{c}{Reuse Embedding} \\
        & & & & & \multicolumn{2}{c|}{In-Domain} & \multicolumn{2}{c|}{Zero-Shot} & \multicolumn{2}{c|}{In-Domain} & \multicolumn{2}{c}{Zero-Shot} \\
        &  &  & Time & Space & WQL $\downarrow$ & MASE $\downarrow$ & WQL $\downarrow$ & MASE $\downarrow$ & WQL $\downarrow$ & MASE $\downarrow$ & WQL $\downarrow$ & MASE $\downarrow$ \\
        \midrule
        $3$ & $0.27$ & $0.075$ & $0.346$ & $0.186$ & $1.034$  & $0.988$  & $0.966$  & $0.982$  & $1.031$  & $1.047$  & $1.087$  & $1.084$  \\
        $5$ & $0.35$ & $0.150$ & $0.398$ & $0.312$ & $1.048$  & $0.982$  & $1.080$  & $1.055$  & $1.025$  & $0.930$  & $0.936$  & $0.960$  \\
        $7$ & $0.40$ & $0.250$ & $0.494$ & $0.440$ & $1.021$  & $0.949$  & $0.996$  & $1.019$  & $0.999$  & $1.007$  & $0.949$  & $0.943$  \\
        $64$ & $0.00$ & $1.000$ & $1.000$ & $1.000$ & $1.000$  & $1.000$  & $1.000$  & $1.000$  & $1.000$  & $1.000$  & $1.000$  & $1.000$  \\
        \bottomrule
    \end{tabular}}
    \end{minipage}

    \vspace{\baselineskip}
    \begin{minipage}{0.47\textwidth}
        \begin{overpic}[width=1\textwidth]{./figures/pareto_MASE.pdf}
            \put(-5,26){ \rotatebox{90}{\small MASE}}
            \put(38,-3){\small inference time}
        \end{overpic}
    \end{minipage}
    \hfill
    \begin{minipage}{0.47\textwidth}
        \begin{overpic}[width=1\textwidth]{./figures/pareto_WQL.pdf}
            \put(-6,27.5){ \rotatebox{90}{\small WQL}}
            \put(38,-3){\small inference time}
        \end{overpic}
    \end{minipage}
    \label{tab:pretrain_compressed_supp}
\end{table}

\begin{table}[H]
    \centering
    \caption{Results of pretraining a compressed Chronos-Bolt (small) model. We compare the performance scores \textit{relative to the original pretrained model}. The last row is the baseline.}

    \begin{minipage}{0.72\textwidth}
    \resizebox{1\textwidth}{!}{
        \begin{tabular}{cc|cc|cc|cc}
        \toprule
        \multirow{2}{*}{$\tilde{d}_0$} & \multirow{2}{*}{$\alpha$} & \multicolumn{2}{c|}{\multirow{1}{*}{Inference}} & \multicolumn{2}{c|}{In-Domain} & \multicolumn{2}{c}{Zero-Shot} \\
        &  & Time & Space & WQL $\downarrow$ & MASE $\downarrow$ & WQL $\downarrow$ & MASE $\downarrow$ \\
        \midrule
        $2$ & $0.25$ & $0.517$ & $0.802$ & $1.005$  & $1.004$  & $1.013$  & $0.999$  \\
        $3$ & $0.50$ & $0.601$ & $0.821$ & $1.005$  & $0.996$  & $1.009$  & $1.001$  \\
        $5$ & $0.70$ & $0.740$ & $0.840$ & $0.980$  & $1.003$  & $0.991$  & $1.010$  \\
        $64$ & $0.00$ & $1.000$ & $1.000$ & $1.000$  & $1.000$  & $1.000$  & $1.000$  \\
        \bottomrule
    \end{tabular}}
    \end{minipage}

    \vspace{\baselineskip}

    \begin{minipage}{0.47\textwidth}
        \begin{overpic}[width=1\textwidth]{./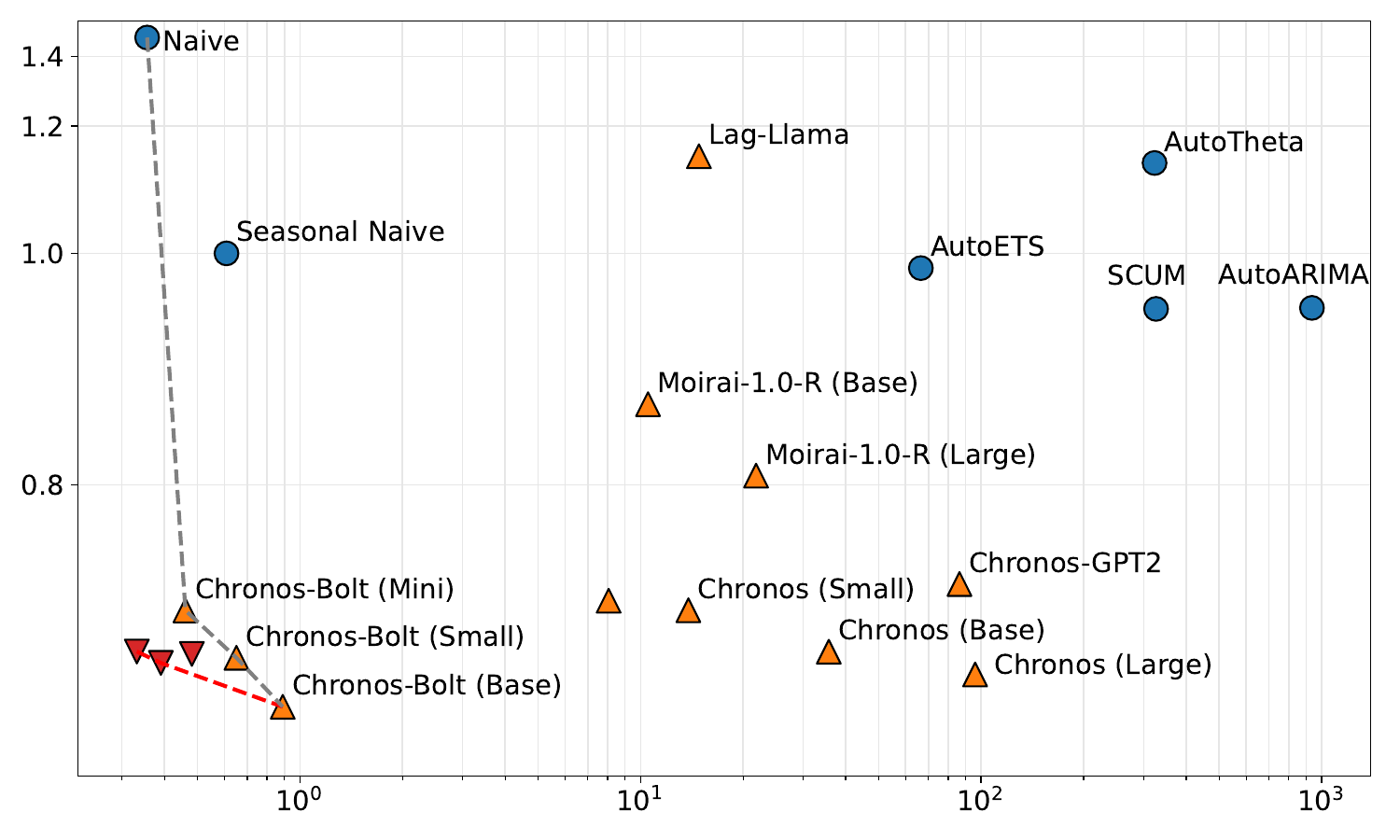}
            \put(-5,26){ \rotatebox{90}{\small MASE}}
            \put(38,-2){\small inference time}
        \end{overpic}
    \end{minipage}
    \hfill
    \begin{minipage}{0.47\textwidth}
        \begin{overpic}[width=1\textwidth]{./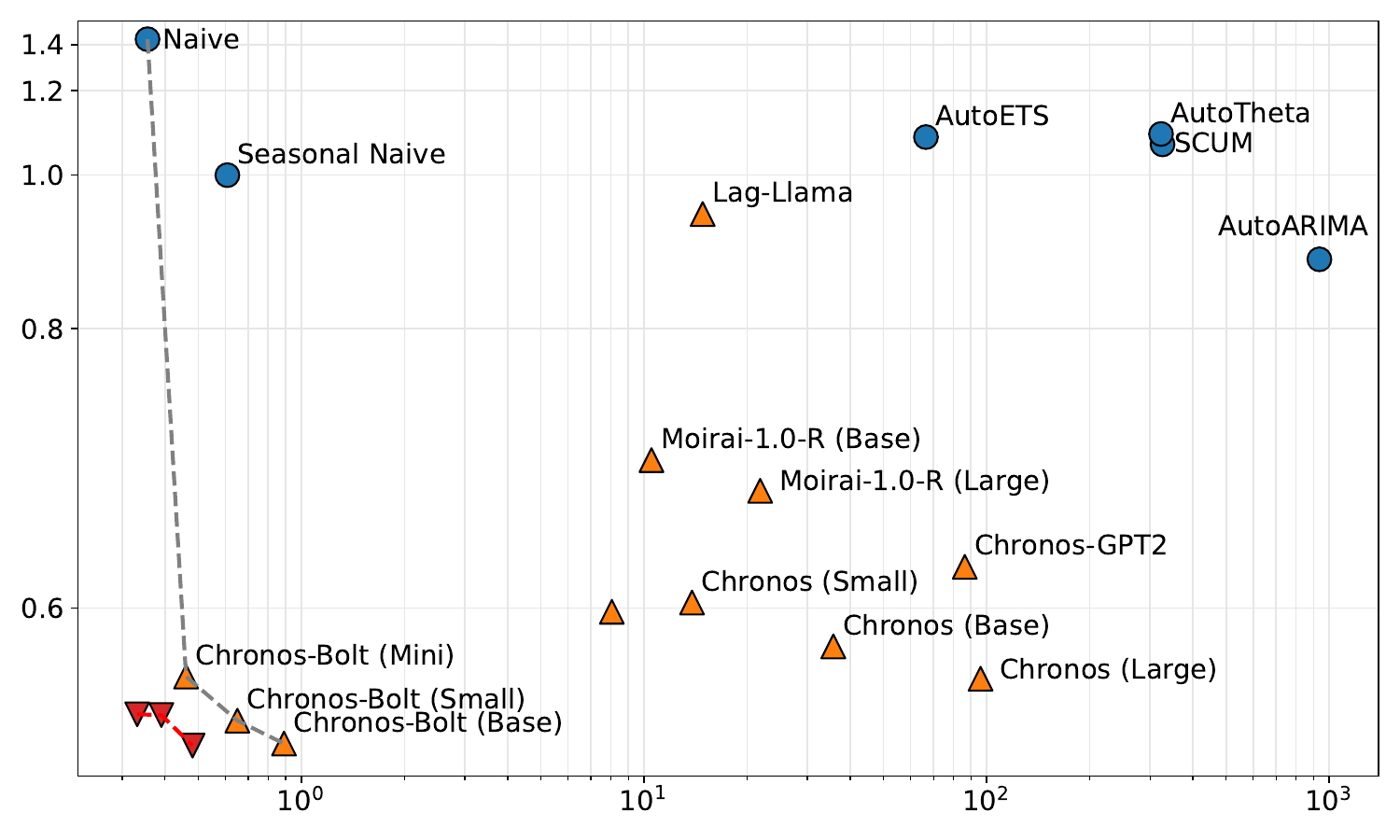}
            \put(-5,27.5){ \rotatebox{90}{\small WQL}}
            \put(38,-2){\small inference time}
        \end{overpic}
    \end{minipage}

    \label{tab:pretrain_compressed_bolt}
\end{table}

\textbf{Pretraining a Compressed Chronos-Bolt.} In addition to Chronos, we also pretrain a compressed Chronos-Bolt model. To show the promise of compression, we start with an already-small Chronos-Bolt (small) model, based on the T5 (small) architecture. The table and figures in~\Cref{tab:pretrain_compressed_bolt} can be read in the same way as those in~\Cref{tab:pretrain_compressed}. In particular, we see that for both MASE and WQL, the compressed Chronos-Bolt models completely form the Pareto frontier on our evaluation benchmark. That is, given any local method or pretrained foundation model, there exists a compressed Chronos-Bolt model that is simultaneously faster and more accurate.

\end{document}